\def\@fnsymbol#1{\ensuremath{\ifcase#1\or *\or \dagger\or \ddagger\or
  \mathsection\or \mathparagraph\or \|\or \diamond \or **\or \dagger\dagger
  \or \ddagger\ddagger \else\@ctrerr\fi}}
\newcommand{\printfnsymbol}[1]{%
  \textsuperscript{\@fnsymbol{#1}}%
}
\newtheorem{theorem}{Theorem}
\newtheorem{lemma}[theorem]{Lemma}
\crefname{condition}{Condition}{Conditions}
\crefname{assumption}{Assumption}{Assumptions}
\theoremstyle{definition}
\newcommand{\floor}[1]{\left\lfloor #1 \right\rfloor}
\newcommand{\ceil}[1]{\left\lceil #1 \right\rceil}
\newcommand{\abs}[1]{\left| #1 \right|}
\newcommand{\norm}[1]{\left\| #1 \right\|}
\newcommand\myeqii{\mathrel{\stackrel{\makebox[0pt]{\mbox{\normalfont\tiny (ii)}}}{=}}}
\newcommand\mylei{\mathrel{\stackrel{\makebox[0pt]{\mbox{\normalfont\tiny (i)}}}{\le}}}
\newcommand\myleii{\mathrel{\stackrel{\makebox[0pt]{\mbox{\normalfont\tiny (ii)}}}{\le}}}
\newcommand\myleiii{\mathrel{\stackrel{\makebox[0pt]{\mbox{\normalfont\tiny (iii)}}}{\le}}}
\newcommand\myleiv{\mathrel{\stackrel{\makebox[0pt]{\mbox{\normalfont\tiny (iv)}}}{\le}}}
\newcommand\mylev{\mathrel{\stackrel{\makebox[0pt]{\mbox{\normalfont\tiny (v)}}}{\le}}}
\newcommand\mygei{\mathrel{\stackrel{\makebox[0pt]{\mbox{\normalfont\tiny (i)}}}{\ge}}}
\newcommand\mygeii{\mathrel{\stackrel{\makebox[0pt]{\mbox{\normalfont\tiny (ii)}}}{\ge}}}
\newcommand\mygeiii{\mathrel{\stackrel{\makebox[0pt]{\mbox{\normalfont\tiny (iii)}}}{\ge}}}
\newcommand\mygeiv{\mathrel{\stackrel{\makebox[0pt]{\mbox{\normalfont\tiny (iv)}}}{\ge}}}
\newcommand{\wt}[1]{\widetilde{#1}}
\newcommand{\wh}[1]{\widehat{#1}}
\newcommand{\wc}[1]{\widecheck{#1}}
\newcommand{\ov}[1]{\overline{#1}}
\newcommand{\cA}{\mathcal{A}}
\newcommand{\cB}{\mathcal{B}}
\newcommand{\cE}{\mathcal{E}}
\newcommand{\cF}{\mathcal{F}}
\newcommand{\cG}{\mathcal{G}}
\newcommand{\cK}{\mathcal{K}}
\newcommand{\cL}{\mathcal{L}}
\newcommand{\cM}{\mathcal{M}}
\newcommand{\cR}{\mathcal{R}}
\newcommand{\cS}{\mathcal{S}}
\newcommand{\cV}{\mathcal{V}}
\newcommand{\cX}{\mathcal{X}}
\newcommand{\bbE}{\mathbb{E}}
\newcommand{\bbP}{\mathbb{P}}
\newcommand{\bbQ}{\mathbb{Q}}
\newcommand{\bbR}{\mathbb{R}}
\newcommand{\bbV}{\mathbb{V}}
\newcommand{\dif}{\mathop{}\!\mathrm{d}}
\newcommand{\II}{\mathbbm{1}} 
\newcommand{\KL}{\mathsf{KL}}
\newcommand{\kl}{\mathsf{kl}}
\newcommand{\poly}{\mathsf{poly}}
\newcommand{\polylog}{\mathsf{polylog}}
\newcommand{\Regret}{\mathsf{Regret}}
\newcommand{\supp}{\mathsf{supp}}
\newcommand{\boldone}{{\boldsymbol 1}}
\newcommand{\boldpi}{{\boldsymbol \pi}}
\newcommand{\Var}{\mathsf{Var}}
\newcommand{\VarR}{\mathsf{VarR}}
\newcommand{\bc}{\mathsf{bc}}
\newcommand{\BC}{\mathsf{BC}}
\newcommand{\SA}{\cS \times \cA}
\newcommand{\SAS}{\cS \times \cA \times \cS}
\newcommand{\sahk}{{s_h^k, a_h^k\xspace}}
\newcommand{\satk}{{s_t^k, a_t^k\xspace}}
\newcommand{\sumkh}{\sum_{k=1}^K \sum_{h=1}^H\xspace}
\newcommand{\multistepvar}{\Var_K^\Sigma}
\newcommand{\multistepvarepisode}{\Var_{(k)}^\Sigma}
\newcommand{\maxvar}{\Var^\star}
\newcommand{\tref}{{\mathsf{ref}}}
\newcommand{\REF}{{\mathsf{REF}}}
\newcommand{\sref}{\sigma^{\tref}}
\newcommand{\mref}{\mu^{\tref}}
\newcommand{\nref}{\nu^{\tref}}
\newcommand{\mc}{\wc{\mu}}
\newcommand{\nc}{\wc{\nu}}
\newcommand{\Vrl}{V_{h + 1}^{\tref, l_i}}
\newcommand{\Vrlc}{V_{h + 1}^{\tref, \wc{l}_i}}
\newcommand{\Brl}{B_{h + 1}^{\tref, l_i}}
\newcommand{\Vl}{V_{h + 1}^{l_i}}
\newcommand{\Vlc}{V_{h + 1}^{\wc{l}_i}}
\newcommand{\sli}{s_{h + 1}^{l_i}}
\newcommand{\slc}{s_{h + 1}^{\wc{l}_i}}
\newcommand{\is}{{i^\star}}
\newcommand{\euler}{{\texttt{Euler}}}
\newcommand{\mvp}{{\texttt{MVP}}}
\newcommand{\ucbadv}{{\texttt{UCB-Advantage}}}
\newcommand{\mvpalt}{{\texttt{MVP-V}}}
\newcommand{\ucbadvalt}{{\texttt{UCB-Advantage-V}}}
\newcommand{\qlucbb}{{\texttt{Q-learning (UCB-B)}}}
\newcommand{\qesadv}{{\texttt{Q-EarlySettled-Advantage}}}
\def\shownotes{1}
\newcommand{\todorz}[1]{}
\newcommand{\todorzout}[1]{}
\newcommand{\todossdout}[1]{}
\newcommand{\todossd}[1]{}
\newcommand{\todoydt}[1]{}
\newcommand{\todorz}[1]{\todo[color=blue!10, inline]{\small RZ: #1}}
\newcommand{\todorzout}[1]{\todo[color=blue!10]{\scriptsize RZ: #1}}
\newcommand{\todossdout}[1]{\todo[color=red!10]{\scriptsize SSD: #1}}
\newcommand{\todossd}[1]{\todo[color=red!10, inline]{\small SSD: #1}}
\newcommand{\todoydt}[1]{\todo[color=red!10, inline]{\small Yuandong: #1}}
\def\icml{1}
\icmltitlerunning{Near-Optimal Variance-Aware Bounds for Tabular MDPs}
\begin{document}

\twocolumn[
\icmltitle{Sharp Variance-Dependent Bounds in Reinforcement Learning: Best of Both Worlds in Stochastic and Deterministic Environments}



\icmlsetsymbol{equal}{*}

\begin{icmlauthorlist}
\icmlauthor{Runlong Zhou}{uwcse}
\icmlauthor{Zihan Zhang}{}
\icmlauthor{Simon S. Du}{uwcse}
\end{icmlauthorlist}

\icmlaffiliation{uwcse}{Paul G. Allen School of Computer Science \& Engineering, University of Washington, Seattle, WA, USA}

\icmlcorrespondingauthor{Simon S. Du}{ssdu@cs.washington.edu}

\icmlkeywords{variance, reinforcement learning, markov decision process}

\vskip 0.3in
]



\printAffiliationsAndNotice{}  

\begin{abstract}
We study variance-dependent regret bounds for Markov decision processes (MDPs).
Algorithms with variance-dependent regret guarantees can automatically exploit environments with low variance (e.g., enjoying constant regret on deterministic MDPs).
The existing algorithms are either variance-independent or suboptimal.
We first propose two new environment norms to characterize the fine-grained variance properties of the environment.
For model-based methods, we design a variant of the \mvp~algorithm~\citep{paper:mvp}.
We apply new analysis techniques to demonstrate that this algorithm enjoys variance-dependent bounds with respect to the norms we propose.
In particular, this bound is \emph{simultaneously minimax optimal for both stochastic and deterministic MDPs}, the first result of its kind.
We further initiate the study on model-free algorithms with variance-dependent regret bounds by designing a reference-function-based algorithm with a novel \emph{capped-doubling reference update schedule}. 
Lastly, we also provide lower bounds to complement our upper bounds.

\end{abstract}

{

\section{Introduction} \label{sec:intro}

We consider episodic reinforcement learning (RL) on tabular Markov Decision Processes (MDPs).
Existing algorithms can be categorized into two classes: model-based methods whose space complexity scales quadratically with the number of states \citep{paper:ucrl2,paper:optimistic_posterior_sampling,paper:ucbvi,paper:ubev,paper:orlc,paper:euler,paper:mvp} and model-free methods whose space complexity scales linearly with the number of states \citep{paper:q_learning,paper:q_learning_low_switch_cost,paper:ucbadv,paper:model_free_breaking}.

The MDPs in practice often enjoy benign structures, so problem-dependent regret bounds are of great interest~\citep{paper:euler}.
RL algorithms often perform far better on these MDPs than what their worst-case guarantees would suggest.
Motivated by this observation, we want to systematically study algorithms with regrets depending on quantities that characterizes the hardness of MDPs.
Ideally, such algorithms should \emph{automatically exploit the MDP instance without the prior knowledge of problem-dependent quantities}.
As a motivating example, for time-homogeneous MDPs with total reward bounded by $1$, the minimax regret bound for deterministic MDPs is $O (S A)$ where $S$ and $A$ are number of states and actions, respectively and the worst-case minimax optimal regret bound for stochastic MDPs is $\wt{O}\left(\sqrt{SAK}\right)$ where $K$ is the number of episodes.
Many problems can be formulated as deterministc MDPs, such as shortest path (maze, real world navigation), combinatorial optimization, Atari games \citep{mnih2013playing} and many games (mountain car, lunar lander, robotics, etc.) in OpenAI Gym \citep{brockman2016openai}.
Deterministic systems can also approximate stochastic systems well (see Section\,2 and 6 in \citet{book:dp_opt_control}).
We want an algorithm designed for generic stochastic MDPs with worst-case minimax optimal regret bound while enjoying the $O(SA)$ bound when the MDP is deterministic.

\citet{paper:euler} is a pioneer work which provides a model-based algorithm whose regret scales with variance-depedent quantities.
They defined a quantity, $\bbQ^\star$, named \emph{the maximum per-step conditional variance} to characterize the randomness of the MDP instance, and showed a regret bound of $\wt{O} (\sqrt{H \bbQ^\star \cdot S A K} + H^{5/2} S^2 A)$, where $H$ is the planning horizon.
This bound is still not satisfactory because:
\ding{172} There exist MDPs with $\bbQ^\star = \Omega (1)$, so the regret reduces to $\wt{O} (\sqrt{H S A K})$ which does not match the minimax optimal bound $\wt{O} (\sqrt{S A K})$.
\ding{173} For deterministic MDPs ($\bbQ^\star = 0$), the regret reduces to $\wt{O} (H^{5/2} S^2 A)$, which does not match the optimal $O (S A)$ bound.

\subsection{Contributions}
This paper makes the following contributions which significantly advance our understanding of problem-dependent bounds in reinforcement learning.


$\bullet$\quad First, We introduce \emph{the total multi-step conditional variance}, $\multistepvar$ and \emph{the maximum policy-value variance}, $\maxvar$, to provide fine-grained characterizations of the variance in the MDP  (see Section~\ref{sec:var} for the formal definitions).
Importantly, regret bounds that depend on these quantities will reduce to the minimax optimal bound in the worst case whereas the existing notion $H \bbQ^\star$ cannot.

$\bullet$\quad Second, for model-based methods, we identify the obstacles preventing the current state-of-the-art minimax optimal algorithm, \mvp~\cite{paper:mvp}, from being variance-dependent. 
We make necessary improvements and introduce a truncation method to bound the total variance.
We show the regret bound of the improved algorithm, \mvpalt, scales with $\maxvar$ or $\multistepvar$.
In particular, these bounds imply that, \mvpalt~is \emph{minimax optimal for both the classes of stochastic and deterministic MDPs}.
To our knowledge, this is first result of such kind. 
See Table~\ref{tab:mb} for comparions between model-based methods.

$\bullet$\quad Third, we initiate the study of model-free algorithms with variance-dependent regrets.
We explain why existings model-free algorithms cannot be variance-dependent.
We futher propose a new model-free algorithm, \ucbadvalt, which relies on a \emph{a capped-doubling manner of updates for reference values}. We further utilize a novel analysis technique which \emph{bounds value gaps directly from the existing uniform convergence bound} to give the first variance-dependent bound for model-free algorithms.
Importantly, this bound reduces to the minimax optimal bound for the worst-case MDPs.
See Table~\ref{tab:mf} for comparisons between model-free algorithms.

$\bullet$\quad Lastly, we prove minimax regret lower bounds for the class of MDPs with bounded variances.
We show that the main order terms of our regret upper bounds match these lower bounds, so our proposed algorithms are \emph{minimax optimal for the class of variance-bounded MDPs}.

\subsection{Technical Overview}
For model-based algorithms, existing state-of-the-art work \citep{paper:mvp} fails to be variance-dependent.
It is hard to bound the total variance by its expectation using martingale concentration inequalities directly, while avoiding an $H$-dependency.
This is because the total variance within an episode can be as large as $\Omega(H)$.
We introduce a novel analysis technique which \emph{truncates} the total variance of each episode to a constant and apply martingale concentration inequalities on this sequence, and show that with high probability there is no truncation.
We also apply a more refined concentration inequality to the transition model to have a dependency on the maximum support instead of the size of the state space. This step is crucial in obtaining the tight bound for deterministic MDPs.

For the model-free algorithm, existing work \citep{paper:ucbadv} upper-bounds all the four bias terms in their Equation\,(13) by variance-independent main order terms.
We identify the problem incurred by the large bias in reference values, and replace the update with \emph{a capped-doubling manner}.
Since too frequent updates discard past data very often, this method balances between the summation of gaps of value functions and the waste of data.
We \emph{integrate directly over the error} between the estimated value and the optimal value to bound the total squared gaps between them, whereas \citet{paper:ucbadv} bound them with a coarse binary gap of either $H$ or the final gap.
Combined with many other finer-grained analyses throughout the proof, we can finally remove all the variance-independent main order terms except for the total variance.

\subsection{Paper Overview}
The paper is organized as follows.
We first list basic concepts of MDPs in \Cref{sec:preliminaries}, then define variance quantities in \Cref{sec:var}.
Our main results then come in three sections: \Cref{sec:mb_results,sec:mf_results} show the algorithms, theorems, corollaries and proof sketches of our model-based and model-free methods, respectively.
\Cref{sec:lb_results} shows our lower bounds for the class of variance-bounded MDPs.

\section{Related Works} \label{sec:rel}

\paragraph{Minimax optimal regret bounds.}
Algorithms for regret minimization can be categorized into two classes: model-based and model-free.
Being model-free means the space complexity is $O (H S A)$, prohibiting the estimation of the whole transition model $P_h (s' | s, a)$.
For model-based methods, there are previous work \citep{paper:mvp,paper:horizon_free,wang2020long} achieving a property called \emph{horizon-free}, which allows only logarithmic dependency on $H$ for regrets.
As explained in \citet{jiang2018open}, in many scenarios with a long planning horizon, the interesting regime is $K \ll H$.
This underscores the importance of being horizon-free, because for $H$-dependent bounds, only when $K \gg H$ they become sub-linear in $K$.
Being horizon-free is challenging, because it requires utilizing transition data for the same state-action pair from different steps and handling a spike in rewards.
There are many works other than those we cite in \Cref{sec:intro} giving nearly minimax optimal  bounds: \citet{paper:regal,osband2013more,osband2017posterior,fruit2018near,paper:variance_aware_kl_ucrl,paper:non_asymptotic_gap,russo2019worst,zhang2019regret,paper:mdp_optimism,paper:random_explore,paper:opt_mb_rl}.
We compare our results with the state-of-the-art in \Cref{tab:mb} (model-based) and \Cref{tab:mf} (model-free).

\colorlet{shadecolor}{gray!40}
\begin{table*}[t!] 
    \centering
    \small
    \resizebox{0.99\linewidth}{!}{%
        \renewcommand{\arraystretch}{1.5}
        \begin{tabular}{|c|c|c|c|c|c|}
            \hline 
            \textbf{Algorithm} &  \textbf{Regret} & \scriptsize \makecell{\textbf{Variance-} \\ \textbf{Dependent}} & \scriptsize \makecell{\textbf{Stochastic-} \\ \textbf{Optimal}} & \scriptsize \makecell{\textbf{Deterministic-} \\ \textbf{Optimal}} & \scriptsize \makecell{\textbf{Horizon-} \\ \textbf{Free}}\\
            
            \hhline{|=|=|=|=|=|=|}
            \multirow{2}{*}{\makecell{\euler \\ \citet{paper:euler}}}  &  $\wt{O} (\sqrt{H \bbQ^\star \cdot S A K} + H^{5/2} S^2 A) $ &  \textbf{Yes} & No & No & No \\
            \hhline{~-|-|-|-|-}
            &$\wt{O} (\sqrt{S A K} + H^{5/2} S^2 A) $ & No & \textbf{Yes} & No & No \\
            
            \hline
            \makecell{\mvp \\ \citet{paper:mvp}} &  $ \wt{O} (\sqrt{S A K} + S^2 A) $ & No & \textbf{Yes} & No & \textbf{Yes} \\

            \hline
            \rowcolor{shadecolor} \Gape[0pt][2pt]{\makecell{\mvpalt \\ This work}} & $\wt{O} (\sqrt{\min\{\multistepvar, \maxvar K \} S A} + \Gamma S A)$ & \textbf{Yes} & \textbf{Yes} & \textbf{Yes} & \textbf{Yes} \\
            \hline
        \end{tabular}
    }
    \caption{
    Comparisons between model-based algorithms for time-homogeneous MDPs with total reward bounded by $1$.
    $\wt{O}$ hides logarithm factors.
    $S$, $A$, $\Gamma$, $H$ and $K$ are number of states, actions, maximum support of the transition model, planning horizon and interaction episodes.
    $\bbQ^\star$, $\multistepvar$ and $\maxvar$ are variance notations in \Cref{sec:var}.
    $\bbQ^\star$ and $\multistepvar$ are upper bounded by $1$ in the worst case and become $0$ when the MDP is deterministic.
    An ``Yes'' in each column means:
    Variance-Dependent: the regret has a main order term scaling with any variance notation.
    Stochastic-Optimal: the regret has a main order term of $\wt{O} (\sqrt{S A K})$ which matches the minimax lower bound.
    Deterministic-Optimal: the regret is $\wt{O} (S A)$ on deterministic MDPs (with variance equal to $0$).
    Horizon-Free: the regret has only logarithmic dependency on $H$.
    }
    \label{tab:mb}
\end{table*}

\colorlet{shadecolor}{gray!40}
\begin{table*}[t!] 
    \centering
    \small
    \resizebox{0.7\linewidth}{!}{%
        \renewcommand{\arraystretch}{1.5}
        \begin{tabular}{|c|c|c|c|}
            \hline 
            \textbf{Algorithm} &  \textbf{Regret} & \scriptsize \makecell{\textbf{Variance-} \\ \textbf{Dependent}} & \scriptsize \makecell{\textbf{Stochastic-} \\ \textbf{Optimal}} \\
            
            \hhline{|=|=|=|=|}
            \makecell{\qlucbb \\ \citet{paper:q_learning}} &  $ \wt{O} (\sqrt{H^4 S A K} + H^{9/2} S^{3/2} A^{3/2}) $  & No & No \\

            \hline
            \makecell{\ucbadv \\ \citet{paper:ucbadv}} &  $ \wt{O} (\sqrt{H^3 S A K} + \sqrt[4]{H^{33} S^8 A^6 K}) $  & No & \textbf{Yes} \\

            \hline
            \makecell{\texttt{Q-EarlySettled-} \\ \texttt{Advantage} \\ \citet{paper:model_free_breaking}} &  $ \wt{O} (\sqrt{H^3 S A K} + H^6 S A) $  & No & \textbf{Yes} \\

            \hline
            \rowcolor{shadecolor} \Gape[0pt][2pt]{\makecell{\ucbadvalt \\ This work}} & \Gape[0pt][2pt]{\makecell{$\wt{O} (\sqrt{\min\{ \multistepvar, \maxvar K\} H S A} $ \\ $+ \sqrt[4]{H^{15} S^5 A^3 K})$}}  & \textbf{Yes} & \textbf{Yes} \\
            
            \hline
        \end{tabular}
    }
    \caption{
    Comparison between model-free algorithms for time-inhomogeneous MDPs with every reward bounded by $1$.
    An ``Yes'' in each column means: 
    Variance-Dependent: the bound scales with the variance term that characterizes the randomness of the environment;
    Stochastic-Optimal: in the worst-case, the regret's dominating term becomes $\wt{O} (\sqrt{H^3 S A K})$ which matches the minimax lower bound.
    }
    \label{tab:mf}
\vspace{-0.5cm}
\end{table*}

\paragraph{Variance-dependent results.}
\citet{paper:variance_aware_kl_ucrl} provides a problem-dependent regret bound that scales with the variance of the next step value functions under strong assumptions on ergodicity of the MDP.
Namely, they define $\boldsymbol{V}_{s, a}^\star$ for each $(s, a)$ pair and derives a regret of $\tilde{O} (\sqrt{S \sum_{s, a} \boldsymbol{V}_{s, a}^\star T})$ under the infinite horizon setting.

\citet{paper:non_asymptotic_gap} combines gap-dependent regret with variances.
The standard notation $\texttt{gap} (s, a)$ is the gap between the optimal value function and the optimal $Q$-function, and $\texttt{gap}_{\min}$ is the minimum non-zero gap.
Let $\texttt{Var}_{h, s, a}^\star$ be the variance of optimal value function at $(h, s, a)$ triple, their regret approximately scales as
\begin{align*}
    \tilde{O} \left( \sum_{s, a} \frac{H \max_h \texttt{Var}_{h, s, a}^\star}{\max\{\texttt{gap} (s, a),\ \texttt{gap}_{\min}\} } \log (K) \right).
\end{align*}
Variance-aware bounds exist in bandits~\citep{kim2021improved,zhang2021improved,zhou2021nearly,zhao2023variance,zhao2022bandit}.
We notice two concurrent works: \citet{zhao2023variance} studies variance-dependent regret upper bounds for linear bandits and linear mixture MDPs, and \citet{li2023variance} studies linear bandits and linear MDPs.
They both define the same variance as $\multistepvar$ (one of the two quantities also proposed by us under the tabular setting).
More recent work generalized variance-aware bound from MDPs to latent MDPs~\citep{paper:horizon_free_lmdp}.

\paragraph{Other problem-dependent results.}
Most problem-dependent results prior to \citet{paper:euler} focus on the infinite-horizon setting.
Some depend on the range of value functions \citep{paper:regal,paper:scal}.
\citet{paper:distribution_norm} introduces a hardness measure called \emph{distribution norm}.
There are gap-dependent results for multi-armed bandits and RL \citep{paper:action_elimination,paper:ucrl2,paper:gap_dependent_multi_step_bootstrap,paper:q_learning_log_regret}.

\citet{paper:reward_free_exploration} shows that with a slight modification, the algorithm in \citet{paper:euler} can have a \emph{first-order regret}, with the main order term depending on the optimal value function.
\citet{paper:first_order_lfa} provides a first-order regret for linear MDPs.
When the total reward is bounded by $1$ almost surely, for any policy its variance is not larger than this value.
This means a first-order dependency is weaker than a variance-dependency.

\section{Preliminaries} \label{sec:preliminaries}

\paragraph{Notations.}
For any event $\cE$, we use $\II [\cE]$ to denote the indicator function.
For any random variable $X$, we use $\bbV (X)$ to denote its variance.
For any set $\cX$, we use $\Delta(\cX)$ to denote the probability simplex over $\cX$.
For any positive integer $n$, we denote $[n] := \{1, 2, \ldots, n\}$.
For any probability distribution $P$, we use $\supp (P) = \sum_x \II [P(x) > 0]$ to denote the size of its support.
Suppose $x$ and $y$ are $n$-dimensional vectors, we denote $x y := \sum_{i = 1}^n x_i y_i$ and $x^q := (x_1^q, x_2^q, \ldots, x_n^q)$ for any number $q$.
If $x \in \Delta ([n])$, we use $\bbV (x, y) = \sum_i x_i (y_i - x y)^2 = x y^2 - (x y)^2$ to denote the variance of $y$ under $x$.
We use $\boldone_k$ to denote a vector with all $0$ but an only $1$ on the $k$-th position.

\paragraph{Finite-horizon MDPs.}
A finite-horizon MDP can be described by a tuple $M = (\cS, \cA, P, R, H)$.
$\cS$ is the finite state space with size $S$ and $\cA$ is the finite action space with size $A$.
For any $h \in [H]$, $P_h: \SA \to \Delta (\cS)$ is the transition function and $R_h: \SA \to \Delta ([0, 1])$ is the reward distribution with mean $r_h: \SA \to [0, 1]$.
$H$ is the planning horizon, i.e., episode length.
We denote $\Gamma := \max_{h, s, a} \supp ( P_h (\cdot | s, a) )$ as the maximum support of the transition function, and $P_{s, a, h} := P_h (\cdot | s, a)$.

\paragraph{Conditions for MDPs.}
We have two conditions more general than the ordinary setting.
As explained below them, getting tight regret bounds are harder when they are met.


\begin{restatable} {condition}{conBoundedReward} \label{asp:bounded_reward}
For any policy $\pi$, the total reward in a single episode is upper-bounded by $1$ almost surely.
\end{restatable}

For those MDPs not satisfying \Cref{asp:bounded_reward}, we can normalize all the rewards by $1 / H$.
Such a conversion usually multiplies a factor of $1 / H$ to the regret, but cannot take into account a spike in rewards, e.g., some $r_h (s, a) = 1$. 

\begin{restatable} {condition}{conHomogeneous} \label{asp:homogeneous}
The MDP is time-homogeneous.
Namely, there exist $P: \SA \to \Delta(\cS)$, $R: \SA \to \Delta([0, 1])$ and $r: \SA \to [0, 1]$ such that for any $(s, a) \in \SA$, $P_h (\cdot | s, a) = P (\cdot | s, a)$, $R_h (s, a) = R (s, a)$ and $r_h (s, a) = r (s, a)$ for any $h \in [H]$.
\end{restatable}

For simplicity, we denote $P_{s, a} := P (\cdot | s, a)$ and $P_{s, a, s'} := P (s' | s, a)$.
Any time-inhomogeneous MDP can be instantiated by a time-homogeneous one to satisfy \Cref{asp:homogeneous}.
Let a mega state space $\cS = \cup_{h = 1}^H \cS_h$, where each $\cS_h$ corresponds to the state space of the time-inhomogeneous MDP.
For any $(h, s, a)$, we construct $P (s_{h + 1}' | s_h, a) = P_h (s' | s, a)$ and $R (s_h, a) = R_h (s, a)$, where $s_h$ is the corresponding state of $s$ in $\cS_h$.
$S$ is multiplied by $H$ while $\Gamma$ remains the same, and the regret changes accordingly.
This condition underscores the algorithm's ability to use information of the same state-action pair from different steps.

We will introduce quantities in \Cref{sec:var} to quantify determinism, but a fully-deterministic MDP is very worth studying because the regret lower bound is the well-known $\Omega (S A)$ (under \Cref{asp:bounded_reward,asp:homogeneous}).
Thus, we care about whether the algorithms can have a constant regret (up to logarithm factors) under the assumption of determinism.

\begin{restatable} {assumption}{aspDeterm} \label{asp:determ}
The MDP is deterministic.
Namely, for any $(h, s, a) \in [H] \times \SA$, $R_h (s, a)$ and $P_h (\cdot | s, a)$ map to a single real number and a single state respectively.
\end{restatable}

\paragraph{Policies.}
A history-independent deterministic policy $\pi$ chooses an action based on the current state and time step.
Formally, $\pi = \{ \pi_h \}_{h \in [H]}$ where $\pi_h : \cS \to \cA$ maps a state to an action.
We use $\Pi$ to denote the set of all such policies.

\paragraph{Episodic RL on MDPs.}
Upon choosing action $a$ at state $s$ when it is the $h$-th step in an episode, the agent observes a reward $r \sim R_h (s, a)$ and the next state $s' \sim P_h (\cdot | s, a)$.
When $h = H$, the episode ends after this observation.
Thus, a policy $\pi$ induces a (random) trajectory $(\{s_h, a_h, r_h\}_{h \in [H]}, s_{H + 1})$ where $s_1$ is exogenously generated, $a_h = \pi_h (s_h), r_h \sim R_h (s_h, a_h)$ and $s_{h + 1} \sim P_h (\cdot | s_h, a_h)$ for $h \in [H]$.
The episodic RL on MDPs proceeds in a total of $K$ episodes.
When one episode ends, a new initial state $s_1$ is generated.
The agent should (adaptively) choose a policy $\pi^k$ for the $k$-th episode, put it into action and cannot change it within an episode.

\paragraph{Value functions and $Q$-functions.}
Given a policy $\pi$, we define its value function and $Q$-function as
\begin{gather*}
    V_h^\pi (s) := \bbE_\pi \left[\left. \sum_{t = h}^H r_t\ \right|\ s_h = s \right], \\
    Q_h^\pi (s, a) := \bbE_\pi \left[\left. \sum_{t = h}^H r_t\ \right|\ (s_h, a_h) = (s, a) \right].
\end{gather*}
It is easy to verify that $Q_h^\pi (s, a) = r_h (s, a) + P_{s, a, h} V_{h + 1}^\pi$.

\paragraph{Performance measure.}
The goal of episodic RL on MDPs is to find the policy which maximizes the total reward collected in an episode, regardless of the initial state.
Given $M$, such a goal can be achieved using dynamic programming.
Given this, we denote $V^\star := V^{\pi^\star}$ and $Q^\star := Q^{\pi^\star}$.
We use cumulative regret as a performance measure:
\begin{align*}
    \Regret(K) := \sum_{k = 1}^K (V_1^\star (s_1^k) - V_1^{\pi^k} (s_1^k)).
\end{align*}

\section{Variance Quantities for MDPs} \label{sec:var}

We use the notion of variance to quantify the degree of determinism of MDPs.
The first is called \emph{the maximum per-step conditional variance} \citep{paper:euler}, which is only relevant to the optimal value function.

\begin{restatable} {definition}{defQStar} \label{def:Q_star}
The maximum per-step conditional variance for a particular MDP is defined as:
\begin{align*}
    \bbQ^\star := \max_{h, s, a} \{ \bbV (R_h (s, a)) + \bbV (P_{s, a, h}, V_{h + 1}^\star) \}.
\end{align*}
\end{restatable}

\citet{paper:euler} directly use $H \bbQ^\star$ to upper-bound the total per-step conditional variances in an episode, a quantity which can be upper-bounded by a constant (see \Cref{lem:M6,lem:mf_sum_opt_var}).
So when  $\bbQ^\star \ge \Omega (H)$ (or $\Omega (1 / H)$ under \Cref{asp:bounded_reward}), $H \bbQ^\star$ is not tight.
In light of this, we define \emph{the total multi-step conditional variance} as a better notation in place of $H \bbQ^\star$.
This quantity is also proposed in concurrent works \citep{zhao2023variance,li2023variance}.

\begin{restatable} {definition}{defMultistepVar} \label{def:multistep_var}
The total multi-step conditional variance for a trajectory $\tau = \{s_h, a_h\}_{h \in [H]}$ is defined as:
\begin{align*}
    \Var_\tau^\Sigma := \sum_{h = 1}^H (\bbV (R_h (s_h, a_h)) + \bbV (P_{s_h, a_h, h}, V_{h + 1}^\star)).
\end{align*}
During the learning process, let the trajectory of the $k$-th episode be $\tau^k$, then we denote $\multistepvarepisode := \Var_{\tau^k}^\Sigma$, and
$\multistepvar := \sum_{k = 1}^K \multistepvarepisode$.
\end{restatable}


We introduce another type of variance, called \emph{the maximum policy-value variance}, which is novel in the literature.

\begin{restatable} {definition}{defVStar} \label{def:V_star}
For any policy $\pi \in \Pi$, its maximum value variance is defined as $\Var^\pi := \max_{s \in \cS} \Var_1^\pi (s)$, where
\begin{align*}
    &\Var_1^\pi (s) := \\
    &\bbE_\pi \left[ \left. \sum_{h = 1}^H \left( \bbV (R_h (s_h, a_h)) + \bbV (P_{s_h, a_h, h}, V_{h + 1}^\pi) \right) \ \right|\ s_1 = s\right].
\end{align*}
The maximum policy-value variance for a particular MDP is defined as:
\begin{align*}
    \maxvar := \max_{\pi \in \Pi} \Var^\pi.
\end{align*}
\end{restatable}

$\Var_1^\pi (s)$ is the variance of total reward of $\pi$ starting from $s$, and the justification can be found in \Cref{sec:justify_V_star}.

Under \Cref{asp:bounded_reward}, by \Cref{lem:bhatia_davis} we know that $\Var_1^\pi (s) \le V_1^\pi (s) \le V_1^\star (s)$.
So $\maxvar \le V_1^\star (s)$.
This means a variance-dependent regret is better than a first-order regret.

\subsection{Comparing $\multistepvarepisode$ and $\maxvar$} \label{sec:compare_var}

We use this subsection to demonstrate that a small $\multistepvarepisode$ does not imply a small $\maxvar$, and vice versa.

Deterministic MDPs have $\multistepvarepisode = \maxvar = 0$.
Trivially, $\maxvar = 0 \implies \multistepvarepisode = 0$, while the reverse is not ture.

\paragraph{When $\multistepvarepisode = 0 < \maxvar$.}
Consider the following \emph{time-homogeneous} MDP with 
horizon $H$:
For each state $s$ there is a good action $a_1$ with a \emph{deterministic} reward $r (s, a_1) = 1 / H$, and all other actions $a'$ have a \emph{deterministic} reward $r (s, a') = 0$.
For any state-action pair $(s, a)$, the transition is identically $P_{s, a, s'} = 1 / S$.

The optimal policy always chooses $a_1$ at any state $s$, so for any $s$ and $h$, $V_h^\star (s) = (H - h + 1) / H$.
For any $(h, s, a)$, 
\begin{align*}
    \bbV (R (s, a)) + \bbV (P_{s, a}, V_{h + 1}^\star) = 0,
\end{align*}
which means $\multistepvarepisode = 0$.
However, let $\pi$ be a policy with $\pi_{H} (s_1) = a'$ for a certain state $s_1$, and $\pi_h (s) = a_1$ for any other $h$ or $s$.
Then $\pi$ yields cumulative rewards of $1$ and $1 - 1 / H$, each with non-zero probabilities.
So $\maxvar > 0$.

This example shows that deterministic MDPs are not the only MDPs satisfying $\multistepvarepisode = 0$, and that $\multistepvarepisode = 0$ does not imply $\maxvar = 0$.

\ifnum \icml=1

\begin{SCfigure}
    \includegraphics[scale=0.6]{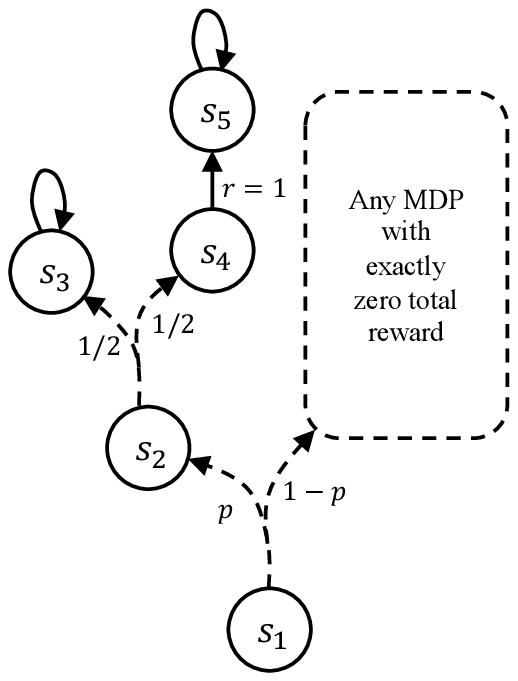}
    \caption{Example of $\maxvar$ being arbitrarily smaller than $\multistepvarepisode$.
    Dashed arrows represent probabilistic transitions and solid arrows represent deterministic ones.
    The only reward comes at state $s_4$ and on choosing any action.}
    \label{fig:small_vstar}
\end{SCfigure}

\else 

\begin{figure}
    \centering
    \includegraphics[scale=0.8]{fig/small_vstar.eps}
    \caption{Example of $\maxvar$ being arbitrarily smaller than $\multistepvarepisode$.
    Dashed arrows represent probabilistic transitions and solid arrows represent deterministic ones.
    The only reward comes at state $s_4$ and on choosing any action.}
    \label{fig:small_vstar}
\end{figure} 

\fi

\paragraph{When $\multistepvarepisode = 1 / 4 > \maxvar$.}
Consider the \emph{time-homogeneous} MDP in \Cref{fig:small_vstar}:
$P_{s_1, a, s_2} = p$ for any $a \in \cA$, and the rest probability is into an MDP with \emph{no reward at all}.
$s_2$ is a state which we want to have a high $\multistepvarepisode$: $P_{s_2, a, s_3} = P_{s_2, a, s_4} = 1 / 2$, where $s_3$ and $s_4$ are states with value $0$ and $1$ respectively.
Thus at $s_2, a, h = 3$,
\begin{align*}
    \multistepvarepisode \ge \bbV( R (s_2, a) ) + \bbV ( P_{s_2, a}, V_3^\star ) = \frac{1}{4}.
\end{align*}
We also have that for any policy $\pi$, $V_1^\pi (s_1) = p / 2$, so by \Cref{lem:bhatia_davis}, $\maxvar \le p / 2$.
Taking $p$ arbitrarily small gives an arbitrarily large gap between $\multistepvarepisode$ and $\maxvar$.

This example shows that a small $\maxvar$ does not imply a small $\multistepvarepisode$, so using only $\multistepvarepisode$ is insufficient.

\section{Results of the Model-Based Algorithm} \label{sec:mb_results}

We propose \mvpalt~(\Cref{alg:mvp_alt}, where ``V'' stands for ``Variance-dependent''), a model-based algorithm with a variance-dependent regret bound.
Based on \mvp~\citep{paper:mvp} which is minimax optimal under \Cref{asp:bounded_reward,asp:homogeneous}, we make necessary alterations to make the regret variance-dependent.

\paragraph{Common notations.}
These are notations shared with our model-free algorithm.
Let $\sahk$ and $r_h^k$ denote the state, action and reward at the $h$-th step of the $k$-th episode.
Let $V_h^k$ and $Q_h^k$ denote $V_h$ and $Q_h$ at the beginning of the $k$-th episode.
$\wt{O}$ hides $\polylog (H, S, A, K, 1 / \delta)$ factors.

\ifnum \icml=1
\begin{algorithm}[t!]
	\caption{\mvpalt \label{alg:mvp_alt}}
	\begin{algorithmic}[1] \small
		\STATE{ \textbf{Input and initialize:} Logarithm term $\iota$; Trigger set $\mathcal{L} \gets \{ 2^{i-1}\ |\ 2^{i}\leq KH, i=1,2,\ldots \}$.}
        \STATE{ Set all $N (s, a)$, $n (s, a)$, $\theta (s, a)$, $\phi (s, a)$, $N (s, a, s')$, $\wh{P}_{s,a,s'}$ to be $0$ and all $Q_h (s, a)$, $V_h (s)$ to be $1$.}
		\FOR {$k=1,2,\ldots, K$}
          \STATE{ Observe $s_1^k$.}
		\FOR {$h=1,2,\ldots, H$}
		\STATE{ Take action $ a_h^k= \arg\max_{a}Q_h(s_h^k,a)$;}
		\STATE{ Receive reward $r_h^k$ and observe $s_{h+1}^k$;}
		\STATE{ Set $(s,a,s',r)\gets (s_h^k,a_h^k,s_{h+1}^k,r_h^k)$;}
		\STATE{ Set $N(s,a) \stackrel{+}{\gets} 1$, $\theta(s,a)\stackrel{+}{\gets} r$, $\phi(s, a)\stackrel{+}{\gets} r^2$, $N(s,a,s') \stackrel{+}{\gets} 1$.}
		\IF {$N(s,a)\in \mathcal{L}$}
		\STATE{ Set $\wh{r}(s,a)\gets \theta(s,a) / N(s,a)$;}
        \STATE{ Set $\wh{\VarR}(s,a)\gets \phi(s,a) / N(s,a) - \wh{r}(s, a)^2$;}
		\STATE Set $\wh{P}_{s,a,\wt{s}} \gets  N(s,a,\wt{s}) /N(s,a)$ for all $\wt{s} \in \cS$;
		\STATE{ Set $n(s,a)\gets N(s,a)$;}
		\STATE{ Set TRIGGERED = TRUE.}
		\ENDIF
		\ENDFOR
		\IF {TRIGGERED}
		\FOR{$h=H,H-1,...,1$ and $(s,a)\in \cS \times \cA $}
		\STATE 	 {
			Set 
                \vspace{-0.5cm}
			\begin{align*} 
			&b_h(s,a)\gets 4 \sqrt{\frac{   \mathbb{ V}(\wh{P}_{s,a} ,V_{h+1}) \iota  }{ \max\{n(s,a),1 \} }} \\
                &\quad +2 \sqrt{ \frac{ \wh{\VarR}(s,a) \iota }{\max\{n(s,a),1 \} } } +\frac{21 \iota}{ \max\{n(s,a) ,1\}  }; \\
			& Q_h(s,a)\gets \min\{  1, \\
                &\quad \wh{r}(s,a)+\wh{P}_{s,a} V_{h+1} +b_h(s,a)\}; \\
			& V_{h}(s) \gets \max_{a}Q_h(s,a).
			\end{align*}
                \vspace{-0.7cm}
		}
		\ENDFOR
		\STATE{ Set TRIGGERED = FALSE.}
		\ENDIF
		\ENDFOR
	\end{algorithmic}
\end{algorithm}
\else
\begin{algorithm}[t!]
	\caption{\mvpalt \label{alg:mvp_alt}}
	\begin{algorithmic}[1]
		\STATE{ \textbf{Input and initialize:} Logarithm term $\iota$; Trigger set $\mathcal{L} \gets \{ 2^{i-1}\ |\ 2^{i}\leq KH, i=1,2,\ldots \}$.}
		\FOR{$(s,a,s',h)\in \cS \times \cA \times\cS \times[H]$}
		\STATE{ $N(s,a)\gets 0$, $\theta(s,a)\gets 0$, $\phi(s, a)\gets 0$, $n(s,a)\gets 0$; }
		\STATE{ $N(s,a,s')\gets 0$, $\wh{P}_{s,a,s'}\gets 0$, $Q_h(s,a)\gets 1$, $V_h(s)\gets 1$.}
		\ENDFOR
        \STATE{ \verb|\\| \emph{Main algorithm begins}}
		\FOR {$k=1,2,\ldots, K$}
		\STATE{ Observe $s_1^k$.}
        \FOR {$h=1,2,\ldots, H$}
		\STATE{ Take action $ a_h^k= \arg\max_{a}Q_h(s_h^k,a)$;}
		\STATE{ Receive reward $r_h^k$ and observe $s_{h+1}^k$;}
		\STATE{ Set $(s,a,s',r)\gets (s_h^k,a_h^k,s_{h+1}^k,r_h^k)$;}
		\STATE{ Set $N(s,a) \gets  N( s,a )+1$, $\theta(s,a)\gets \theta(s,a)+r$, $\phi(s, a)\gets \phi(s, a) + r^2$, $N(s,a,s') \gets N(s,a,s')+1$.}
		\STATE{ \verb|\\| \emph{Update empirical reward and transition probability}}
		\IF {$N(s,a)\in \mathcal{L}$}
		\STATE{ Set $\wh{r}(s,a)\gets \theta(s,a) / N(s,a)$;}
        \STATE{ Set $\wh{\VarR}(s,a)\gets \phi(s,a) / N(s,a) - \wh{r}(s, a)^2$;}
		\STATE Set $\wh{P}_{s,a,\wt{s}} \gets  N(s,a,\wt{s}) /N(s,a)$ for all $\wt{s} \in \cS$;
		\STATE{ Set $n(s,a)\gets N(s,a)$;}
		\STATE{ Set TRIGGERED = TRUE.}
		\ENDIF
		\ENDFOR
		\STATE{ \verb|\\| \emph{Update $Q$-function}}
		\IF {TRIGGERED}
		\FOR{$h=H,H-1,...,1$}
		\FOR{$(s,a)\in \cS \times \cA $}
		\STATE 	 {
			Set
			\begin{align*} 
			&b_h(s,a)\gets 4 \sqrt{\frac{   \mathbb{ V}(\wh{P}_{s,a} ,V_{h+1}) \iota  }{ \max\{n(s,a),1 \} }}+ 2 \sqrt{ \frac{ \wh{\VarR}(s,a) \iota }{\max\{n(s,a),1 \} } }+\frac{21 \iota}{ \max\{n(s,a) ,1\}  }; \\
			& Q_h(s,a)\gets \min\{    \wh{r}(s,a)+\wh{P}_{s,a} V_{h+1} +b_h(s,a)    ,1\}; \\
			& V_{h}(s) \gets \max_{a}Q_h(s,a).
			\end{align*}
		}
		\ENDFOR
		\ENDFOR
		\STATE{ Set TRIGGERED = FALSE.}
		\ENDIF
		\ENDFOR
	\end{algorithmic}
\end{algorithm}
\fi

\paragraph{Algorithm description.}
\mvpalt~re-plans whenever a state-action pair's visitation is doubled.
The bonus function depends on the variance of the next-step value functions.
It achieves variance-dependent regret by using the empirical variances of rewards in the bonus, as opposed to using the empirical rewards themselves in \mvp.
\mvpalt~is capable of handling \Cref{asp:bounded_reward,asp:homogeneous,asp:determ}.

\begin{restatable}{theorem}{thmMainMb}\label{thm:main_mb}
Assume that \Cref{asp:bounded_reward,asp:homogeneous} hold.
Let $\delta \in (0, 1)$ be the confidence parameter and that $H, S, A, K, \delta$ be known.
With probability at least $1 - \delta$, the regret of \mvpalt~(\Cref{alg:mvp_alt}) run with
\begin{align*}
    \iota = 99 \left( \ln \left( \frac{3000^2 H^5 S^7 A^5 K^5}{\delta^2} \right) + 1 \right)
\end{align*}
is bounded by
\begin{align*}
    \Regret (K) \le \wt{O} (\sqrt{\min\{\multistepvar, \maxvar K \} S A} + \Gamma S A).
\end{align*}
\end{restatable}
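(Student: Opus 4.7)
\textbf{Proof Proposal for \Cref{thm:main_mb}.} The plan is to follow the model-based optimistic template, but with three non-standard ingredients tuned to the variance-aware goal. First I would establish optimism: use a Bernstein-type bound for the transition estimate $\wh{P}_{s,a}$ applied to the next-step value $V_{h+1}$, together with an empirical-variance bound for the reward, to show that the bonus $b_h(s,a)$ dominates the true estimation error. By induction on $h$, this gives $V_h^k(s) \ge V_h^\star(s)$ for all $(k,h,s)$ with probability at least $1-\delta/2$. Here I would use the more refined transition concentration that scales with $\Gamma = \max_{s,a,h}|\supp(P_{s,a,h})|$ rather than $S$; this is crucial for the deterministic limit where $\Gamma=1$ and the lower-order term collapses to $O(SA)$.

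Next I would carry out the usual recursive decomposition. Writing $\Delta_h^k := V_h^k(s_h^k) - V_h^{\pi^k}(s_h^k)$ and unrolling the Bellman operator, optimism implies $\Regret(K) \le \sum_{k,h} \Delta_h^k \le O\bigl(\sum_{k,h} b_h(s_h^k,a_h^k)\bigr) + \mathrm{MG}$, where $\mathrm{MG}$ is a martingale-difference sum controlled by Freedman's inequality. Applying Cauchy--Schwarz to the bonus sum gives
\begin{align*}
\sum_{k,h} b_h(s_h^k,a_h^k) \;\lesssim\; \sqrt{\iota\, SA\, \Sigma} + \iota\, \Gamma S A,
\end{align*}
where $\Sigma := \sum_{k,h}\bigl(\bbV(\wh{P}_{s_h^k,a_h^k}, V_{h+1}^k) + \wh{\VarR}(s_h^k,a_h^k)\bigr)$. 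The bound $\sum_{k,h} n(s,a)^{-1} \le O(SA \log K)$ handles the purely deterministic term, while the $\Gamma$ factor is inherited from the refined transition concentration.

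The technical crux is converting $\Sigma$ into $\multistepvar$ (and then, in a second pass, into $\maxvar K$). I would replace $\wh{P}_{s_h^k,a_h^k}$ by $P_{s_h^k,a_h^k,h}$ and $V_{h+1}^k$ by $V_{h+1}^\star$, paying for each replacement using the transition concentration and the value-gap identity $V_{h+1}^k - V_{h+1}^\star \le O(\sum_{t\ge h+1} b_t)$; the cross-terms produced are either absorbed into the lower-order $\Gamma S A$ budget or fed back into Cauchy--Schwarz. After these substitutions, $\Sigma$ becomes, up to $\wt{O}(\cdot)$ slack, exactly $\multistepvar = \sum_k \multistepvarepisode$. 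This already yields the $\sqrt{\multistepvar \cdot SA}$ half of the bound.

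For the $\sqrt{\maxvar K \cdot SA}$ half, I would bound $\multistepvar$ by $O(\maxvar K)$ with high probability. Per-episode, $\bbE[\multistepvarepisode \mid \mathcal{F}_{k-1}]$ equals the total conditional variance along a trajectory of $\pi^k$ when the value-function leaves are $V^\star$; a short calculation shows this conditional expectation is at most $\maxvar$. Summing over $k$ would normally invoke Azuma, but under \Cref{asp:bounded_reward} the per-episode variance can still be as large as $\Omega(1)$, producing an $H$-like lower-order term. This is where I expect the main obstacle: I would resolve it with the truncation scheme advertised in the introduction, defining $\widetilde{\multistepvarepisode}:=\min\{\multistepvarepisode, C\}$ for a carefully chosen constant $C = O(\maxvar + \iota/K)$, applying Freedman's inequality to the truncated martingale (whose increments are bounded by $C$ and whose conditional second moment is controlled by $C\maxvar$), and finally showing by a union bound over episodes that the event $\multistepvarepisode > C$ never happens, using that $V^\star$-centered returns are $1$-bounded together with a Bernstein tail. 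Combining the two halves and taking the minimum over the two regret expressions completes the proof.
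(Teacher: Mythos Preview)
Your optimism argument, the recursive decomposition into a bonus sum plus martingale noise, the Cauchy--Schwarz step producing $\sqrt{SA\cdot\Sigma}+\Gamma SA$, and the conversion of $\Sigma$ into $\multistepvar$ all match the paper's route. The $\multistepvar$ half of the bound goes through essentially as you describe.

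The gap is in the $\maxvar$ half. Your key claim that ``a short calculation shows $\bbE[\multistepvarepisode\mid\cF_{k-1}]\le\maxvar$'' is not correct: $\multistepvarepisode$ is built from $\bbV(P_{s_h,a_h,h},V_{h+1}^\star)$, whereas $\maxvar$ is defined via $\bbV(P_{s_h,a_h,h},V_{h+1}^{\pi})$ for the policy $\pi$ whose trajectory you are averaging over. These are different objects, and one can build MDPs where the former expectation strictly exceeds $\maxvar$ (intuitively, $V^\star$ can be more spread out among successors than any single $V^\pi$ along the $\pi^k$-trajectory). Your truncation level $C=O(\maxvar+\iota/K)$ then inherits this problem: the ``no truncation'' argument from $1$-bounded returns only yields $\multistepvarepisode\le O(\iota)$, not $\le O(\maxvar)$, so you cannot certify $\multistepvarepisode\le C$ when $\maxvar$ is tiny.

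What the paper does instead is to never compare $V^\star$ against $\maxvar$ directly. It works with the empirical-value variance $M_4=\sum_{k,h}\bbV(P_{\sahk},V_{h+1}^k)$ and decomposes $V_{h+1}^k=V_{h+1}^{\pi^k}+\BC_{h+1}^k$ with $\BC_{h}^k:=V_h^k-V_h^{\pi^k}$. The $V^{\pi^k}$ piece gives the per-episode sum $W^k=\sum_h\bigl(\bbV(R(\sahk))+\bbV(P_{\sahk},V_{h+1}^{\pi^k})\bigr)$, whose conditional expectation is \emph{exactly} $\Var_1^{\pi^k}(s_1^k)\le\maxvar$. The truncation is at a universal constant $O(\iota)$ (not $\maxvar$-dependent), and ``no truncation'' follows from a self-bounding inequality $W^k\le 4\sqrt{2W^k\iota}+O(\iota)$ obtained from $\sum_h r\le 1$ and the law-of-total-variance telescoping, not from any bound involving $\maxvar$. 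The remaining $\BC^k$ piece $Z=\sum_{k,h}\bbV(P_{\sahk},\BC_{h+1}^k)$ is shown, again by a self-bounding telescoping argument, to be $O(M_2+SA\iota)$, i.e.\ absorbed into the bonus sum and hence lower order. If you prefer to keep your sequential structure (first $\multistepvar$, then $\multistepvar\to\maxvar K$), the same decomposition $\bbV(P,V^\star)\le 2\bbV(P,V^{\pi^k})+2\bbV(P,V^\star-V^{\pi^k})$ works, with the second sum bounded by $O(\iota)$ via the same telescoping; but the ``short calculation'' you invoke does not exist without this extra term.
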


When \Cref{asp:bounded_reward} holds, we have $\maxvar \le 1$.
Thus, our result is better than the $\wt{O} (\sqrt{S A K} + S^2 A)$ regret of \mvp, and achieves the \emph{horizon-free} (only logarithm dependency on $H$) property.
It is also strictly better than the $\wt{O} (\sqrt{H \bbQ^\star \cdot S A K} + H^{5/2} S^2 A))$ regret in \citet{paper:euler}.

\paragraph{Proof sketch.}
See \Cref{sec:mb_proof} for the rigorous proof.
We follow the outline in \citet{paper:mvp}, realizing that the total regret is upper-bounded by $M_1 + M_2 + M_3$, where
\begin{align*}
    &M_1 \approx \sumkh (P_{\sahk} - \boldone_{s_{h + 1}^k}) V_{h + 1}^k, \\
    &M_2 \approx \sumkh (V_h^k (s_h^k) - r (\sahk) - P_{\sahk} V_{h + 1}^k), \\
    &M_3 \approx \sum_{k = 1}^K \left( \sum_{h = 1}^H r (\sahk) - V_1^{\pi^k} (s_1^k) \right).
\end{align*}
We expand $r (\sahk)$ by Bellman equation to derive a tighter bound for $M_3$. 
This change is necessary to remove a variance-independent $\wt{O} (\sqrt{K})$ term.
$M_1, M_2, M_3$ can be then related to a crucial variance term
\begin{align*}
    M_4 \approx \sumkh (\bbV (R (\sahk)) + \bbV (P_{\sahk}, V_{h + 1}^k))
\end{align*}
so the regret is approximately $\wt{O} (\sqrt{S A M_4})$.
The difference between $\multistepvar$ and $M_4$ is of a lower order.
To upper bound $M_4$ directly, we introduce \emph{bonus-correction} terms
\begin{align*}
    \bc_h^k (s, a) := V_h^k (s) - P_{s, a} V_{h + 1}^k - r (s, a).
\end{align*}
Let $\BC_h^k (s) := \bc_h^k (s, a) + P_{s, a} \BC_{h + 1}^k$ with $a = \pi_h^k (s)$, then it can be proven that $\BC_h^k (s) = V_h^k (s) - V_h^{\pi^k} (s)$.
Thus, $M_4$ can be bounded by the sum of
\begin{align*}
    Z \approx \sumkh \bbV (P_{\sahk}, \BC_{h + 1}^k)
\end{align*}
and
\begin{align*}
    W = \sumkh (\bbV (R (\sahk)) + \bbV (P_{\sahk}, V_{h + 1}^{\pi^k})),
\end{align*}
where $Z$ is of a lower order and $W \le \wt{O} (\maxvar K)$.
However, the bound of $W$ cannot be derived using martingale concentration inequalities directly, because the summation of variances within an episode can be of order $\Omega (H)$, which will introduce a constant term of $H$, ruining the \emph{horizon-free} property.
We first prove that the total variance in an episode is bounded by $\wt{O} (1)$ with high probability, then the martingale concentration inequality can be applied to terms \emph{truncated} to $\wt{O} (1)$.
To get the $\Gamma$-dependency in the lower order term, we observe that $P_{s, a} = 0 \implies \wh{P}_{s, a} = 0$ and put this into concentration bounds.

\paragraph{Corollaries.}
We study deterministic MDPs first.

\begin{restatable}{corollary}{corMbDeterm}\label{cor:mb_determ}
Assume that \Cref{asp:bounded_reward,asp:homogeneous,asp:determ} hold.
Let $\delta \in (0, 1)$ be the confidence parameter and that $H, S, A, K, \delta$ be known.
With probability at least $1 - \delta$, the regret of \mvpalt~(\Cref{alg:mvp_alt}) run with
$\iota = 99 ( \ln (3000^2 H^5 S^7 A^5 K^5 / \delta^2) + 1 )$
is bounded by
$\Regret (K) \le \wt{O} (S A)$.
\end{restatable}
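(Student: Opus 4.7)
The plan is to derive Corollary \ref{cor:mb_determ} as a direct specialization of Theorem \ref{thm:main_mb}. Since the corollary's hypotheses include \Cref{asp:bounded_reward,asp:homogeneous}, the bound of Theorem \ref{thm:main_mb} applies verbatim to the same run of \mvpalt~with the same choice of $\iota$. The only remaining work is to evaluate the quantities $\multistepvar$, $\maxvar$, and $\Gamma$ for a deterministic MDP and to substitute them into that bound.

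First I would observe that under \Cref{asp:determ}, for every $(h,s,a)\in[H]\times\SA$ both the reward distribution $R_h(s,a)$ and the transition distribution $P_h(\cdot\,|\,s,a)$ are point masses. Since the variance of a degenerate distribution is zero, this immediately yields $\bbV(R_h(s,a)) = 0$ and $\bbV(P_{s,a,h}, V_{h+1}^\pi) = 0$ for every policy $\pi$ and every step $h$ (the value function $V_{h+1}^\pi$ is an arbitrary real-valued function, but its pushforward under a point mass still has zero variance). Consequently, every summand in \Cref{def:multistep_var} vanishes along any realized trajectory, so $\multistepvarepisode = 0$ for each episode and hence $\multistepvar = 0$. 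By the same reasoning applied inside the expectation of \Cref{def:V_star}, $\Var_1^\pi(s) = 0$ for every $\pi$ and every $s$, which gives $\maxvar = 0$. Finally, $\supp(P_h(\cdot\,|\,s,a)) = 1$ for all $(h,s,a)$, so $\Gamma = 1$.

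Plugging $\multistepvar = 0$, $\maxvar = 0$, and $\Gamma = 1$ into the bound of Theorem \ref{thm:main_mb} makes the $\sqrt{\min\{\multistepvar, \maxvar K\}\,S A}$ term vanish and leaves $\Regret(K) \le \wt{O}(\Gamma S A) = \wt{O}(S A)$, which is exactly the claim. Because every step is either a direct verification from the definitions or a substitution into a previously proved bound, no substantive obstacle arises beyond Theorem \ref{thm:main_mb} itself; the corollary essentially serves as a consistency check that the main theorem specializes correctly to the deterministic regime, and in particular confirms that the $\Gamma$-dependent lower-order term (not an $S$-dependent one) is what controls the rate when the variance is identically zero.
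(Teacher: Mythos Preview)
Your proposal is correct and matches the paper's own argument essentially verbatim: the paper states immediately after the corollary that ``This is because $\maxvar = 0$ and $\Gamma = 1$ when the MDP is deterministic,'' and then substitutes into \Cref{thm:main_mb}. Your additional observation that $\multistepvar = 0$ is also valid but not needed, since $\maxvar = 0$ already kills the variance term.
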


This is because $\maxvar = 0$ and $\Gamma = 1$ when the MDP is deterministic.
With a more refined analysis, we can totally eliminate the dependency on $\delta$.
Up to logarithm factors, \mvpalt~matches the lower bound of $\Omega (S A)$.
So \mvpalt~is \emph{minimax optimal for the class of deterministic MDPs}.

Another corollary arises when we remove \Cref{asp:bounded_reward,asp:homogeneous}.
For \mvpalt~to work properly, we need to apply the conversion methods written below the conditions.

\begin{restatable}{corollary}{corMbInhomogeneous}\label{cor:mb_inhomogenous}
Let $\delta \in (0, 1)$ be the confidence parameter and that $H, S, A, K, \delta$ be known.
With probability at least $1 - \delta$, the regret of \mvpalt~(\Cref{alg:mvp_alt}) run with
$\iota = 99 ( \ln ( 3000^2 H^{12} S^7 A^5 K^5 / \delta^2 ) + 1 )$
is bounded by
\begin{align*}
    \Regret (K) \le \wt{O} (\sqrt{\min\{ \multistepvar, \maxvar K \} H S A} + H^2 \Gamma S A).
\end{align*}\
\end{restatable}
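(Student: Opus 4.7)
}
The plan is to reduce the general setting to one in which \Cref{asp:bounded_reward,asp:homogeneous} both hold, apply \Cref{thm:main_mb} to the reduced instance, and then translate the regret bound back. The reductions are exactly those sketched immediately after \Cref{asp:bounded_reward} and \Cref{asp:homogeneous}: first I would build a time-homogeneous MDP $\wt{M}$ on the augmented state space $\wt{\cS} = \bigcup_{h=1}^{H}\cS_h$ (so $|\wt{\cS}| = HS$) with $\wt{P}(s'_{h+1}\mid s_h,a) = P_h(s'\mid s,a)$ and $\wt{R}(s_h,a) = R_h(s,a)/H$, so every trajectory accumulates a total reward in $[0,1]$ almost surely. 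The maximum support size $\Gamma$ and action count $A$ are unchanged by both reductions.

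Next I would track how the variance quantities transform. Since value functions rescale by a factor $1/H$, a direct calculation shows that the per-step conditional variances of $\wt{M}$ equal $1/H^2$ times the per-step conditional variances of $M$, hence $\wt{\multistepvar}=\multistepvar/H^2$ and $\wt{\maxvar}=\maxvar/H^2$. The regret of any algorithm on $\wt{M}$ is exactly $1/H$ times its regret on $M$, because rewards (and therefore $V_1^\star-V_1^{\pi^k}$) are uniformly scaled by $1/H$.

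Then I would invoke \Cref{thm:main_mb} on $\wt{M}$ with the parameters $(\wt{S},\wt{A},\wt{\Gamma},K)=(HS,A,\Gamma,K)$ and with $\iota$ chosen so that the original logarithmic term $99(\ln(3000^2 \wt{H}^5 \wt{S}^7 \wt{A}^5 K^5/\delta^2)+1)$ evaluates to the value stated in the corollary; substituting $\wt{S}=HS$ turns $S^7$ into $H^7 S^7$, which combined with the original $H^5$ explains the $H^{12}$ in the bound on $\iota$. \Cref{thm:main_mb} then yields, with probability at least $1-\delta$,
\begin{align*}
\wt{\Regret}(K)\le \wt{O}\!\left(\sqrt{\min\{\wt{\multistepvar},\wt{\maxvar}K\}\cdot HSA}+\Gamma HSA\right).
\end{align*}
Multiplying through by $H$ to recover the regret on $M$, and substituting $\wt{\multistepvar}=\multistepvar/H^2$, $\wt{\maxvar}=\maxvar/H^2$, produces the advertised bound $\wt{O}(\sqrt{\min\{\multistepvar,\maxvar K\}HSA}+H^2\Gamma SA)$.

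The main subtlety, and where I would spend the most care, is verifying that the variance definitions of \Cref{sec:var} genuinely transport cleanly across the two reductions: the homogenization embeds $\cS_h$ into disjoint copies, so every state in $\wt{\cS}$ is reached at exactly one step, which is what makes $\wt{P}_{s_h,a}$ well-defined and makes the empirical variance updates of \mvpalt~correspond to updates on $M$; and the reward normalization must be applied to the \emph{realized} rewards inside the algorithm (not just to the mean $r_h$), so that $\wh{\VarR}$ on $\wt{M}$ equals $1/H^2$ times the corresponding quantity on $M$. Once these bookkeeping points are settled, the rest is an algebraic translation and requires no new probabilistic argument beyond the high-probability event already provided by \Cref{thm:main_mb}.
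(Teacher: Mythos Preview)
Your proposal is correct and follows exactly the approach the paper intends: apply the two reductions described immediately after \Cref{asp:bounded_reward} and \Cref{asp:homogeneous} (reward normalization by $1/H$ and the state-space lift $\wt\cS=\bigcup_h\cS_h$), invoke \Cref{thm:main_mb} on the resulting instance, and translate back. Your tracking of the parameters ($\wt S = HS$, $\Gamma$ unchanged, variances scaled by $1/H^2$, regret scaled by $1/H$) and your derivation of the $H^{12}$ in $\iota$ from $H^5\cdot(HS)^7/S^7$ are all correct, and the paper provides no additional argument beyond this.
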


Readers may notice that the scaling in main order term is not standard.
This is because when removing \Cref{asp:bounded_reward}, $\multistepvar$ and $\maxvar$ automatically scale by $H^2$.

\section{Results of the Model-Free Algorithm} \label{sec:mf_results}

We propose \ucbadvalt~(\Cref{alg:ucbadv_alt}) to initiate the study of model-free algorithms with variance-dependent regrets.

\ifnum \icml=1
\begin{algorithm}[t!]
	\caption{\ucbadvalt} \label{alg:ucbadv_alt}
	\begin{algorithmic}[1] \small
		\STATE{\textbf{Input and initialize:} Logarithm term $\iota$; Stage lengths $e_1 = H$, $e_{i + 1} = \floor{(1 + 1 / H) e_i}$ and stage trigger set $\cL \gets \{ \sum_{i = 1}^j e_i \ |\ j = 1, 2, \ldots\}$; Reference trigger set $\cR \gets \{ 60000 \cdot 2^{2 i} S A H^3 \iota \ |\ i=1,2,\ldots,\is \}$.}
        \STATE{ Set all $N_h (s, a)$, $\wc{N}_h(s,a)$, $\theta_h (s, a)$, $\phi_h (s, a)$, $\wc{\upsilon}_h (s, a)$, $\wc{\mu}_h (s, a)$, $\wc{\sigma}_h (s, a)$, $\mu_h^\tref (s, a)$, $\sigma_h^\tref (s, a)$ to be $0$ and all $V_{h}(s)$, $Q_{h}(s,a)$, $V^{\tref}_{h}(s,a)$ to be $H$.}
        
        
	\FOR{$k = 1,2,\ldots,K$}
            \STATE{ Observe $s_1^k$.}
            \FOR{$h = 1,2,\dots,H$}
    		\STATE{ Take action $ a_h^k= \arg\max_{a}Q_h(s_h^k,a)$;}
    		\STATE{ Receive reward $r_h^k$ and observe $s_{h+1}^k$;}
      
                \STATE{
                Update accumulators:
                \vspace{-0.4cm}
                \begin{align*}
                    & n := N_{h}(\sahk)\stackrel{+}{\gets}1,\ \wc{n} := \wc{N}_{h}(\sahk)\stackrel{+}{\gets}1; \\
                    & \theta := \theta_{h}(\sahk)\stackrel{+}{\gets} r_h^k,\ \phi := \phi_{h}(\sahk)\stackrel{+}{\gets} (r_h^k)^2; \\
                    & \wc{\upsilon} := \wc{\upsilon}_h (\sahk) \stackrel{+}{\gets} V_{h + 1} (s_{h + 1}^k); \\
                    & \wc{\mu} := \wc{\mu}_h (\sahk) \stackrel{+}{\gets} V_{h + 1} (s_{h + 1}^k) - V_{h + 1}^\tref (s_{h + 1}^k);\\
                    & \wc{\sigma} := \wc{\sigma}_h (\sahk) \stackrel{+}{\gets} (V_{h + 1} (s_{h + 1}^k) - V_{h + 1}^\tref (s_{h + 1}^k))^2; \\
                    & \mu^\tref := \mu_h^\tref (\sahk) \stackrel{+}{\gets} V_{h + 1}^\tref (s_{h + 1}^k);\\
                    & \sigma^\tref := \sigma_h^\tref (\sahk) \stackrel{+}{\gets} (V_{h + 1}^\tref (s_{h + 1}^k))^2.
                \end{align*}
                \vspace{-0.7cm}
                }
                
                \IF{$n \in \mathcal{L}$}
                    \STATE{
                    Set
                    \vspace{-0.5cm}
                    \begin{align*}
                        &\wh{r} \gets \frac{\theta}{n},\ \wh{\VarR} \gets \frac{\phi}{n} - \left( \frac{\theta}{n} \right)^2; \\
                        &\bar{b} \gets 2 \sqrt{\frac{H^2 \iota}{\wc{n}}}; \\
                        &\nref \gets \frac{\sref}{n} - \left( \frac{\mref}{n}\right)^2,\ \nc = \frac{\wc{\sigma}}{\wc{n}} - \left( \frac{\mc}{\wc{n}}\right)^2; \\
                        &b \gets 4 \sqrt{\frac{ \nref \iota }{n}} + 4 \sqrt{\frac{ \nc \iota }{\wc{n}}} + 2 \sqrt{\frac{\wh{\VarR} \iota}{n}} + \frac{90 H \iota}{\wc{n}}; \\
                        &Q_h (\sahk) \gets \min \left\{\rule{0cm}{0.5cm}\right. \wh{r} + \frac{\wc{\upsilon}}{\wc{n}} + \bar{b},\\
                        &\quad \wh{r} + \frac{\mref}{n} + \frac{\mc}{\wc{n}} + b ,\ Q_h (\sahk) \left.\rule{0cm}{0.5cm}\right\}; \\
                        &V_h (s_h^k) \gets \max_{a} Q_h (s_h^k, a).
                    \end{align*}
                    }
                    \vspace{-0.7cm}
                    \STATE{Set $\wc{N}_{h}(\sahk) \gets 0$, $\wc{\mu}_{h}(\sahk) \gets 0$, $\wc{\upsilon}_{h}(\sahk) \gets 0$, $\wc{\sigma}_{h}(\sahk) \gets 0$.}
                \ENDIF

                \STATE{ \textbf{if} $\sum_a N_{h}(s_h^k,a) \in \cR$ \textbf{then} $V^{\tref}_{h}(s_h^k) \gets V_{h}(s_{h}^k)$. } \label{line:ref_upd}
            \ENDFOR
	\ENDFOR
	\end{algorithmic}
\end{algorithm}

\else
\begin{algorithm}[p]
	\caption{\ucbadvalt} \label{alg:ucbadv_alt}
	\begin{algorithmic}[1]
		\STATE{\textbf{Input and initialize:} Logarithm term $\iota$; Stage lengths $e_1 = H$, $e_{i + 1} = \floor{(1 + 1 / H) e_i}$ and stage trigger set $\cL \gets \{ \sum_{i = 1}^j e_i \ |\ j = 1, 2, \ldots\}$; Reference trigger set $\cR \gets \{ 60000 \cdot 2^{2 i} S A H^3 \iota \ |\ i=1,2,\ldots,\is \}$.}
        \FOR{$(s,a,h)\in \cS \times \cA \times[H]$}
            \STATE{ $N_h(s,a)\gets 0$, $\wc{N}_h(s,a)\gets 0$;}
            \STATE{ $\theta_h (s, a) \gets 0$, $\phi_h (s, a) \gets 0$;}
		  \STATE{ $V_{h}(s)\gets H-h+1$, $Q_{h}(s,a)\gets H-h+1$, $V^{\tref}_{h}(s,a) \gets H$;}
            \STATE{ $\wc{\upsilon}_h (s, a) \gets 0$;}
            \STATE{ $\wc{\mu}_h (s, a) \gets 0$, $\wc{\sigma}_h (s, a) \gets 0$;}
            \STATE{ $\mu_h^\tref (s, a) \gets 0$, $\sigma_h^\tref (s, a) \gets 0$.}
		\ENDFOR
        
        \STATE{ \verb|\\| \emph{Main algorithm begins}}
        
	\FOR{$k = 1,2,\ldots,K$}
            \STATE{ Observe $s_1^k$.}
            \FOR{$h = 1,2,\dots,H$}
    		\STATE{ Take action $ a_h^k= \arg\max_{a}Q_h(s_h^k,a)$;}
    		\STATE{ Receive reward $r_h^k$ and observe $s_{h+1}^k$;}
      
                \STATE{
                Update accumulators:
                \begin{align*}
                    & n := N_{h}(\sahk)\stackrel{+}{\gets}1,\ \wc{n} := \wc{N}_{h}(\sahk)\stackrel{+}{\gets}1; \\
                    & \theta := \theta_{h}(\sahk)\stackrel{+}{\gets} r_h^k,\ \phi := \phi_{h}(\sahk)\stackrel{+}{\gets} (r_h^k)^2; \\
                    & \wc{\upsilon} := \wc{\upsilon}_h (\sahk) \stackrel{+}{\gets} V_{h + 1} (s_{h + 1}^k); \\
                    & \wc{\mu} := \wc{\mu}_h (\sahk) \stackrel{+}{\gets} V_{h + 1} (s_{h + 1}^k) - V_{h + 1}^\tref (s_{h + 1}^k),\ \wc{\sigma} := \wc{\sigma}_h (\sahk) \stackrel{+}{\gets} (V_{h + 1} (s_{h + 1}^k) - V_{h + 1}^\tref (s_{h + 1}^k))^2; \\
                    & \mu^\tref := \mu_h^\tref (\sahk) \stackrel{+}{\gets} V_{h + 1}^\tref (s_{h + 1}^k),\ \sigma^\tref := \sigma_h^\tref (\sahk) \stackrel{+}{\gets} (V_{h + 1}^\tref (s_{h + 1}^k))^2.
                \end{align*}
                }
                
                \STATE{ \verb|\\| \emph{Reaching the end of a stage, update $Q$-function}}
                \IF{$n \in \mathcal{L}$}
                    \STATE{
                    Set
                    \begin{align*}
                        &\wh{r}_h (\sahk )\gets \frac{\theta}{n},\ \wh{\VarR}(\sahk) \gets \frac{\phi}{n} - \left( \frac{\theta}{n} \right)^2; \\
                        &\bar{b} \gets 2 \sqrt{\frac{H^2 \iota}{\wc{n}}}; \\
                        &\nref \gets \frac{\sref}{n} - \left( \frac{\mref}{n}\right)^2,\ \nc = \frac{\wc{\sigma}}{\wc{n}} - \left( \frac{\mc}{\wc{n}}\right)^2; \\
                        &b \gets 4 \sqrt{\frac{ \nref \iota }{n}} + 4 \sqrt{\frac{ \nc \iota }{\wc{n}}} + 2 \sqrt{\frac{\wh{\VarR}_h \iota}{n}} + \frac{90 H \iota}{\wc{n}}; \\
                        &Q_h (\sahk) \gets \min \left\{ \wh{r}_h (\sahk) + \frac{\wc{\upsilon}}{\wc{n}} + \bar{b},\ \wh{r}_h (\sahk) + \frac{\mref}{n} + \frac{\mc}{\wc{n}} + b ,\ Q_h (\sahk) \right\}; \\
                        &V_h (s_h^k) \gets \max_{a} Q_h (s_h^k, a).
                    \end{align*}
                    }
                    \STATE{ \verb|\\| \emph{Reset intra-stage accumulators}}
                    \STATE{Set $\wc{N}_{h}(\sahk) \gets 0$, $\wc{\mu}_{h}(\sahk) \gets 0$, $\wc{\upsilon}_{h}(\sahk) \gets 0$, $\wc{\sigma}_{h}(\sahk) \gets 0$.}
                \ENDIF

                \STATE{ \verb|\\| \emph{Update reference value function}}
                \STATE{ \textbf{if} $\sum_{a \in \cA} N_{h}(s_h^k,a) \in \cR$ \textbf{then} $V^{\tref}_{h}(s_h^k) \gets V_{h}(s_{h}^k)$. } \label{line:ref_upd}
            \ENDFOR
	\ENDFOR
	\end{algorithmic}
\end{algorithm}

\fi

\paragraph{Algorithm description.}
In \ucbadvalt, the update of value functions is triggered by the beginning of stages for each $(s, a, h)$ triple separately, and the stage design approximately makes use of the last $1 / H$ fraction of data.
Besides, the algorithm maintains \emph{reference values} by assigning value functions to them at another frequency.
The update rule using \emph{the reference value decomposition} can be illustrated as:
{\ifnum \icml=1 \fontsize{7.9}{9.5} \fi
\begin{align*}
    Q_h (s, a)
    \gets \wh{P_{s, a, h} V_{h + 1}^{\tref}} + \wh{P_{s, a, h} (V_{h + 1} - V_{h + 1}^{\tref})} + \wh{r}_h (s, a) + b_h^k (s, a),
\end{align*}}%
where $b_h^k (s, a)$ is the bonus, $\wh{r}_h (s,a)$, $\wh{P_{s, a, h} V_{h + 1}^{\tref}}$ and $\wh{P_{s, a, h} (V_{h + 1} - V_{h + 1}^{\tref})}$ are empirical estimates of $r_h (s, a)$, $P_{s, a, h} V_{h + 1}^{\tref}$ and $P_{s, a, h} (V_{h + 1} - V_{h + 1}^{\tref})$ respectively.
In addition, a very simple update rule
\begin{align*}
    Q_h (s, a)
    \gets \wh{P_{s, a, h} V_{h + 1}} + \wh{r}_h (s, a) + b_h^k (s, a)
\end{align*}
is also in use to provide a guarantee of uniform convergence of estimated value functions.

We make three major alterations to \ucbadv:
\ding{172} 
We use empirical variances of rewards in bonuses.
\ding{173} Due to a more refined analysis, we remove the $\wt{O} (H (n^{-3/4} + \wc{n}^{-3/4}))$ term in bonuses.
\ding{174} The reference value functions are updated in \emph{a capped-doubling manner} (cf. Line\,\ref{line:ref_upd}). 

Alteration \ding{174} is crucial to make the main order term variance-dependent, because there exist constant gaps between reference values and optimal values, whose summation contributes to the regret as the main order term in \ucbadv.
By choosing an appropriate number of updates, we can \emph{balance between the total constant gap and the deviation introduced by frequent updates}, making the total variance the only factor in the main order term.

\begin{restatable}{theorem}{thmMainMf} \label{thm:main_mf}
Let $\delta \in (0, 1)$ be the confidence parameter and that $H, S, A, K, \delta$ be known.
With probability at least $1 - \delta$, the regret of \ucbadvalt~(\Cref{alg:ucbadv_alt}) run with
\begin{align*}
    \iota = 99 \left( \ln \left( \frac{7000^2 (H S A K)^5}{\delta^2} \right) + 1 \right)
\end{align*}
and
\begin{align*}
    \is = \ceil{\frac{1}{2} \log_2 \left(\frac{K}{H^5 S^3 A \iota^2}\right)}
\end{align*}
is bounded by
{\ifnum \icml=1 \fontsize{8.5}{9.5} \fi
\begin{align*}
    \Regret (K)
    \le \wt{O} (\sqrt{\min\{ \multistepvar, \maxvar K\} H S A} + \sqrt[4]{H^{15} S^5 A^3 K}).
\end{align*}}%
\end{restatable}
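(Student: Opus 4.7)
The plan is to adapt the stage-based UCB-Advantage analysis of \citet{paper:ucbadv} while engineering the reference-value machinery so that every main-order term is pinned to a variance quantity. First, I would establish the high-probability event. For the transition concentrations against reference values, I apply Bernstein's inequality to $\wh{P_{s,a,h} V_{h+1}^{\tref}}$ for each of the at most $\is$ distinct snapshots per $(h,s)$ (union-bounded over $H S \is$ frozen targets), and a Freedman-type inequality to the capped advantage accumulator $\wh{P_{s,a,h}(V_{h+1}-V_{h+1}^{\tref})}$, whose amplitude shrinks between reference updates. Standard reward concentration and self-normalizing martingale tails on $\sumkh(P_{\sahk,h}-\boldone_{s_{h+1}^k})V_{h+1}^\star$ complete the event. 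On this event, an induction over update times gives optimism $V_h^k(s)\ge V_h^\star(s)$ as in \citet{paper:ucbadv}.

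Next, I decompose $\delta_h^k := V_h^k(s_h^k)-V_h^{\pi^k}(s_h^k) \le b_h^k + \delta_{h+1}^k + \xi_h^k$ and sum over $k,h$. The bonus carries three variance pieces, $\sqrt{\wh{\VarR}\iota/n}$, $\sqrt{\nref\iota/n}$, and $\sqrt{\nc \iota/\wc n}$, each of which yields after Cauchy--Schwarz over visits a bound of the form $\sqrt{HSA\cdot\sumkh(\text{accumulator})}$. The crux is to upgrade each accumulator to the target variance quantity modulo lower-order terms. For the $\nref$ accumulator I would use
\begin{align*}
\bbV(P_{s,a,h},V_{h+1}^{\tref}) \le 2\bbV(P_{s,a,h},V_{h+1}^\star) + 2 P_{s,a,h}(V_{h+1}^{\tref}-V_{h+1}^\star)^2.
\end{align*}
Summed along trajectories, the first summand telescopes to $\wt{O}(\multistepvar)$ via \Cref{lem:M6,lem:mf_sum_opt_var} and, separately, to $\wt{O}(\maxvar K)$ by applying the law-of-total-variance identity $\sum_h \bbE_\pi[\bbV(P_{s_h,a_h,h},V_{h+1}^\pi)\mid s_1] = \Var_1^\pi(s_1) \le \maxvar$ to each executed $\pi^k$ and absorbing the $V^\star$-versus-$V^{\pi^k}$ slack into the optimism induction. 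An analogous decomposition tames the $\nc$ accumulator, using that $V_{h+1}-V_{h+1}^{\tref}=(V_{h+1}-V_{h+1}^\star)+(V_{h+1}^\star-V_{h+1}^{\tref})$ together with the geometric stage lengths $e_{i+1}=\lfloor(1+1/H)e_i\rfloor$.

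The novel ingredient is controlling the squared reference-gap summand $\sum_{k,h}P_{\sahk,h}(V_{h+1}^{\tref}-V_{h+1}^\star)^2$. Because the $i$-th snapshot at $(h,s)$ happens only after $n_i = \Theta(2^{2i} SA H^3 \iota)$ visits, the uniform-convergence bound delivered by the simpler update branch $Q \gets \wh{PV}+\wh r+\bar b$ controls the squared gap at the snapshot by $\wt{O}(H^5/n_i)$. Rather than resorting to the coarse $H$-versus-final-gap dichotomy of \citet{paper:ucbadv}, I \emph{integrate} this pointwise bound over the subsequent $\Theta(n_i)$ visits and over the $\is$ phases; the doubling produces a telescoping sum whose residual phase, after the cap, dominates. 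The cap $\is = \lceil\tfrac12\log_2(K/(H^5 S^3 A \iota^2))\rceil$ is chosen to equate the pre-cap telescoping contribution with the post-cap residual so that, once passed through $\sqrt{HSA\cdot(\cdot)}$, the aggregate equals the advertised $\sqrt[4]{H^{15}S^5 A^3 K}$ lower-order term.

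The main obstacle will be balancing three schedules simultaneously: the stage lengths (which reset the $\nc$ accumulator every $1/H$ fraction of data), the doubling thresholds for reference snapshots (which govern the reference-gap telescoping), and the cap $\is$ (which converts the residual frozen reference bias into lower-order regret). Capping too early leaves a large frozen bias; capping too late wastes data at each $\nc$ reset and lets the advantage accumulator dominate. A second subtlety is that optimism, uniform convergence, and the reference-gap control must be proven by a joint induction because each depends on the others. Combining these pieces with the martingale tails yields the claimed bound $\wt{O}(\sqrt{\min\{\multistepvar,\maxvar K\}HSA}+\sqrt[4]{H^{15}S^5 A^3 K})$.
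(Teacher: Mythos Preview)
Your proposal has a genuine gap: the regret decomposition omits the reference-drift term. Following \citet{paper:ucbadv}, the paper bounds the regret by
\[
\sumkh \Bigl(1+\tfrac{1}{H}\Bigr)^{h-1}\bigl(\psi_{h+1}^k+\xi_{h+1}^k+\phi_{h+1}^k+2b_h^k\bigr),
\]
where $\psi_{h+1}^k=\frac{1}{n_h^k}\sum_{i=1}^{n_h^k} P_{\sahk,h}\bigl(V_{h+1}^{\tref,l_i}-V_{h+1}^{\REF}\bigr)$ captures the drift of the reference value across the samples feeding $\mref$. Your simplified recursion $\delta_h^k\le b_h^k+\delta_{h+1}^k+\xi_h^k$ does not account for this. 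The issue is structural: the accumulator $\mref$ sums $V_{h+1}^{\tref,l_i}(s_{h+1}^{l_i})$ over \emph{all} past visits, during which $V^{\tref}$ changes up to $\is$ times; hence $\mref/n$ estimates a weighted average of the successive snapshots, not $P_{s,a,h}V_{h+1}^{\tref,k}$, and the resulting bias is not a martingale. Your plan to union-bound Bernstein over the $\is$ frozen snapshots still leaves exactly this drift to be controlled; it does not eliminate it.

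This omission propagates to your balance of $\is$. In the paper, $\sumkh\psi_{h+1}^k\le O(H^5S^2A\,2^{\is}\iota)$ (via the linear bound $\sumkh B_{h}^{\tref,k}(s_h^k)\le O(H^5S^2A\,2^{\is}\iota)$), and it is this \emph{linear} term that is balanced against the bonus residual $\sqrt{H^5SAK\iota^2/2^{2\is}}$ coming from the $\beta_{\is}^2$ slack in $\nu^{\tref}$ and $\nc$. Your statement that both sides of the balance are ``passed through $\sqrt{HSA\cdot(\cdot)}$'' is therefore incorrect: one side is a direct linear sum, the other sits inside a square root. The squared reference-gap sum you focus on, $\sumkh (B_h^{\tref,k})^2\le O(H^6S^2A\,\is\,\iota)$, only grows logarithmically in $\is$ and is not the term that forces the cap; the linear $\psi$ term is.
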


We have $\maxvar \le H^2$, so our result is strictly better than the $\wt{O} (\sqrt{H^3 S A K} + \sqrt[4]{H^{33} S^8 A^6 K})$ regret of \ucbadv.

\paragraph{Extra notations.}
Let $V_h^{\tref, k}$ denote $V_h^{\tref}$ at the beginning of the $k$-th episode, and $V_h^{\REF} := V_h^{\tref, K + 1}$ denote the final reference value function.
Let $\nu_h^{\tref, k}, \wc{\nu}_h^k, b_h^k$ denote $\nu^{\tref}, \wc{\nu}, b$ for the value of $Q_h^k (\sahk)$.
Let $N_h^k (s)$ denote $\sum_a N_h (s, a)$ at the beginning of the $k$-th episode.
Let $n_h^k$ and $\wc{n}_h^k$ be the total number of visits to $(\sahk, h)$ prior to the current stage and during the stage immediately before the current stage with respect to the same triple.

\paragraph{Proof sketch.}
See \Cref{sec:mf_proof} for the rigorous proof.
From \citet{paper:ucbadv},
the regret is roughly $\sumkh (\psi_{h + 1}^k + \xi_{h + 1}^k + \phi_{h + 1}^k + b_h^k)$, where
\begin{align*}
    &\psi_{h + 1}^k \approx V_{h + 1}^{\tref, k} (s_{h + 1}^k) - V_{h + 1}^{\REF} (s_{h + 1}^k), \\
    &\xi_{h + 1}^k \approx (P_{\sahk, h} - \boldone_{s_{h + 1}^k}) (V_{h + 1}^k - V_{h + 1}^\star), \\
    &\phi_{h + 1}^k = (P_{\sahk, h} - \boldone_{s_{h + 1}^k}) (V_{h + 1}^\star - V_{h + 1}^{\pi^k}).
\end{align*}
All these four terms are bounded loosely in \citet{paper:ucbadv} such that they are all main order terms.
To establish a variance-dependent regret, we prove that only the $b$ term is the main order term \emph{after our aforementioned alterations}.
The $\phi$ term is a martingale and shown to be $\wt{O} (H)$.
For the rest terms, we need the following argument: 
{\ifnum \icml=1 \fontsize{8.5}{9.5} \fi
\begin{align*}
    N_h^k (s) \ge N_0 (\epsilon) = \wt{O} \left( \frac{H^5 S A }{\epsilon^2} \right) &\implies 0 \le V_h^k (s) - V_h^\star (s) \le \epsilon.
\end{align*}}%
Notice that the reference trigger set $\cR$ in \Cref{alg:ucbadv_alt} is composed of $N_0 (\beta_i)$ for $i \in [\is]$ where $\beta_i := H / 2^i$.
There is a constant gap of at least $\beta_\is$ between $V_h^{\REF} (s)$ and $V_h^\star (s)$ in the worst case, because the number of updates is capped by $\is$.
This argument branches into two corollaries.
The first one is we can bound value gaps directly:
\begin{align*}
    \sumkh (V_h^k (s_h^k) - V_h^\star (s_h^k))^2 \le \wt{O} ( H^6 S A ).
\end{align*}
This can be utilized to bound the $\xi$ term. 
The second one is that, we define
\begin{align*}
    B_h^{\tref, k} (s) := \sum_{i = 1}^{\is} \beta_{i - 1} \II [ N_0 (\beta_{i - 1}) \le N_h^k (s) < N_0 (\beta_i) ],
\end{align*}
then $V_h^{\tref, k} (s) - V_h^{\REF} (s) \le B_h^{\tref, k} (s)$, $V_h^{\tref, k} (s) - V_h^\star (s) \le B_h^{\tref, k} (s) + \beta_\is$ and
{\ifnum \icml=1 \fontsize{8.5}{9.5} \fi
\begin{gather*}
    \sum_{k, h} B_h^{\tref, k} (s_h^k) \le \wt{O} (H^5 S^2 A 2^{\is}), 
    \sum_{k, h} (B_h^{\tref, k} (s_h^k))^2 \le \wt{O} (H^6 S^2 A \is).
\end{gather*}}%
So we can directly bound the $\psi$ term.
We show that
\begin{gather*}
    \nu_h^{\tref, k} \lessapprox \wt{O} ( \bbV (P_{\sahk, h}, V_{h + 1}^\star) + (B_{h + 1}^{\tref, k} (s_{h + 1}^k))^2  + \beta_\is^2), \\
    \nc_h^k \lessapprox O ( (B_{h + 1}^{\tref, k} (s_{h + 1}^k))^2  + \beta_\is^2).
\end{gather*}
    
The $b$ term is hence bounded by
\begin{align*}
    \wt{O} (\sqrt{\multistepvar H S A} + \sqrt{H^5 S A K / 2^{2 \is}}).
\end{align*}
Analogous to the proof of \Cref{thm:main_mb}, the difference between $\multistepvar$ and $\maxvar K$ is of a lower order.
Finally, the lower order terms are
\begin{align*}
    \wt{O} (\sqrt{H^5 S A K / 2^{2 \is}} + H^5 S^2 A 2^\is).
\end{align*}
We derive \Cref{thm:main_mf} by choosing the optimal $\is$.

\paragraph{Corollary.}
We study deterministic MDPs.

\begin{restatable}{corollary}{corMfDeterm} \label{cor:mf_determ}
Assume that \Cref{asp:determ} holds.
Let $\delta \in (0, 1)$ be the confidence parameter and that $H, S, A, K, \delta$ be known.
With probability at least $1 - \delta$, the regret of \ucbadvalt~(\Cref{alg:ucbadv_alt}) run with
$\iota = 99 ( \ln ( 7000^2 (H S A K)^5 / \delta^2 ) + 1 )$
and
$\is = \ceil{1/2 \cdot \log_2 (K / H^5 S^3 A \iota^2 )}$
is bounded by
\begin{align*}
    \Regret (K) \le \wt{O} (\sqrt[4]{H^{15} S^5 A^3 K}).
\end{align*}
\end{restatable}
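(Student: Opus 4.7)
The plan is to derive \Cref{cor:mf_determ} as a direct consequence of \Cref{thm:main_mf}. First I would verify that the parameter settings for $\iota$ and $\is$ match exactly between the two statements, so that \Cref{thm:main_mf} applies verbatim to the deterministic setting without re-tuning anything. The high-probability event, algorithmic inputs, and confidence parameter are all inherited from the main theorem.

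The second step is to compute the variance quantities under \Cref{asp:determ}. Because $R_h(s,a)$ is a point mass and $P_h(\cdot\mid s,a)$ charges a single successor, we have $\bbV(R_h(s,a)) = 0$ and $\bbV(P_{s,a,h}, V) = 0$ for \emph{any} real-valued function $V$ on $\cS$, simultaneously for all $(h,s,a)$. Plugging these into \Cref{def:multistep_var} and \Cref{def:V_star} gives $\Var_\tau^\Sigma = 0$ for every realized trajectory $\tau$ and $\Var^\pi = 0$ for every policy $\pi$, hence $\multistepvar = 0$ and $\maxvar = 0$ deterministically. In particular, $\min\{\multistepvar, \maxvar K\} = 0$ on the high-probability event.

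Finally, I would substitute these values into the bound
\[
    \Regret(K) \le \wt{O}\!\left(\sqrt{\min\{\multistepvar, \maxvar K\} H S A} + \sqrt[4]{H^{15} S^5 A^3 K}\right)
\]
from \Cref{thm:main_mf}. The first summand vanishes identically, leaving only the lower-order term $\wt{O}(\sqrt[4]{H^{15} S^5 A^3 K})$, which matches the claimed bound. Since the argument is a direct specialization, there is no real obstacle; the only subtlety worth flagging is confirming that the variance identities above hold pointwise (not merely in expectation) under \Cref{asp:determ}, so that the zero value holds on \emph{every} realization and thus on the high-probability event of \Cref{thm:main_mf}.
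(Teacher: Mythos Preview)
Your proposal is correct and matches the paper's approach: the corollary is an immediate specialization of \Cref{thm:main_mf}, obtained by observing that under \Cref{asp:determ} both $\multistepvar$ and $\maxvar$ vanish pointwise, so only the $\wt{O}(\sqrt[4]{H^{15} S^5 A^3 K})$ term survives. The paper does not give a separate proof for this corollary, treating it as evident from the main theorem.
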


Notice that the regret under \Cref{asp:determ} is not constant which we desire, this may be due to some fundamental limit of model-free algorithms.
However, since the research on model-free algorithms is still at its nascent stage and there lack thorough understanding, our result provides the first look into the potential of such algorithms.

Intuitively, for any algorithm to have a constant regret on deterministic MDPs, its value functions should also converge in a constant steps.
Previous model-free algorithms all use historical data to estimate value functions.
These data are biased because some of them are not up-to-date, making value functions hard to converge in a constant steps.
Here we identify difficulties for existing algorithms to be variance-dependent for all $K$-related terms.

\paragraph{\qlucbb~\citep{paper:q_learning}.}
In their proof of Lemma\,C.3, when bounding $\abs{P_3 - P_4}$, there is a variance-independent $1 / \sqrt{t}$ term in the gap between the estimations and true values.
Notice that their result is possible to be variance-dependent by not loosening $\sqrt{H^7 S A \iota / t} \le H + H^6 S A \iota / t$ above their Equation\,(C.10) while introducing a variance-independent $K^{1/4}$ term.

\paragraph{\ucbadv~\citep{paper:ucbadv}.}
There are biases in the reference value functions, because they are updated for only finite times.
If the update is not capped by a threshold, readers can easily verify that the $\psi$ term will become a variance-independent main order term.

\paragraph{\qesadv~\citep{paper:model_free_breaking}.}
There is a same issue about the constant gap between the reference value and the optimal value when bounding $\cR_3$ defined in their Equation\,(39c).

\section{Regret Lower Bounds} \label{sec:lb_results}

We show that for any algorithm and any variance $\cV$, there always exists an MDP such that the regret main order terms of \Cref{thm:main_mb}, \Cref{cor:mb_inhomogenous} and \Cref{thm:main_mf} are tight.
This means that \mvpalt~and \ucbadvalt~are \emph{minimax optimal for the class of variance-bounded MDPs}.
The proofs for this section are deferred to \Cref{sec:lb_proof}.

\begin{restatable}{theorem}{thmLowerBoundMvp} \label{thm:lower_bound_mvp}
Assume $S \ge 6$, $A \ge 2$, $H \ge 3 \floor{\log_2 (S - 2)}$ and $0 < \cV \le O (1)$.
For any algorithm $\boldpi$, there exists an MDP $\cM_{\boldpi}$ such that:

\begin{itemize}

\item It satisfies \Cref{asp:bounded_reward,asp:homogeneous};

\item $\Var_\tau^\Sigma, \maxvar = \Theta (\cV)$ for any possible trajectory $\tau$;

\item For $K \ge S A$, the expected regret of $\boldpi$ in $\cM_{\boldpi}$ after $K$ episodes satisfies
\end{itemize}
\vskip-0.5cm
\begin{align*}
    \bbE \left[ \left. \sum_{k = 1}^K (V_1^\star (s_1^k) - V_1^{\pi^k} (s_1^k)) \ \right|\ \cM_{\boldpi}, \boldpi \right] = \Omega ( \sqrt{\cV S A K} ).
\end{align*}
\end{restatable}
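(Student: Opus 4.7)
The plan is to lower-bound the minimax regret via an information-theoretic argument on a family of hard instances carefully designed to respect the normalization, homogeneity, and two variance constraints simultaneously.

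First, construct the base MDP. Designate a starting state $s_1$, an absorbing terminal $s_\infty$, and arrange the remaining $S-2$ states as a complete binary tree with $(S-2)/2$ internal nodes and $(S-2)/2$ leaves, of depth $L := \floor{\log_2(S-2)}$. Since $H \ge 3L$, the agent has enough time to reach any leaf and still take one reward-bearing action. At $s_1$ and every internal node, all $A$ actions transition deterministically to a fixed child according to a pre-assigned labeling (shared across instances). At every leaf $\ell$, action $a$ yields a scaled Bernoulli reward $c \cdot X$ with $X \sim \mathrm{Ber}(p_{\ell,a})$, after which the agent transits deterministically to $s_\infty$ (zero reward forever). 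Set $c = 2\sqrt{\cV}$. Index the instance family by a pair $(\ell^\star, a^\star)$ ranging over all $N := \tfrac{S-2}{2} \cdot A = \Theta(SA)$ leaf--action pairs, with $p_{\ell^\star, a^\star} = 1/2 + \Delta$ and $p_{\ell, a} = 1/2 - \Delta$ otherwise, for a gap $\Delta \in (0, 1/4]$ to be tuned; a null instance $\cM_0$ with $p_{\ell,a} = 1/2 - \Delta$ everywhere serves as the change-of-measure baseline.

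Next, verify the four structural properties. \Cref{asp:bounded_reward} holds: exactly one reward is collected per episode, bounded by $c \le 1$ whenever $\cV \le 1/4$, which covers $\cV \le O(1)$. \Cref{asp:homogeneous} holds since all transition and reward kernels depend only on $(s,a)$. For any realized trajectory $\tau$, all conditional variances vanish except at the unique leaf visit, giving $\Var_\tau^\Sigma = c^2 (1/4 - \Delta^2) = \Theta(\cV)$. For any policy $\pi$ and reachable state $s$, the total return from $s$ is a scaled Bernoulli, so $\Var_1^\pi(s) = \Theta(\cV)$, hence $\maxvar = \Theta(\cV)$.

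For the lower bound, apply the standard change-of-measure argument. Let $N_{\ell,a}^K$ be the number of episodes in which the algorithm takes action $a$ at leaf $\ell$. In instance $\cM_{(\ell^\star, a^\star)}$, every ``wrong'' leaf--action choice costs $2 c \Delta$, so $\bbE_{(\ell^\star, a^\star)}[\Regret(K)] \ge 2 c \Delta \cdot \bbE_{(\ell^\star, a^\star)}[K - N_{\ell^\star, a^\star}^K]$. Tensorization together with Pinsker's inequality gives $\bbE_{(\ell^\star, a^\star)}[N_{\ell^\star, a^\star}^K] - \bbE_0[N_{\ell^\star, a^\star}^K] \le K \sqrt{\tfrac{1}{2} \bbE_0[N_{\ell^\star, a^\star}^K] \cdot \kl(1/2-\Delta,\, 1/2+\Delta)}$, with $\kl = O(\Delta^2)$ (note the scaling $c$ drops out since it does not affect the Bernoulli label). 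Averaging over all $N$ instances and using $\sum_{\ell,a} \bbE_0[N_{\ell,a}^K] \le K$ together with Cauchy--Schwarz produces an average regret of at least $c \Delta K - O(c \Delta^2 K \sqrt{K/N})$. Choosing $\Delta = \Theta(\sqrt{N/K})$, valid because $K \ge SA = N$, yields an average regret of $\Omega(c \sqrt{N K}) = \Omega(\sqrt{\cV \cdot S A \cdot K})$; take $\cM_{\boldpi}$ to be an instance attaining at least this average.

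The main obstacle is the simultaneous enforcement of all three structural demands: total-reward normalization, strict time-homogeneity, and the variance condition holding on \emph{every} realized trajectory (not only in expectation or on the optimal policy). The deterministic tree-navigation phase is designed precisely so that $\Var_\tau^\Sigma$ collapses to a single nonzero contribution at the leaf visit, keeping $\Var_\tau^\Sigma$ and $\maxvar$ both pinned at $\Theta(\cV)$ regardless of the policy. The other subtle point is that the scaling $c$ enters the per-pull regret gap $2c\Delta$ linearly but is invisible to the Bernoulli KL, which is exactly the asymmetry that produces the $\sqrt{\cV}$ factor in the final bound.
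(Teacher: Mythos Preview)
Your proof is correct and follows the same information-theoretic template as the paper, but the hard instance differs in one structural choice: where the Bernoulli randomness lives. The paper (adapting Domingues et al.) keeps rewards deterministic and places the coin in the \emph{transition} from each leaf to a pair of absorbing states $(s_g,s_b)$, with reward $t=\Theta(\sqrt{\cV})$ at $s_g$; the single nonzero term in $\Var_\tau^\Sigma$ is then the transition variance $\bbV(P_{s,a},V_{h+1}^\star)$ at the leaf step. You instead keep all transitions deterministic and place the coin in the \emph{reward} at the leaf, so the single nonzero term is the reward variance $\bbV(R(s,a))$. Both designs reduce to the same $N=\Theta(SA)$-arm bandit with gap $\Theta(c\Delta)$ and per-pull KL $\Theta(\Delta^2)$, and both pin $\Var_\tau^\Sigma$ and $\maxvar$ at $\Theta(c^2)=\Theta(\cV)$ uniformly over trajectories and policies. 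Your variant is a bit more self-contained (no extra $s_g,s_b$ states; the scale $c$ visibly drops out of the KL), while the paper's variant plugs directly into the existing transition-KL decomposition lemma from Domingues et al. The change-of-measure step is also handled slightly differently---you use Pinsker on the trajectory law, the paper uses the $\kl$ contraction inequality of Garivier et al.---but these are interchangeable up to constants.
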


\begin{restatable}{theorem}{thmLowerBound} \label{thm:lower_bound}
Assume $S \ge 6$, $A \ge 2$, $H \ge 3 \floor{\log_2 (S - 2)}$ and $0 < \cV \le O (H^2)$.
For any algorithm $\boldpi$, there exists an MDP $\cM_{\boldpi}$ such that:

\begin{itemize}

\item $\Var_\tau^\Sigma, \maxvar = \Theta (\cV)$ for any possible trajectory $\tau$;

\item For $K \ge H S A$, the expected regret of $\boldpi$ in $\cM_{\boldpi}$ after $K$ episodes satisfies
\end{itemize}
\vskip-0.5cm
\begin{align*}
    \bbE \left[ \left. \sum_{k = 1}^K (V_1^\star (s_1^k) - V_1^{\pi^k} (s_1^k)) \ \right|\ \cM_{\boldpi}, \boldpi \right] = \Omega ( \sqrt{\cV H S A K} ).
\end{align*}

\end{restatable}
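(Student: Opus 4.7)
The plan is to establish Theorem \ref{thm:lower_bound} by adapting the hard-instance construction behind Theorem \ref{thm:lower_bound_mvp} to the time-inhomogeneous setting, together with a careful reward rescaling. The target rate $\Omega(\sqrt{\cV HSAK})$ is a factor of $\sqrt{H}$ larger than the homogeneous bounded-reward rate $\Omega(\sqrt{\cV SAK})$, and this factor reflects precisely the $H$-fold increase in effective state-action space when transitions and rewards are allowed to differ across steps. The high-level idea is therefore to ``unfold'' the homogeneous construction into a layered one whose effective state-action space has size $\Theta(HSA)$ instead of $\Theta(SA)$.

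Concretely, I would retain the binary-tree prefix of depth $O(\log S)$ used for Theorem \ref{thm:lower_bound_mvp} (this is where the hypothesis $H \ge 3\lfloor \log_2(S-2)\rfloor$ enters) to deterministically route the agent from $s_1$ to one of $\Theta(S)$ ``arm'' states, but now place an independent multi-armed-bandit subproblem at each $(h, s)$ pair across the remaining $\Theta(H)$ layers, indexed by a bit-vector $\xi \in \{0,1\}^{\Theta(HSA)}$ that encodes the identity of the optimal action at each $(h, s)$. Each reward is a Bernoulli of mean $\tfrac{1}{2} \pm \epsilon$ scaled by a factor $\lambda$, with $\epsilon$ and $\lambda$ calibrated so that both $\Var_\tau^\Sigma = \Theta(\cV)$ uniformly over trajectories and $\maxvar = \Theta(\cV)$. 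A standard Assouad / change-of-measure argument then forces an expected regret of order $\lambda \epsilon H \sqrt{HSAK}$, and optimizing $\epsilon$ under the per-episode KL budget delivers $\Omega(\sqrt{\cV HSAK})$.

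The main obstacle is controlling the variance tightly across the whole range $0 < \cV \le O(H^2)$. For $\cV \le O(H)$ the natural choice is i.i.d.\ Bernoulli rewards of scale $\lambda = \sqrt{\cV/H}$ per step, giving $\sum_h \bbV(R_h) = \Theta(\cV)$; but for $\cV$ near the upper envelope $H^2$ this fails because per-step rewards must lie in $[0,1]$ and can therefore contribute at most $\Theta(1)$ of variance each. To reach $\Theta(H^2)$ total variance one has to introduce cross-step correlations (for instance, a random payoff of magnitude $\Theta(H)$ whose outcome is determined at an early step but only paid out later) while still preserving $r_h \in [0,1]$. The construction must interpolate smoothly between these two regimes, and one must additionally verify that the $\bbV(P_{s_h, a_h, h}, V_{h+1}^\star)$ contributions to $\Var_\tau^\Sigma$ also scale as $\Theta(\cV)$, which imposes structural constraints on the transitions and on the optimal value-function gaps at each layer.
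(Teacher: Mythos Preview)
Your proposal takes a substantially different route from the paper, and in doing so creates a gap that the paper's argument avoids entirely.

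The paper does not build a new layered bandit construction with per-step Bernoulli rewards. It reuses the time-inhomogeneous hard instance of Domingues et al.\ (which already yields $\Omega(\sqrt{H^3 SAK})$ regret) essentially unchanged, except that the reward at the absorbing good state $s_g$ is scaled by a factor $t\in(0,1]$, i.e.\ $r_h(s_g,a)=t\,\II[h\ge \bar H+d+1]$. The key observation is that in this instance all the variance is generated by a \emph{single} probabilistic transition (at step $\bar H+d$, from a leaf to $s_g$ or $s_b$ with probability $\tfrac12\pm\varepsilon$), and the value gap at that transition is $V^\star_{\bar H+d+1}(s_g)-V^\star_{\bar H+d+1}(s_b)=t(H-\bar H-d)=\Theta(tH)$. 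Hence $\Var_\tau^\Sigma=\Theta(t^2H^2)$ for every trajectory, and the total-reward bound $O(tH)$ gives the matching upper bound on $\maxvar$. The regret scales linearly in $t$, so one obtains $\Omega(t\sqrt{H^3SAK})$, and choosing $t=\Theta(\sqrt{\cV}/H)$ gives $\Omega(\sqrt{\cV HSAK})$ with $\Var_\tau^\Sigma,\maxvar=\Theta(\cV)$ uniformly over the entire range $0<\cV\le O(H^2)$.

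The gap in your plan is exactly the part you yourself flag as the ``main obstacle'': for $\cV\in(\Theta(H),\Theta(H^2)]$, i.i.d.\ per-step Bernoulli rewards with $r_h\in[0,1]$ cannot produce more than $O(H)$ total variance, and you only gesture at ``cross-step correlations'' and an ``interpolation'' without giving a construction or verifying the regret rate survives. The insight you are missing is that \emph{transition} randomness between states with value gap $\Theta(tH)$ contributes $\Theta(t^2H^2)$ to both $\Var_\tau^\Sigma$ and $\maxvar$ from a single step; there is no need to accumulate variance across many reward draws, no need for two regimes, and no interpolation. Once you see this, the proof collapses to a two-line scaling argument on top of the existing $\Omega(\sqrt{H^3SAK})$ lower bound, and the difficulty you identified simply does not arise.
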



\section{Conclusion}

We systematically study variance-dependent regret bounds for MDPs by introducing new notions of variances, proposing model-based and model-free algorithms respectively, and providing regret lower bounds for the class of variance-bounded MDPs.
Our results improve upon the previous algorithms and achieves minimax optimal regrets for the class of variance-bounded MDPs.
Our model-based algorithm is minimax optimal for deterministic MDPs.
Finally, we identify some possible limit of current model-free algorithms.
One possible future direction is to find a new model-free algorithm with a constant regret for deterministic MDPs.

\section*{Acknowledgements}
SSD acknowledges the support of NSF IIS 2110170, NSF DMS 2134106, NSF CCF 2212261, NSF IIS 2143493, NSF CCF 2019844, NSF IIS 2229881.

}


\bibliography{ref}
\bibliographystyle{icml2023/icml2023}

\newpage
\appendix
\onecolumn

\section{Technical Lemmas}


\begin{lemma}[Hoeffding's Inequality]\label{lem:hoeffding}
Let $Z,Z_1,\ldots,Z_n$ be i.i.d. random variables with values in $[0,b]$ and let $\delta>0$.
Then we have
\begin{align*}
    \mathbb{P}\left[ \left|\mathbb{E}[Z]-\frac{1}{n}\sum_{i=1}^n Z_i\right|>b \sqrt{\frac{\ln(2/\delta)}{2 n}}\right]\le \delta.
\end{align*}
\end{lemma}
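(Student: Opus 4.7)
The plan is to apply the classical Chernoff method. First I would center the variables by setting $X_i := Z_i - \mathbb{E}[Z]$, so that the $X_i$ are i.i.d., mean zero, and take values in an interval of length $b$ (since $Z_i \in [0,b]$). For any $t > 0$, Markov's inequality applied to $e^{t \sum_i X_i}$ gives the Chernoff bound
\begin{align*}
    \mathbb{P}\left[\frac{1}{n}\sum_{i=1}^n X_i > \epsilon\right] \le e^{-tn\epsilon} \prod_{i=1}^n \mathbb{E}[e^{tX_i}],
\end{align*}
so the problem reduces to controlling the moment generating function of each $X_i$.

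The key ingredient is Hoeffding's lemma: for a zero-mean random variable $X$ almost surely contained in an interval of length $b$, one has $\mathbb{E}[e^{tX}] \le e^{t^2 b^2 / 8}$. I would prove this by using convexity of $x \mapsto e^{tx}$ to bound it above by the secant line on the containing interval, taking expectation (which kills the linear part by zero-mean-ness), and then showing via Taylor's theorem that the resulting function $\psi(t) = \log(\cdots)$ satisfies $\psi''(t) \le b^2/4$ uniformly, so that $\psi(t) \le t^2 b^2/8$. This is the only mildly technical step; the rest is algebra.

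Plugging Hoeffding's lemma into the Chernoff bound yields
\begin{align*}
    \mathbb{P}\left[\frac{1}{n}\sum_{i=1}^n X_i > \epsilon\right] \le \exp\!\left(-tn\epsilon + \frac{n t^2 b^2}{8}\right),
\end{align*}
and optimizing the right-hand side over $t$ (taking $t = 4\epsilon/b^2$) gives the one-sided tail bound $\exp(-2n\epsilon^2/b^2)$. Applying the same argument to $-X_i$ and taking a union bound yields the two-sided inequality
\begin{align*}
    \mathbb{P}\left[\left|\frac{1}{n}\sum_{i=1}^n Z_i - \mathbb{E}[Z]\right| > \epsilon\right] \le 2\exp\!\left(-\frac{2n\epsilon^2}{b^2}\right).
\end{align*}

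Finally, to recover the stated form, I would set $\epsilon = b\sqrt{\ln(2/\delta)/(2n)}$, which makes the right-hand side exactly $\delta$. No steps present any real obstacle for a result this classical; the verification of Hoeffding's lemma is the only place where one needs to be careful, and even there the standard convexity-plus-Taylor argument is routine.
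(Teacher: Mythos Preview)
Your proposal is the standard Chernoff--Hoeffding argument and is correct as written. Note, however, that the paper does not actually prove this lemma: it is listed among the preliminary technical lemmas and simply stated without proof, as is customary for such a classical result. There is therefore nothing to compare against; your sketch is a perfectly good proof of the quoted statement.
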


\begin{lemma}[Bennett's Inequality, Theorem\,3 in \citet{paper:bennett}]\label{lem:bennett}
Let $Z,Z_1,\ldots,Z_n$ be i.i.d. random variables with values in $[0,b]$ and let $\delta>0$. Define $\mathbb{V}[Z]=\mathbb{E}[(Z-\mathbb{E}[Z])^2]$. Then we have
\begin{align*}
    \mathbb{P}\left[ \left|\mathbb{E}[Z]-\frac{1}{n}\sum_{i=1}^n Z_i\right|>\sqrt{\frac{2\mathbb{V}[Z] \ln(2/\delta)}{n}}+\frac{b \ln(2/\delta)}{n}\right]\le \delta.
\end{align*}
\end{lemma}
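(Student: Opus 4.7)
The plan is to prove the inequality by the classical exponential-moment (Chernoff) method, which is the standard route for all Bennett/Bernstein-type tail bounds. Write $Y_i := Z_i - \mathbb{E}[Z]$, so that $Y_1, \ldots, Y_n$ are i.i.d., mean-zero, satisfy $Y_i \le b$ almost surely, and share variance $v := \mathbb{V}[Z]$. For every $\lambda > 0$, Markov's inequality applied to $e^{\lambda \sum_i Y_i}$ yields the Chernoff bound
\begin{align*}
    \Pr\!\left[\frac{1}{n}\sum_{i=1}^n Y_i > t\right] \le e^{-\lambda n t}\bigl(\mathbb{E}[e^{\lambda Y_1}]\bigr)^n.
\end{align*}

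The key step is to bound the MGF of $Y_1$. Using the elementary inequality $e^{x} \le 1 + x + \frac{e^{\lambda b} - 1 - \lambda b}{(\lambda b)^2}\,x^2$ valid for $x \le \lambda b$, together with $\mathbb{E}[Y_1] = 0$ and $\mathbb{E}[Y_1^2] = v$, I would obtain
\begin{align*}
    \mathbb{E}[e^{\lambda Y_1}] \le 1 + \frac{v(e^{\lambda b} - 1 - \lambda b)}{b^2} \le \exp\!\left(\frac{v(e^{\lambda b} - 1 - \lambda b)}{b^2}\right).
\end{align*}
Substituting back, taking logarithms, and optimizing $\lambda$ produces Bennett's original bound
\begin{align*}
    \Pr\!\left[\sum_{i=1}^n Y_i > nt\right] \le \exp\!\left(-\frac{nv}{b^2}\,h\!\left(\frac{bt}{v}\right)\right), \qquad h(u) := (1+u)\log(1+u) - u.
\end{align*}

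To convert this into the stated additive deviation, I would then invoke the elementary inequality $h(u) \ge u^2/(2 + 2u)$ for $u \ge 0$ (easily verified by comparing derivatives at $0$ and using convexity of the difference), giving the Bernstein-type estimate $\exp\bigl(-nt^2/(2v + 2bt)\bigr)$. Setting this equal to $\delta/2$ and solving, I would verify that any $t \ge \sqrt{2v\ln(2/\delta)/n} + b\ln(2/\delta)/n$ makes the exponent at most $-\ln(2/\delta)$; the cleanest verification is via the splitting $\sqrt{\alpha+\beta} \le \sqrt{\alpha}+\sqrt{\beta}$, which separates the quadratic contribution from the linear one. Finally, applying the same argument to $-Y_i$ (which lies in $[\mathbb{E}[Z]-b, \mathbb{E}[Z]]$ and has identical variance) and a union bound over the two tails accounts for the factor of $2$ inside $\ln(2/\delta)$.

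The main obstacle is bounding the MGF with the sharp Bennett constants rather than resorting immediately to a sub-Gaussian surrogate such as Hoeffding, which would lose the variance dependence entirely. Once the MGF bound is established the inversion is straightforward algebra. Since the excerpt already cites Bennett's Theorem 3 as a reference, the most economical route is to invoke Bennett's bound as a black box and only carry out the $h(u) \ge u^2/(2+2u)$ inversion explicitly — this directly yields the additive form with the claimed constants, and no further concentration machinery is required.
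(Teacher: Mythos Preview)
The paper does not prove this lemma at all; it is quoted verbatim as Theorem~3 of \citet{paper:bennett} and used as a black box. So there is no ``paper's own proof'' to compare against --- your proposal is being measured against the classical argument, and on that score it is the standard Chernoff route and is essentially correct.

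There is one genuine slip in the inversion step. From the lower bound $h(u)\ge u^2/(2+2u)$ you obtain the Bernstein-type tail $\exp\bigl(-nt^2/(2v+2bt)\bigr)$, and you claim that the choice $t=\sqrt{2v\ln(2/\delta)/n}+b\ln(2/\delta)/n$ drives this below $\delta/2$. It does not: writing $L=\ln(2/\delta)$, a direct substitution gives
\[
nt^2 - 2L(v+bt) \;=\; -\,\frac{b^2L^2}{n}\;<\;0,
\]
so the exponent falls short of $-L$. The weak comparison $h(u)\ge u^2/(2+2u)$ only delivers the additive term $2bL/n$, not $bL/n$. The fix is standard: use the sharper (and equally elementary) bound $h(u)\ge u^2/\bigl(2(1+u/3)\bigr)$, which yields the tail $\exp\bigl(-nt^2/(2v+2bt/3)\bigr)$. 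Repeating the substitution then gives
\[
nt^2 - 2L\!\left(v+\tfrac{bt}{3}\right) \;=\; \tfrac{4}{3}\,bL\sqrt{\tfrac{2vL}{n}} + \tfrac{1}{3}\,\frac{b^2L^2}{n}\;\ge\;0,
\]
and the stated constants go through. Everything else in your outline (the MGF bound via the monotonicity of $(e^x-1-x)/x^2$, the optimization over $\lambda$, and the two-sided union bound) is fine.
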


\begin{lemma}[Theorem\,4 in \citet{paper:bennett}]\label{lem:bennett_empirical}
Let $Z,Z_1,\ldots,Z_n\ (n\ge 2)$ be i.i.d. random variables with values in $[0,b]$ and let $\delta>0$. Define $\bar{Z}=\frac{1}{n}Z_i$ and $\hat V_n=\frac{1}{n}\sum_{i=1}^n (Z_i-\bar Z)^2$. Then we have
\begin{align*}
    \mathbb{P}\left[\left|\mathbb{E}[Z]-\frac{1}{n}\sum_{i=1}^n Z_i\right|>\sqrt{\frac{2\hat V_n \ln(2/\delta)}{n-1}}+\frac{7 b \ln(2/\delta)}{3(n-1)}\right]\le \delta.
\end{align*}
\end{lemma}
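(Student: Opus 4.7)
The plan is to derive the empirical Bennett bound by combining the (true-variance) Bennett inequality, which appears as the immediately preceding lemma, with an independent high-probability bound that replaces $\mathbb{V}[Z]$ by the sample variance $\hat V_n$. Applying Bennett with confidence $\delta/2$ gives that with probability at least $1-\delta/2$,
\begin{align*}
    \left|\mathbb{E}[Z]-\bar Z\right| \leq \sqrt{\frac{2\mathbb{V}[Z]\ln(4/\delta)}{n}} + \frac{b\ln(4/\delta)}{n},
\end{align*}
so the problem reduces to controlling $\sqrt{\mathbb{V}[Z]}$ in terms of $\sqrt{\hat V_n}$ up to a residual of order $b\sqrt{\ln(1/\delta)/n}$.

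For the second step I would establish the Maurer--Pontil style inequality: with probability at least $1-\delta/2$,
\begin{align*}
    \sqrt{\mathbb{V}[Z]} \leq \sqrt{\tfrac{n}{n-1}\hat V_n} + b\sqrt{\frac{2\ln(2/\delta)}{n-1}}.
\end{align*}
To prove this, view $g(Z_1,\ldots,Z_n) := \sqrt{\tfrac{n}{n-1}\hat V_n}$ as a function of the i.i.d.\ sample. A direct calculation shows that changing one coordinate moves $g$ by at most $b/\sqrt{n-1}$, so McDiarmid's bounded-differences inequality yields one-sided concentration of $g$ around its mean at rate $b\sqrt{\ln(2/\delta)/(n-1)}$. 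Jensen's inequality combined with the identity $\mathbb{E}\hat V_n = \tfrac{n-1}{n}\mathbb{V}[Z]$ gives $\mathbb{E}[g]\leq \sqrt{\mathbb{V}[Z]}$, and rearranging produces the claim.

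Plugging the variance bound back into Bennett and using $\sqrt{x+y}\leq \sqrt{x}+\sqrt{y}$ together with AM--GM to absorb the cross term $b\sqrt{\ln(4/\delta)\ln(2/\delta)/(n(n-1))}$ into the linear-in-$1/(n-1)$ residual produces a bound of the stated form $\sqrt{2\hat V_n \ln(c/\delta)/(n-1)} + c'b\ln(c/\delta)/(n-1)$. The main obstacle is the tightness of the constants: the naive union bound inflates $\ln(2/\delta)$ to $\ln(4/\delta)$ and only yields a residual constant larger than $7/3$. To recover the precise statement one should bypass the union bound and instead exploit Bennett's self-bounded argument directly on the pair $(\bar Z,\hat V_n)$, treating $\hat V_n$ as a $U$-statistic whose Efron--Stein variance is itself controlled by $\mathbb{V}[Z]$; this coupling is what collapses the two probabilities into a single $\ln(2/\delta)$ and sharpens the residual to $7b\ln(2/\delta)/(3(n-1))$.
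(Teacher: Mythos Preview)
The paper does not prove this lemma; it is quoted verbatim as Theorem~4 of the cited reference \citep{paper:bennett} and used as a black box. So there is no ``paper's own proof'' to compare against.

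That said, your sketch is essentially the argument of the cited source (Maurer--Pontil): apply the true-variance Bennett bound, then control $\sqrt{\mathbb{V}[Z]}$ by the sample standard deviation via a bounded-differences/self-bounding inequality on $g(Z_1,\ldots,Z_n)=\sqrt{\tfrac{n}{n-1}\hat V_n}$, and union-bound the two events. Your identification of the bounded-differences constant $b/\sqrt{n-1}$ and the use of $\mathbb{E}\hat V_n=\tfrac{n-1}{n}\mathbb{V}[Z]$ are correct. The only soft spot is the final paragraph: the exact constants $\ln(2/\delta)$ and $7/3$ in the cited theorem come from careful bookkeeping in the original derivation (and a slightly sharper self-bounding inequality than raw McDiarmid), not from a fundamentally different ``coupled'' argument on the pair $(\bar Z,\hat V_n)$ as you suggest. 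Since the present paper only invokes the lemma and never re-derives it, the precise constants are immaterial here---any bound of the shape $\sqrt{c_1\hat V_n\iota/n}+c_2 b\iota/n$ would serve equally well in every downstream use (\Cref{lem:good_events,lem:mf_opt}).
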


\begin{lemma}[Lemma\,11 in \cite{paper:model_free_rl}]\label{lem:martingale_bound_primal}
Let $(M_n)_{n\ge 0}$ be a martingale such that $M_0=0$ and $|M_n-M_{n-1}|\le c$ for some $c>0$ and any $n\ge 1$. Let $\mathrm{Var}_n=\sum_{k=1}^n \mathbb{E}[(M_k-M_{k-1})^2|\mathcal{F}_{k-1}]$ for $n\ge 0$, where $\mathcal{F}_k=\sigma(M_1,\ldots,M_k)$. Then for any positive integer $n$ and any $\epsilon,\delta>0$, we have that
\begin{align*}
    \mathbb{P}\left[|M_n|\ge 2\sqrt{2\mathrm{Var}_n\ln(1/\delta)}+2\sqrt{\epsilon\ln(1/\delta)}+2c\ln(1/\delta)\right]\le 2\left(\log_2\left(\frac{nc^2}{\epsilon}\right)+1\right)\delta.
\end{align*}
\end{lemma}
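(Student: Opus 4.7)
The plan is to reduce the statement to a standard Freedman-type martingale concentration inequality via a \emph{peeling} argument, since the essential difficulty is that $\mathrm{Var}_n$ is itself a random variable rather than a deterministic quantity that can be plugged into a fixed-variance bound. I would first invoke Freedman's inequality in the following form: for any fixed $V > 0$ and $\delta \in (0,1)$,
\[
\mathbb{P}\left[|M_n| \ge \sqrt{2V \ln(1/\delta)} + (2c/3)\ln(1/\delta),\ \mathrm{Var}_n \le V\right] \le 2\delta.
\]

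The main step is to stratify over the possible values of $\mathrm{Var}_n$. Since $|M_k - M_{k-1}| \le c$, we always have $\mathrm{Var}_n \le nc^2$ deterministically, so I would define geometrically spaced thresholds $V_0 := \epsilon$ and $V_i := 2^i \epsilon$ for $i = 1, \ldots, I$, where $I := \lceil \log_2(nc^2/\epsilon)\rceil$ is the smallest integer with $V_I \ge nc^2$. Applying Freedman's bound at each level $V_i$ and taking a union bound over the $I + 1 \le \log_2(nc^2/\epsilon) + 1$ levels yields that, with probability at least $1 - 2(\log_2(nc^2/\epsilon) + 1)\delta$, the following simultaneous implication holds: for every $i$, if $\mathrm{Var}_n \le V_i$ then $|M_n| \le \sqrt{2 V_i \ln(1/\delta)} + (2c/3)\ln(1/\delta)$.

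On this good event I would split into two cases based on the realized value of $\mathrm{Var}_n$. If $\mathrm{Var}_n \le \epsilon$, the $V_0$ bound gives $|M_n| \le \sqrt{2\epsilon\ln(1/\delta)} + (2c/3)\ln(1/\delta)$, which is absorbed into $2\sqrt{\epsilon\ln(1/\delta)} + 2c\ln(1/\delta)$. Otherwise let $i^\star \ge 1$ be the smallest index with $\mathrm{Var}_n \le V_{i^\star}$; by minimality $V_{i^\star} \le 2\,\mathrm{Var}_n$, so
\[
|M_n| \le \sqrt{2 V_{i^\star} \ln(1/\delta)} + (2c/3)\ln(1/\delta) \le 2\sqrt{\mathrm{Var}_n \ln(1/\delta)} + 2c\ln(1/\delta),
\]
which is strictly stronger than the claimed bound (the factor $\sqrt{2}$ inside the first square root in the lemma is additional slack). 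Combining the two cases gives the inequality stated in the lemma.

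The main obstacle is purely bookkeeping: one must calibrate the peeling grid so that the number of levels matches the logarithmic factor $\log_2(nc^2/\epsilon) + 1$ in the probability statement, and one must verify that the extra $2\sqrt{\epsilon\ln(1/\delta)}$ slack in the conclusion cleanly absorbs the small-variance regime $\mathrm{Var}_n < \epsilon$ without requiring a separate union-bound term. Beyond invoking Freedman's inequality and implementing this two-case peeling, the argument is essentially algebraic.
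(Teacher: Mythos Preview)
The paper does not prove this lemma; it is quoted directly as Lemma~11 from the cited reference and used as a black box throughout. Your Freedman-plus-peeling argument is the standard and correct route to such data-dependent martingale bounds, and your two-case split (small variance handled by the $2\sqrt{\epsilon\ln(1/\delta)}$ slack, large variance handled by the doubling grid) is exactly how the $\epsilon$ term enters. The only point to watch is the level count: with $I=\lceil\log_2(nc^2/\epsilon)\rceil$ you get $I+1\le\log_2(nc^2/\epsilon)+2$ levels in general, not $+1$; to hit the stated failure probability on the nose you can instead take $I=\lfloor\log_2(nc^2/\epsilon)\rfloor$ and cover the residual slice $(2^I\epsilon,\,nc^2]$ using the spare factor of $\sqrt{2}$ in the lemma's leading term. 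As you already anticipated, this is pure bookkeeping and the argument is sound.
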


\begin{lemma} [Lemma\,10 in \citet{paper:horizon_free}]\label{lem:martingale_conc_mean}
Let $X_1, X_2, \ldots$ be a sequence of random variables taking values in $[0, l]$.
Define $\cF_k = \sigma (X_1, X_2, \ldots, X_{k - 1})$ and $Y_k = \bbE [X_k\ |\ \cF_k]$ for $k \ge 1$.
For any $\delta > 0$, we have that
\begin{align*}
    &\bbP \left[ \exists n, \sum_{k = 1}^n X_k \ge 3 \sum_{k = 1}^n Y_k + l \ln (1 / \delta) \right] \le \delta, \\
    &\bbP \left[ \exists n, \sum_{k = 1}^n Y_k \ge 3 \sum_{k = 1}^n X_k + l \ln (1 / \delta) \right] \le \delta.
\end{align*}
\end{lemma}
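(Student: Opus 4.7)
The plan is to derive both one-sided uniform deviations via nonnegative exponential supermartingales combined with Ville's (Doob's) maximal inequality applied at level $1/\delta$. The two directions are dual and both exploit only two ingredients: $X_k \in [0,l]$ and $Y_k = \bbE[X_k \mid \cF_k]$. Throughout, $\lambda > 0$ is a free tuning parameter to be chosen at the end.

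For the first inequality, I would use the convexity bound $e^{\lambda x} \le 1 + x(e^{\lambda l}-1)/l$ valid on $x \in [0,l]$. Taking conditional expectation yields $\bbE[e^{\lambda X_k}\mid\cF_k] \le \exp\bigl(Y_k(e^{\lambda l}-1)/l\bigr)$. Consequently,
\begin{align*}
Z_n := \exp\!\Big(\lambda \sum_{k=1}^n X_k - \tfrac{e^{\lambda l}-1}{l} \sum_{k=1}^n Y_k\Big)
\end{align*}
is a nonnegative supermartingale with $Z_0 = 1$. Ville's inequality gives $\bbP[\sup_n Z_n \ge 1/\delta] \le \delta$, which is equivalent to saying that, on the complementary event, $\sum_{k=1}^n X_k \le \tfrac{e^{\lambda l}-1}{\lambda l}\sum_{k=1}^n Y_k + \ln(1/\delta)/\lambda$ holds for all $n$ simultaneously. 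Setting $\lambda = 1/l$ yields coefficient $e-1 < 3$ on $\sum Y_k$ and $1/\lambda = l$ on $\ln(1/\delta)$, which is strictly stronger than the first stated bound.

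For the second inequality, I would use the dual endpoint bound $e^{-\lambda x} \le 1 - x(1-e^{-\lambda l})/l$ on $[0,l]$, giving $\bbE[e^{-\lambda X_k}\mid\cF_k] \le \exp\bigl(-Y_k(1-e^{-\lambda l})/l\bigr)$. The analogous nonnegative supermartingale
\begin{align*}
Z'_n := \exp\!\Big(\tfrac{1-e^{-\lambda l}}{l}\sum_{k=1}^n Y_k - \lambda \sum_{k=1}^n X_k\Big)
\end{align*}
combined with Ville's inequality produces, with probability at least $1-\delta$ and uniformly in $n$, the bound $\sum Y_k \le \tfrac{\lambda l}{1-e^{-\lambda l}}\sum X_k + \tfrac{l\ln(1/\delta)}{1-e^{-\lambda l}}$. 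Taking $\lambda l$ sufficiently large drives $1-e^{-\lambda l} \to 1$ while keeping $\lambda l/(1-e^{-\lambda l})$ within the slack factor $3$; equivalently, one can start from the Bernstein-style estimate $e^{-y} \le 1 - y + y^2/2$ for $y \ge 0$, which gives $\bbE[e^{-\lambda X_k}\mid\cF_k] \le \exp\bigl(-\lambda Y_k(1-\lambda l/2)\bigr)$ whenever $\lambda l < 2$, and then choose $\lambda$ inside this window to trade off the two coefficients.

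The only real obstacle is bookkeeping the constants to land inside the stated form. The first direction falls out cleanly at $\lambda = 1/l$. The second direction requires balancing the linear coefficient on $\sum X_k$ against the coefficient on $l\ln(1/\delta)$; the generous slack factor of $3$ in the statement absorbs the residual overhead of the one-sided MGF estimate. No concentration tool beyond Ville's maximal inequality is needed, and because the two statements are asserted separately at confidence $\delta$ each, no union bound is required between them.
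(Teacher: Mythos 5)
Your treatment of the first inequality is correct and complete: the chord bound $e^{\lambda x}\le 1+x(e^{\lambda l}-1)/l$ on $[0,l]$, the resulting nonnegative supermartingale, Ville's maximal inequality, and the choice $\lambda=1/l$ give coefficient $e-1<3$ on $\sum_k Y_k$ and exactly $l\ln(1/\delta)$ on the deviation term. (The paper supplies no proof of its own here --- the lemma is imported verbatim from the cited reference --- so I am assessing your argument on its merits; this is the standard route.)

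The second inequality is where your argument has a genuine gap. Your supermartingale yields, uniformly in $n$, $\sum_k Y_k \le \tfrac{u}{1-e^{-u}}\sum_k X_k + \tfrac{1}{1-e^{-u}}\, l\ln(1/\delta)$ with $u=\lambda l$, and the two coefficients cannot be tuned independently: keeping $u/(1-e^{-u})\le 3$ forces $u\le 2.83$, at which point $1/(1-e^{-u})\approx 1.063>1$, while letting $u\to\infty$ to drive $1/(1-e^{-u})\to 1$ blows up the multiplier of $\sum_k X_k$. The ``slack factor of $3$'' cannot absorb the residue, because the residue lands on the $l\ln(1/\delta)$ term, whose stated coefficient is exactly $1$ with no slack; your Bernstein fallback is strictly worse (it gives $\tfrac{2}{u(2-u)}\ge 2$ there). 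What you actually prove is $\sum_k Y_k\le 3\sum_k X_k + 1.07\, l\ln(1/\delta)$. This is not just loose bookkeeping on your end: for $X_k$ i.i.d.\ Bernoulli$(p)$ with $p$ small, the event $\{\exists n:\ np-3\sum_{k\le n}X_k\ge L\}$ has probability of order $e^{-\theta^\star L}$ where $\theta^\star\approx 0.94$ is the positive root of $\theta=1-e^{-3\theta}$, so the additive constant $1$ in the second display is not attainable by any argument and your $1/(1-e^{-u})$ is essentially the right constant. The discrepancy is harmless for every invocation of the lemma in this paper (it is only ever used inside $O(\cdot)$), but as a proof of the statement as literally written, the second half does not close.
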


\begin{lemma}[Lemma\,30 in \citet{paper:lcb_ssp}] \label{lem:var_xy}
For any two random variables $X, Y$, we have
\begin{align*}
    \bbV (X Y) \le 2 \bbV (X) (\sup \abs{Y})^2 + 2 (\bbE [X])^2 \bbV (Y).
\end{align*}
Consequently, $\sup \abs{X} \le C$ implies $\bbV (X^2) \le 4 C^2 \bbV (X)$.
\end{lemma}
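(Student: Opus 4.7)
The plan is to leverage the algebraic identity
\[
XY - \bbE[X]\,\bbE[Y] \;=\; (X - \bbE[X])\,Y \;+\; \bbE[X]\,(Y - \bbE[Y]),
\]
which can be checked by expanding the right-hand side. The key observation is the variational characterization $\bbV(XY) = \min_{c \in \bbR} \bbE[(XY - c)^2]$, so it suffices to upper bound $\bbE[(XY - \bbE[X]\bbE[Y])^2]$ rather than the more rigid $\bbE[(XY - \bbE[XY])^2]$. First I would apply the elementary inequality $(a+b)^2 \le 2 a^2 + 2 b^2$ pointwise to the square of the displayed identity, then take expectations to obtain
\[
\bbV(XY) \;\le\; 2\, \bbE\!\left[(X-\bbE[X])^2 Y^2\right] \;+\; 2\, (\bbE[X])^2\, \bbV(Y).
\]
Bounding $Y^2 \le (\sup|Y|)^2$ inside the first expectation and pulling it out of the integral yields the first inequality of the lemma.

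For the ``consequently'' statement, I would specialize to $Y = X$: since $\sup|X| \le C$ forces both $\sup|X| \le C$ and $|\bbE[X]| \le \bbE[|X|] \le C$, each of the two terms on the right-hand side of the first inequality is at most $2 C^2\, \bbV(X)$, summing to the claimed $4 C^2\, \bbV(X)$. The whole argument is elementary; the only real subtlety is recognizing that one should split $XY - \bbE[X]\bbE[Y]$ rather than $XY - \bbE[XY]$ and then invoke the variance's optimality among squared deviations from constants, and there is no substantive obstacle beyond that.
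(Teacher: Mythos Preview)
Your argument is correct: the algebraic split $XY-\bbE[X]\bbE[Y]=(X-\bbE[X])Y+\bbE[X](Y-\bbE[Y])$, combined with $\bbV(XY)\le\bbE[(XY-c)^2]$ for $c=\bbE[X]\bbE[Y]$ and the inequality $(a+b)^2\le 2a^2+2b^2$, gives exactly the first bound, and the specialization $Y=X$ with $|\bbE[X]|\le C$ yields the consequence. The paper does not supply its own proof of this lemma---it is stated in the technical-lemmas appendix with a citation to Lemma~30 of \citet{paper:lcb_ssp}---so there is nothing to compare against beyond noting that your proof is a clean and complete justification.
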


\begin{lemma}[Bhatia–Davis Inequality] \label{lem:bhatia_davis}
For any random variable $X$, $\bbV (X) \le (\sup X - \bbE[X])(\bbE[X] - \inf X)$.
\end{lemma}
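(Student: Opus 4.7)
The plan is to prove the inequality by exploiting the elementary observation that, almost surely, $(M - X)(X - m) \ge 0$ where $m := \inf X$ and $M := \sup X$, and then taking expectations.

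First I would set $\mu := \mathbb{E}[X]$, $m := \inf X$, $M := \sup X$, and note that since $X$ takes values in $[m, M]$ almost surely, the quadratic $(M - X)(X - m)$ is nonnegative with probability one. Expanding yields
\begin{align*}
    X^2 \le (M + m) X - Mm
\end{align*}
almost surely. Taking expectations on both sides gives $\mathbb{E}[X^2] \le (M + m)\mu - Mm$.

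Then I would subtract $\mu^2$ from both sides to obtain
\begin{align*}
    \mathbb{V}(X) = \mathbb{E}[X^2] - \mu^2 \le (M + m)\mu - Mm - \mu^2,
\end{align*}
and the right-hand side factors as $(M - \mu)(\mu - m)$, which is precisely the claimed bound.

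There is essentially no obstacle here; the only subtlety worth flagging is the unbounded case. If either $\sup X = +\infty$ or $\inf X = -\infty$, the right-hand side is $+\infty$ and the inequality is vacuous, so no additional hypothesis is needed. The proof therefore amounts to two lines once the pointwise inequality $(M - X)(X - m) \ge 0$ is observed.
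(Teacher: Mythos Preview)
Your proof is correct and is in fact the standard argument for the Bhatia--Davis inequality. The paper does not supply its own proof of this lemma; it simply states the inequality as a known technical result, so there is nothing to compare against.
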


\section{Missing Proofs}

\subsection{Justification for \Cref{def:V_star}} \label{sec:justify_V_star}

Let $X_h^\pi (s)$ denote the random variable of cumulative reward starting from $s$ as the $h$-th step.
Clearly, $V_h^\pi (s) = \bbE [X_h^\pi (s)]$.
We denote $\Var_h^\pi (s) := \bbV (X_h^\pi (s))$.
Since $\pi \in \Pi$ is deterministic, let $a = \pi_h (s)$.
Law of total variance states that $\bbV (Y) = \bbE[\bbV (Y | X)] + \bbV (\bbE [Y | X] )$, so
\begin{align*}
    \Var_h^\pi (s)
    &= \bbE_{r \sim R_h (s, a), s'\sim P_{s, a, h}} [\bbV (r + X_{h+1}^\pi (s'))] + \bbV_{r \sim R_h (s, a), s'\sim P_{s, a, h}} (\bbE [r + X_{h+1}^\pi (s')]) \\
    &= \bbE_{r \sim R_h (s, a), s'\sim P_{s, a, h}} [\bbV (X_{h+1}^\pi (s'))] + \bbV_{r \sim R_h (s, a), s'\sim P_{s, a, h}} (r + \bbE [X_{h+1}^\pi (s')]) \\
    &= \bbE_{s'\sim P_{s, a, h}} [\Var_{h + 1}^\pi (s')] + \bbV_{r \sim R_h (s, a)} (r) + \bbV_{s'\sim P_{s, a, h}} (V_{h + 1}^\pi (s')) \\
    &= P_{s, a, h} \Var_{h + 1}^\pi + \bbV (R_h (s, a)) + \bbV (P_{s, a, h}, V_{h + 1}^\pi).
\end{align*}
Let $d_h^\pi \in \Delta (\cS) (\cdot | s)$ denote the state visitation distribution at the $h$-th step conditioned on the first state being $s$, i.e., 
\begin{align*}
    d_h^\pi (s' | s) := \bbP_\pi [s_h = s'\ |\ s_1 = s].
\end{align*}
By induction, we can prove that (with $a_h = \pi_h (s_h)$)
\begin{align*}
    \Var_1^\pi (s)
    &= \sum_{h = 1}^H \bbE_{s_h \sim d_h^\pi (\cdot | s)} [\bbV (R_h (s_h, a_h)) + \bbV (P_{s_h, a_h, h}, V_{h + 1}^\pi)] \\
    &= \bbE_\pi \left[ \left.\sum_{h = 1}^H \left( \bbV (R_h (s_h, a_h)) + \bbV (P_{s_h, a_h, h}, V_{h + 1}^\pi) \right)\ \right| \ s_1 = s\right].
\end{align*}

\subsection{Model-Based Algorithm: \mvpalt~(\Cref{alg:mvp_alt})} \label{sec:mb_proof}

\paragraph{Summary of notations.}
Let $\sahk$ and $r_h^k$ denote the state, action and reward at the $h$-th step of the $k$-th episode.
Let $V_h^k (s)$, $Q_h^k (s, a)$, $n^k (s, a)$ and $\wh{P}_{s, a, s'}^k$ denote $V_h (s)$, $Q_h (s, a)$, $n (s, a)$ and $\wh{P}_{s, a, s'}$ at the beginning of the $k$-th episode.

Let $\cK$ be the set of indexes of episodes in which no update is triggered.
By the update rule, it is obvious that $\abs{\cK^C} \le S A (\log_2 (K H) + 1)$.
Let $h_0(k)$ be the first time an update is triggered in the $k$-th episode if there is an update in this episode and otherwise $H + 1$.
Define $\cX_0 := \{(k, h_0 (k))\ |\ k \in \cK^C\}$ and $\cX := \{(k, h)\ |\ k \in \cK^C, h_0(k) + 1 \le h\le H\}$.

Let $I(k, h) := \II [(k, h) \not\in \cX]$.
We use the ``check'' notation to denote the original value timed with $I(k, h)$, e.g., $\wc{V}_h^k := V_h^k I (k, h)$ and $\wc{\beta}_h^k := \beta_h^k I (k, h)$.

\begin{proof} [Proof of \Cref{thm:main_mb}]
We first run \mvpalt~(\Cref{alg:mvp_alt}) with $\iota = 99 (\ln (H S A K / \delta) + 1)$ which is large enough for all the probabilistic inequalities to hold.
This choice will make the success probability be $1 - \poly (S, A, H, K, \iota) \delta$.
The lemmas are also proved assuming this choice of $\iota$ at first.

Based on Lemma\,7 in \citet{paper:mvp}, by \Cref{lem:opt,lem:bellman_error} we have that
\begin{align*}
    \Regret (K)
    \le \underbrace{\sumkh (P_{\sahk} \wc{V}_{h + 1}^k - \wc{V}_{h + 1}^k (s_{h + 1}^k))}_{=: M_1} + \underbrace{\sumkh \wc{\beta}_h^k (\sahk)}_{=: M_2} + \underbrace{\sum_{k = 1}^K \left( \sum_{h = 1}^H r (\sahk) I(k, h) - V_1^{\pi^k} (s_1^k) \right)}_{=: M_3} + \abs{\cK^C}.
\end{align*}
We utilize Equation\,(39) in \citet{paper:mvp}:
for any non-negative sequence $(w_h^k)_{k \in [K], h \in [H]}$,
\begin{align*}
    \sumkh \frac{I (k, h) }{n^k (\sahk)} \le O ( S A \iota ), \quad \sumkh \sqrt{\frac{w_h^k I (k, h)}{n^k (\sahk)}} \le O \left( \sqrt{ S A \iota \sumkh w_h^k I(k, h)} + S A \iota \right).
\end{align*}
Thus by \Cref{lem:bellman_error} we have
\begin{align*}
    M_2
    &\le O \left(\rule{0cm}{1.3cm}\right. \sqrt{S A \underbrace{\sumkh (\bbV (R (\sahk)) + \bbV (P_{\sahk}, V_{h + 1}^k)) I (k, h)}_{\approx: M_4} } \iota + \sqrt{\Gamma S A \underbrace{\sumkh \bbV (P_{\sahk}, V_{h + 1}^k - V_{h + 1}^\star) I (k, h)}_{\approx: M_5} } \iota \\
    &\qquad\quad + \sqrt{S A \underbrace{\sumkh (\bbV (R (\sahk)) + \bbV (P_{\sahk}, V_{h + 1}^\star)) I (k, h)}_{\approx: M_6} } \iota + \Gamma S A \iota^2 \left.\rule{0cm}{1.3cm}\right).
\end{align*}
We need to substitute $I(k, h)$ with $I(k, h + 1)$ to get the precise definition of $M_4, M_5$ and $M_6$.
Such substitution only introduces an error of $O(\abs{\cK^C})$.
By $\bbV (X + Y) \le 2 \bbV (X) + 2 \bbV (Y)$,
\begin{align*}
    M_6 \le O(M_4 + M_5), \quad \textup{and} \quad M_4 \le O(M_5 + M_6).
\end{align*}

\begin{itemize}
    \item If we use the former relation, $M_2 \le O (\sqrt{S A M_4} \iota + \sqrt{\Gamma S A M_5} \iota + \Gamma S A \iota^2)$.
    Plugging in \Cref{lem:M4} and \Cref{lem:M5}, we have
    \begin{align*}
        M_2 \le O \left( \sqrt{\Gamma S A M_2} \iota + \sqrt{S A \sum_{k = 1}^K \Var^{\pi^k}} \iota + \Gamma S A \iota^2 \right).
    \end{align*}
    Solving the inequality gives
    \begin{align*}
        M_2 \le O \left( \sqrt{S A \sum_{k = 1}^K \Var^{\pi^k}} \iota + \Gamma S A \iota^2 \right).
    \end{align*}
    By Lemma\,8 of \citet{paper:mvp}, $M_1 \le O (\sqrt{M_4 \iota} + \iota)$.
    Further plugging in \Cref{lem:M3} gives
    \begin{align*}
        \Regret (K)
        \le M_1 + M_2 + M_3 + \abs{\cK^C}
        \le O \left( \sqrt{S A \sum_{k = 1}^K \Var^{\pi^k}} \iota + \Gamma S A \iota^2 \right)
        \le O (\sqrt{\maxvar S A K} \iota + \Gamma S A \iota^2).
    \end{align*}

    \item If we use the latter relation, $M_2 \le O (\sqrt{S A M_6} \iota + \sqrt{\Gamma S A M_5} \iota + \Gamma S A \iota^2)$.
    First plug in \Cref{lem:M5}, we have $M_2 \le O (\sqrt{\Gamma S A M_2} \iota + \sqrt{S A M_6} \iota + \Gamma S A \iota^2)$, which implies
    \begin{align*}
        M_2 \le O (\sqrt{S A M_6} \iota + \Gamma S A \iota^2).
    \end{align*}
    For the regret, we need $M_1 \le O (\sqrt{M_4 \iota} + \iota)$, \Cref{lem:M3,lem:M6} and $M_4 \le O (M_5 + M_6)$, so
    \begin{align*}
        \Regret (K)
        \le O (\sqrt{S A M_6} \iota + \Gamma S A \iota^2)
        \le O ( \sqrt{\multistepvar S A} \iota + \Gamma S A \iota^2 ).
    \end{align*}
\end{itemize}

The above results hold with probability at least $1 - 19 H S^2 A K \iota \delta$.
To establish the final result, we need to scale $\delta$ to make the success probability be $1 - \delta'$.
We upper-bound $\iota$ by $100 (H S A K / \sqrt{\delta} + 1)$ and solve the inequality:
\begin{align*}
    1900 H S^2 A K \left( \frac{H S A K}{\sqrt{\delta}} + 1 \right) \delta \le \delta'.
\end{align*}
Take $\delta = (\delta' / 3000 H^2 S^3 A^2 K^2)^2$.
By $\ln (H S A K / (\delta / 3000 H^2 S^3 A^2 K^2)^2) \le O (\iota)$, we conclude the proof.
\end{proof}

\begin{restatable}{lemma}{lemGoodEvents}\label{lem:good_events}
Define the following events:
\begin{align}
    \cE_1 &:= \left\{ \forall (s, a, h, k) \in \SA \times [H] \times [K],\ \abs{ (\wh{P}_{s, a, s'}^k - P_{s, a, s'}) V_{h + 1}^\star } \le 2 \sqrt{\frac{ \bbV (\wh{P}_{s, a}^k, V_{h + 1}^\star) \iota}{n^k (s, a)}} + \frac{14 \iota}{3 n^k (s, a)} \right\}, \label{eq:event1} \\
    \cE_2 &:= \left\{ \forall (s, a, k) \in \SA \times [K],\ \abs{\wh{r}^k (s, a) - r (s, a) } \le 2 \sqrt{\frac{\wh{\VarR}^k (s, a) \iota}{n^k (s, a)}} + \frac{14 \iota}{3 n^k (s, a)} \right\}, \label{eq:event2} \\
    \cE_3 &:= \left\{ \forall (s, a, s', k) \in \SAS \times [K],\ \abs{ \wh{P}_{s, a, s'}^k - P_{s, a, s'} } \le \sqrt{\frac{2 P_{s, a, s'} \iota }{n^k (s, a)}} + \frac{\II[P_{s, a, s'} > 0] \iota}{n^k (s, a)} \right\}, \label{eq:event3} \\
    \cE_4 &:= \left\{ \forall (s, a, h, k) \in \SA \times [H] \times [K],\ \abs{ (\wh{P}_{s, a, s'}^k - P_{s, a, s'}) V_{h + 1}^\star } \le \sqrt{\frac{ 2 \bbV (P_{s, a}, V_{h + 1}^\star) \iota}{n^k (s, a)}} + \frac{\iota}{n^k (s, a)} \right\}. \label{eq:event4}
\end{align}
We have that
\begin{align*}
    \bbP [\cE_1] \ge 1 - H S A K \iota \delta,
    \quad \bbP [\cE_2] \ge 1 - S A K \iota \delta,
    \quad \bbP [\cE_3] \ge 1 - S^2 A K \iota \delta,
    \quad \bbP [\cE_4] \ge 1 - H S A K \iota \delta.
\end{align*}
\end{restatable}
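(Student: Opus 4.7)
The four events are all standard concentration statements for the doubling-epoch estimates $\wh P^k_{s,a}$ and $\wh r^k(s,a)$; the plan is to identify the right inequality for each, apply it to a single ``snapshot'' of the estimator, and then union bound over the polynomially many snapshots that can ever appear during the run. The key structural observation is that the algorithm only refreshes $\wh P^k_{s,a,\cdot}$, $\wh r^k(s,a)$ and $\wh{\VarR}{}^k(s,a)$ when $N(s,a)$ hits a value in the trigger set $\cL$, so $n^k(s,a)$ takes at most $\lceil\log_2(KH)\rceil+1$ distinct values per $(s,a)$; at each such value the estimator is the empirical mean of $n$ i.i.d.\ samples drawn from $R_h(s,a)$ and $P_{s,a}(\cdot)$ respectively. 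Fix any $(s,a,h)$ together with any admissible count $n$; then all probabilistic claims reduce to i.i.d.\ concentration, and we conclude by a union bound over $(s,a,h)$ and the trigger set. The logarithmic size of $\cL$ is absorbed into $\iota$, which is why the stated probabilities carry an extra $\iota$ factor rather than a $\log(KH)$.

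For $\cE_4$ apply Bennett (\Cref{lem:bennett}) to the bounded random variable $V_{h+1}^\star(s')$ with $s'\sim P_{s,a}$: its true variance is exactly $\bbV(P_{s,a},V_{h+1}^\star)$ and $V^\star\in[0,1]$ under \Cref{asp:bounded_reward}, giving the $\sqrt{2\bbV \iota/n}+\iota/n$ bound. Event $\cE_1$ is the empirical-variance counterpart: apply \Cref{lem:bennett_empirical} to the same samples, obtaining the bound in terms of the empirical variance $\bbV(\wh P^k_{s,a},V_{h+1}^\star)$ — here $V^\star_{h+1}$ is a fixed (non-random) function, so the empirical second moment really is the sample variance of an i.i.d.\ sequence. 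Event $\cE_2$ is the same empirical Bennett applied to the rewards $r\sim R_h(s,a)\in[0,1]$; the quantity $\wh{\VarR}{}^k(s,a)=\phi(s,a)/n-\wh r^k(s,a)^2$ is precisely the sample variance maintained by the algorithm, so \Cref{lem:bennett_empirical} yields the claim verbatim. In all three cases the union over $(s,a,h)$ and over $\cL$ gives a failure probability bounded by the number of events times $\delta$, and rescaling absorbs the $\log(KH)$ into $\iota$; the stated $H S A K\iota\delta$ or $S A K\iota\delta$ is a loose but convenient upper bound on this count.

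Event $\cE_3$ needs Bernstein applied to the Bernoulli $\II[s'=\tilde s]$ for each $\tilde s$: its mean is $P_{s,a,\tilde s}$ and its variance equals $P_{s,a,\tilde s}(1-P_{s,a,\tilde s})\le P_{s,a,\tilde s}$, yielding $|\wh P^k_{s,a,\tilde s}-P_{s,a,\tilde s}|\le\sqrt{2P_{s,a,\tilde s}\iota/n}+\iota/n$. The only subtlety, and the step I would flag as the main point, is the indicator $\II[P_{s,a,\tilde s}>0]$ in the additive term: when $P_{s,a,\tilde s}=0$ the sample $s'$ can never equal $\tilde s$, so $\wh P^k_{s,a,\tilde s}=0$ almost surely and the entire inequality holds trivially with both sides zero; for $P_{s,a,\tilde s}>0$ the indicator is $1$ and the Bernstein bound applies directly. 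This is the step that produces the support-size dependence $\Gamma$ rather than $S$ in the later regret analysis, since summing the additive term across $\tilde s$ contributes only $\Gamma\cdot\iota/n$ and not $S\cdot\iota/n$. The union bound is now over $(s,a,\tilde s)$ and $\cL$, of total size $\le S^2 A\log_2(KH)$ per $k$, giving the stated $S^2 AK\iota\delta$ failure bound.

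The only genuine care required is bookkeeping: because the estimators change only at trigger times, one must verify that the i.i.d.\ structure is preserved when $n$ is selected to be the current value of $N(s,a)$ (a stopping time). This is handled by the standard trick of applying the concentration inequality once for every deterministic value $n\in\cL$ and then taking a union bound, rather than trying to invoke concentration at a data-dependent sample size. Once this is done, all four inequalities follow with the stated confidence.
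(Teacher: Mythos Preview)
Your proposal is correct and follows the same approach as the paper: empirical Bennett (\Cref{lem:bennett_empirical}) for $\cE_1,\cE_2$, Bennett (\Cref{lem:bennett}) for $\cE_3,\cE_4$, together with the observation $P_{s,a,s'}=0\Rightarrow \wh P^k_{s,a,s'}=0$ for the indicator in $\cE_3$, and a union bound over the $O(\log_2(HK))$ trigger values of $n^k(s,a)$. The only detail you left implicit is the use of $\tfrac{1}{n-1}\le\tfrac{2}{n}$ to convert the $n-1$ denominators in \Cref{lem:bennett_empirical} into the $n^k(s,a)$ form appearing in the event definitions, but that is a trivial algebraic step.
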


\begin{proof} [Proof of \Cref{lem:good_events}]
$\bbP[\cE_1]$ and $\bbP[\cE_2]$ are direct results by applying \Cref{lem:bennett_empirical} and $\frac{1}{x - 1} \le \frac{2}{x}$, taking union bounds over the mentioned quantifiers and that $n^k (s, a) \in \{1, 2, \ldots, \floor{\log_2 (H K)}\}$.
$\bbP[\cE_3]$ and $\bbP[\cE_4]$ are direct results by applying \Cref{lem:bennett} and that $P_{s, a, s'} = 0 \implies \wh{P}_{s, a, s'}^k = 0$, finally taking union bounds over the mentioned quantifiers and that $n^k (s, a) \in \{1, 2, \ldots, \floor{\log_2 (H K)}\}$.
\end{proof}

\begin{restatable} [Adapted from Lemma\,14 in \citet{paper:mvp} and Lemma\,16 in \citet{paper:eb_ssp}] {lemma}{lemMvpFunc}\label{lem:mvp_func}
For any fixed dimension $D$, let $\Upsilon := \{ v \in \bbR^{D}: v \ge 0, \norm{v}_{\infty} \leq B \}$.
For any two constants $c_1, c_2$ satisfying $c_1^2 \le c_2$, let
$f: \Delta([D]) \times \Upsilon \times \bbR \times \bbR \rightarrow \bbR$ with $f(p, v, n, \iota) = p v + \max\left\{ c_1 \sqrt{\frac{\bbV (p, v) \iota}{n}},\, c_2 \frac{ B \iota }{n} \right\}$. Then for all $p \in \Delta([D]), v \in \Upsilon$ and $n, \iota > 0$,
\begin{enumerate}
    \item $f(p, v, n, \iota)$ is non-decreasing in $v$, i.e., 
    \begin{align*}
        \forall (v, v') \in \Upsilon^2, v \le v', \textup{ it holds that } f(p, v, n, \iota) \le f(p, v', n, \iota);
    \end{align*}
    
    \item $f(p, v, n, \iota) \ge p v + \frac{c_1}{2} \sqrt{\frac{\mathbb{V}(p,v) \iota}{n}} + \frac{c_2}{2} \frac{ B \iota}{n}$.
\end{enumerate}
\end{restatable}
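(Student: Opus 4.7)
The lemma splits into two independent claims, so I will treat them separately. Claim 2 (the lower bound) is immediate: for any non-negative reals $a, b$, one has $\max\{a,b\} \ge (a+b)/2$, and applying this with $a = c_1\sqrt{\bbV(p,v)\iota/n}$ and $b = c_2 B\iota/n$ yields the stated inequality directly. No further work is required.

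For Claim 1 (monotonicity), I fix $v \le v'$ componentwise and set $u := v' - v \in [0, B]^D$. Writing $g(v) := c_1\sqrt{\bbV(p,v)\iota/n}$ and $h := c_2 B\iota/n$, the task reduces to showing
\begin{equation*}
    pu + \max\{g(v'), h\} - \max\{g(v), h\} \ge 0.
\end{equation*}
My plan is to combine three ingredients: (i) the reverse triangle inequality in the $L^2(p)$ norm, which yields $|\sqrt{\bbV(p,v)} - \sqrt{\bbV(p,v')}| \le \sqrt{\bbV(p,u)}$ and hence $|g(v) - g(v')| \le c_1\sqrt{\bbV(p,u)\iota/n}$; (ii) the variance bound $\bbV(p,u) \le p u^2 \le B \cdot pu$, valid since $u$ is non-negative and coordinate-wise bounded by $B$; and (iii) AM-GM together with the hypothesis $c_1^2 \le c_2$, which gives $c_1\sqrt{Bpu\iota/n} = \sqrt{pu \cdot c_1^2 B \iota/n} \le \sqrt{pu \cdot h} \le (pu + h)/2$. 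Combining (i) with the $1$-Lipschitz property of $x \mapsto \max\{x, h\}$ produces $|\max\{g(v), h\} - \max\{g(v'), h\}| \le c_1\sqrt{\bbV(p,u)\iota/n}$.

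The proof then splits into subcases according to which argument of the max is active at $v$ and $v'$. When $g(v) \le h$, monotonicity is immediate since $\max\{g(v), h\} = h \le \max\{g(v'), h\}$ and $pu \ge 0$. When $g(v) > h$ and $g(v') \ge h$, the difference of maxes is $g(v) - g(v')$, already controlled by $c_1\sqrt{\bbV(p,u)\iota/n}$, and the argument closes by comparison with $pu$ using ingredients (ii) and (iii). The delicate subcase is the branch-transition $g(v) > h > g(v')$, where the drop $g(v) - h$ must be bounded directly by $c_1\sqrt{\bbV(p,u)\iota/n}$ (using $g(v') < h$) and then by $pu$ via the $c_1^2 \le c_2$ hypothesis.

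I anticipate this transition subcase to be the main obstacle. Without $c_1^2 \le c_2$, the variance bonus could shrink faster than $pv$ grows when the max switches branches, breaking monotonicity; the hypothesis effectively enforces that the constant bonus $h$ is large enough to catch the function whenever the variance bonus becomes Lipschitz-problematic. Carefully tracking this threshold through the case analysis, together with the three ingredients above, is what closes the proof.
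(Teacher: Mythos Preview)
Your proof of Claim 2 is correct and standard.

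Your proof of Claim 1 has a genuine gap in the case analysis. In the subcase $g(v) > h$, $g(v') \ge h$ you must show $g(v) - g(v') \le pu$. Your route is the reverse triangle inequality $g(v) - g(v') \le c_1\sqrt{\bbV(p,u)\iota/n}$, followed by ingredients (ii) and (iii). But those ingredients only yield $c_1\sqrt{\bbV(p,u)\iota/n} \le \sqrt{pu \cdot h} \le (pu+h)/2$, and $(pu+h)/2 \le pu$ fails whenever $pu < h$; the same obstruction hits your branch-transition subcase. Concretely, take $D=3$, $p=(\tfrac12,\tfrac{9}{20},\tfrac{1}{20})$, $B=c_1=c_2=1$, $\iota/n=\tfrac{1}{10}$, $v=(1,0,0)$, $v'=(1,0,1)$. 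Then $h=0.1$, $pu=0.05$, and $g(v)\approx 0.158 > g(v')\approx 0.157 > h$, yet $c_1\sqrt{\bbV(p,u)\iota/n}\approx 0.069 > pu$. The lemma's conclusion $g(v)-g(v')\le pu$ does hold (the left side is about $0.001$), but your chain of inequalities cannot reach it.

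The missing idea is that the reverse triangle inequality discards the non-negativity of $v$. The standard argument (as in the cited references) works at the level of squared deviations: from $\bbV(p,v)-\bbV(p,v') = -2\bigl(p(vu)-(pv)(pu)\bigr)-\bbV(p,u)$ together with $p(vu)\ge 0$ (since $v,u\ge 0$ entrywise) and $pv\le B$, one gets $\bbV(p,v)-\bbV(p,v')\le 2B\,pu$, hence $g(v)^2-g(v')^2\le 2c_1^2 B(\iota/n)\,pu\le 2h\,pu$ using $c_1^2\le c_2$. The subcase $g(v')\ge h$ then closes via $g(v)^2\le g(v')^2+2g(v')\,pu\le (g(v')+pu)^2$, and the branch-transition subcase via $g(v)^2< h^2+2h\,pu\le (h+pu)^2$.
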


\begin{restatable}{lemma}{lemOpt}\label{lem:opt}
Conditioned on the successful events of \Cref{lem:good_events}, we have that for any $(s, a, h, k) \in \SA \times [H] \times [K]$, $Q_h^k (s, a) \ge Q_h^\star (s, a)$.
\end{restatable}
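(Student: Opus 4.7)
The plan is a backward induction on $h$ from $H+1$ down to $1$, proving $Q_h^k(s,a) \ge Q_h^\star(s,a)$ for every $(s,a,k)$, which immediately yields $V_h^k(s) = \max_a Q_h^k(s,a) \ge V_h^\star(s)$ and feeds the next level of induction. The base case $h = H+1$ is trivial since both sides are zero.

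For the inductive step, fix $(s,a,k)$. If no update has yet been triggered for this state-action pair, then $Q_h^k(s,a)$ is still at its initialization value $1 \ge Q_h^\star(s,a)$ under Assumption \ref{asp:bounded_reward}. Otherwise, because $Q_h^\star(s,a) \le 1$, the outer $\min$ with $1$ in the algorithm's update is not binding, so it suffices to verify
\begin{align*}
\wh{r}^k(s,a) + \wh{P}_{s,a}^k V_{h+1}^k + b_h^k(s,a) \ge r(s,a) + P_{s,a} V_{h+1}^\star.
\end{align*}
I would decompose the gap between the two sides as the reward error $\wh{r}^k(s,a) - r(s,a)$, the transition error against the optimal value $(\wh{P}_{s,a}^k - P_{s,a}) V_{h+1}^\star$, and the non-negative term $\wh{P}_{s,a}^k (V_{h+1}^k - V_{h+1}^\star) \ge 0$ coming from the inductive hypothesis.

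The reward error is absorbed immediately by event $\cE_2$, using the $2\sqrt{\wh{\VarR}^k(s,a)\iota/n^k(s,a)}$ term in the bonus together with a $14\iota/(3 n^k(s,a))$ slice of the constant $21\iota/n^k(s,a)$ tail. The delicate step is the transition error: $\cE_1$ gives a bound in terms of $\bbV(\wh{P}_{s,a}^k, V_{h+1}^\star)$, whereas the bonus contains $\bbV(\wh{P}_{s,a}^k, V_{h+1}^k)$. To bridge them, I would invoke Lemma \ref{lem:mvp_func} with $c_1 = 4$, $c_2 = 16$ (so $c_1^2 \le c_2$, with $B = 1$ under Assumption \ref{asp:bounded_reward}). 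Using $x + y \ge \max\{x, y\}$ to match the sum in the bonus to the max in the lemma's $f$, the lemma's monotonicity in $v$ together with the inductive hypothesis $V_{h+1}^k \ge V_{h+1}^\star$, and then the lemma's second claim, yield
\begin{align*}
\wh{P}_{s,a}^k V_{h+1}^k + 4 \sqrt{\frac{\bbV(\wh{P}_{s,a}^k, V_{h+1}^k)\iota}{n^k(s,a)}} + \frac{16 \iota}{n^k(s,a)} \ge \wh{P}_{s,a}^k V_{h+1}^\star + 2 \sqrt{\frac{\bbV(\wh{P}_{s,a}^k, V_{h+1}^\star)\iota}{n^k(s,a)}} + \frac{8\iota}{n^k(s,a)},
\end{align*}
and the right-hand side is at least $P_{s,a} V_{h+1}^\star$ by event $\cE_1$ (since $8 \ge 14/3$ and the variance coefficients match).

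The only real obstacle is the constant bookkeeping: verifying that the single $21\iota/n^k(s,a)$ tail in the bonus can simultaneously fund the $14/(3 n^k(s,a))$ needed on the reward side and the $16/n^k(s,a)$ needed on the transition side. This works since $16 + 14/3 = 62/3 \le 21$. Everything else is deterministic on $\cE_1 \cap \cE_2$, so assembling the three pieces closes the induction.
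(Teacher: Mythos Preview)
Your proposal is correct and follows essentially the same route as the paper: backward induction on $h$, split the bonus into a reward part handled by $\cE_2$ and a transition part handled via Lemma~\ref{lem:mvp_func} (with $c_1=4$, $c_2=16$, $B=1$) together with $\cE_1$. The paper's proof is identical up to presentation---it allocates $5+16$ out of the $21\iota/n$ tail rather than your $14/3+16$, and it does not phrase the intermediate step as a three-term decomposition, but the key inequality you display matches steps (ii)--(iv) there verbatim. Your explicit treatment of the pre-trigger (initialization) case is a small addition the paper leaves implicit.
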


\begin{proof} [Proof of \Cref{lem:opt}]
Let $k$ be fixed and omit it for simplicity.
The proof is conducted by induction in the order of $h = H + 1, H, \ldots, 1$.
$Q_{H + 1} (s, a) = 0 \ge 0 = Q_{H + 1}^\star (s, a)$ holds trivially for any $(s, a) \in \SA$.
Now assume $Q_{h + 1} (s, a) \ge Q_{h + 1}^\star (s, a)$ for any $(s, a) \in \cS \times \cA$, hence $V_{h + 1} (s) = \max_{a \in \cA} Q_{h + 1} (s, a) \ge \max_{a \in \cA} Q_{h + 1}^\star (s, a) = V_{h + 1}^\star (s)$ for any $s \in \cS$.
\begin{align*}
    &\wh{r}(s,a)+\wh{P}_{s,a} V_{h+1} +b_h(s,a) \\
    &= \left( \wh{r} (s, a) + 2 \sqrt{ \frac{ \wh{\VarR}(s,a) \iota }{n(s,a) } } +  \frac{5 \iota}{ n(s,a)  } \right) + \left( \wh{P}_{s,a} V_{h+1} + 4 \sqrt{\frac{   \mathbb{ V}(\wh{P}_{s,a} ,V_{h+1}) \iota  }{ n(s,a) }}+  \frac{16 \iota}{ n(s,a) } \right) \\
    &\mygei r (s, a) + \underbrace{ \wh{P}_{s,a} V_{h+1} + \max \left\{ 4 \sqrt{\frac{  \mathbb{ V}(\wh{P}_{s,a} ,V_{h+1}) \iota  }{ n(s,a) }}, \frac{16 \iota}{ n(s,a) } \right\} }_{ = f(\wh{P}_{s, a}, V_{h + 1}, \iota, n (s, a))} \\
    &\mygeii r (s, a) + \underbrace{ \wh{P}_{s,a} V_{h+1}^\star + \max \left\{ 4 \sqrt{\frac{   \mathbb{ V}(\wh{P}_{s,a} ,V_{h+1}^\star) \iota  }{ n(s,a) }}, \frac{16 \iota}{ n(s,a) } \right\} }_{ = f(\wh{P}_{s, a}, V_{h + 1}^\star, \iota, n (s, a))} \\
    &\mygeiii r (s, a) + \wh{P}_{s,a} V_{h+1}^\star + 2 \sqrt{\frac{   \mathbb{ V}(\wh{P}_{s,a} ,V_{h+1}^\star) \iota  }{ n(s,a) }} + \frac{8 \iota}{ n(s,a) } \\
    &\mygeiv r (s, a) + P_{s, a} V_{h + 1}^\star \\
    &= Q_h^\star (s, a),
\end{align*}
where (i) is by $\cE_2$ (\Cref{eq:event2});
(ii) is by recognizing the last part as the function in \Cref{lem:mvp_func}, $(c_1, c_2, B) = (4, 16, 1)$ satisfying that $c_1^2 \le c_2$ and using the first property based on the induction that $V_{h + 1} \ge V_{h + 1}^\star$;
(iii) is by the second property in \Cref{lem:mvp_func};
(iv) is $\cE_1$ (\Cref{eq:event1}).
So $Q_h (s, a) \ge Q_h^\star (s, a)$.
\end{proof}

\begin{restatable}{lemma}{lemVarR} \label{lem:VarR}
With probability at least $1 - 2 S A K \iota \delta$, we have that for any $(s, a, k) \in \cS \times \cA \times [K]$,
\begin{align*}
    \wh{\VarR}^k (s, a) \le O \left( \bbV (R (s, a)) + \frac{\iota}{n^k (s, a)} \right).
\end{align*}
\end{restatable}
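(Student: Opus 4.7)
\textbf{Proof plan for \Cref{lem:VarR}.} I would control $\wh{\VarR}^k(s,a)$ by comparing it separately to the two pieces of the true variance $\bbV(R(s,a)) = \bbE[R^2] - r(s,a)^2$. Recall from \Cref{alg:mvp_alt} that $\wh{\VarR}^k(s,a) = \phi(s,a)/n - \wh{r}^k(s,a)^2$, where the accumulator $\phi(s,a)$ stores the sum of squared observed rewards up to the latest trigger. The key decomposition is
\begin{align*}
\wh{\VarR}^k(s,a) - \bbV(R(s,a)) = \Bigl(\tfrac{1}{n}\sum_i r_i^2 - \bbE[R^2]\Bigr) + \bigl(r(s,a)^2 - \wh{r}^k(s,a)^2\bigr),
\end{align*}
where $n = n^k(s,a)$ and $r_1,\ldots,r_n$ are the i.i.d.\ reward samples used in the trigger. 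Both bracketed terms will be handled by Bennett's inequality.

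First I would apply \Cref{lem:bennett} to the sequence $(r_i^2)_i$, which takes values in $[0,1]$, to get $|\tfrac{1}{n}\sum r_i^2 - \bbE[R^2]| \le \sqrt{2\bbV(R^2)\iota/n} + \iota/n$. Then I would invoke \Cref{lem:var_xy} with $C = 1$ to replace $\bbV(R^2)$ by $4\bbV(R)$, giving $O(\sqrt{\bbV(R)\iota/n} + \iota/n)$. Next I would apply \Cref{lem:bennett} to $(r_i)_i$ to get $|\wh{r}^k(s,a) - r(s,a)| \le \sqrt{2\bbV(R)\iota/n} + \iota/n$. Using $|a^2 - b^2| \le 2|a - b|$ for $a,b \in [0,1]$, the second bracketed term is also $O(\sqrt{\bbV(R)\iota/n} + \iota/n)$. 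Combining and absorbing the cross term by AM--GM ($\sqrt{\bbV(R)\cdot\iota/n} \le \bbV(R) + \iota/(4n)$) yields the claimed $\wh{\VarR}^k(s,a) \le O(\bbV(R(s,a)) + \iota/n^k(s,a))$.

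For the probability bound, I would take a union bound over $(s,a) \in \SA$, over the finitely many possible values of $n^k(s,a) \in \{1,2,\ldots,\lfloor \log_2(HK)\rfloor\}$ at which the trigger fires (this is where the $\iota$ factor in $2SAK\iota\delta$ comes from, analogous to the treatment in \Cref{lem:good_events}), and over the two applications of Bennett's inequality (hence the factor of $2$). The degenerate case $n^k(s,a) = 0$ is handled by convention, since $\wh{\VarR}^k(s,a) = 0$ trivially satisfies the bound. No step here is really an obstacle; the only subtlety is making sure the variance of $R^2$ is bounded by a constant times the variance of $R$ (handled by \Cref{lem:var_xy}), and that the $\sqrt{\bbV(R)\iota/n}$ cross term is absorbed into the two stated terms on the right-hand side.
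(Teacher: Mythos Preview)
Your proposal is correct and matches the paper's proof essentially step for step: the same decomposition $\wh{\VarR}-\bbV(R)=(\tfrac{1}{n}\sum r_i^2-\bbE[R^2])+(r^2-\wh r^2)$, two applications of Bennett's inequality, \Cref{lem:var_xy} to pass from $\bbV(R^2)$ to $\bbV(R)$, the bound $|a^2-b^2|\le 2|a-b|$, and AM--GM to absorb the cross term. The only cosmetic slip is that the trigger values of $n^k(s,a)$ are powers of two (not the integers $1,\dots,\lfloor\log_2(HK)\rfloor$), but their count is still $O(\iota)$, so your union-bound arithmetic goes through.
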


\begin{proof} [Proof of \Cref{lem:VarR}]
Let $s, a, k$ be fixed and omit $k$ for simplicity.
Assume that all the $n (s, a)$ realizations of $R (s, a)$ are $(r_{s, a}^{(i)})_{i = 1}^{n (s, a)}$.
We have that
\begin{align*}
    \wh{\VarR} (s, a) = \frac{1}{n (s, a)} \sum_{i = 1}^{n (s, a)} (r_{s, a}^{(i)})^2 - \left( \frac{1}{n (s, a)} \sum_{i = 1}^{n (s, a)} r_{s, a}^{(i)} \right)^2.
\end{align*}
From \Cref{lem:bennett},
\begin{align*}
    \bbP \left[ \abs{\frac{1}{n (s, a)} \sum_{i = 1}^{n (s, a)} (r_{s, a}^{(i)})^2 - \bbE[R (s, a)^2]} > \sqrt{\frac{2 \bbV (R (s, a)^2) \iota}{n (s, a)}} + \frac{\iota}{n (s, a)} \right] &\le \delta, \\
    \bbP \left[ \abs{\frac{1}{n (s, a)} \sum_{i = 1}^{n (s, a)} r_{s, a}^{(i)} - \bbE[R (s, a)]} > \sqrt{\frac{2 \bbV (R (s, a)) \iota}{n (s, a)}} + \frac{\iota}{n (s, a)} \right] &\le \delta.
\end{align*}
By \Cref{lem:var_xy}, $\bbV (R (s, a)^2) \le 4 \bbV (R (s, a))$.
So
\begin{align*}
    & \abs{\wh{\VarR} (s, a) - \bbV (R (s, a))} \\
    &\le \abs{\frac{1}{n (s, a)} \sum_{i = 1}^{n (s, a)} (r_{s, a}^{(i)})^2 - \bbE[R (s, a)^2]} + \abs{\left(\frac{1}{n (s, a)} \sum_{i = 1}^{n (s, a)} r_{s, a}^{(i)} \right)^2 - \bbE[R (s, a)]^2} \\
    &\le 2 \sqrt{\frac{2 \bbV (R (s, a)) \iota}{n (s, a)}} + \frac{\iota}{n (s, a)} + \abs{\frac{1}{n (s, a)} \sum_{i = 1}^{n (s, a)} r_{s, a}^{(i)} + \bbE[R (s, a)]} \abs{\frac{1}{n (s, a)} \sum_{i = 1}^{n (s, a)} r_{s, a}^{(i)} - \bbE[R (s, a)]} \\
    &\le 2 \sqrt{\frac{2 \bbV (R (s, a)) \iota}{n (s, a)}} + \frac{\iota}{n (s, a)} + 2 \sqrt{\frac{2 \bbV (R (s, a)) \iota}{n (s, a)}} + \frac{2 \iota}{n (s, a)} \\
    &= 4 \sqrt{\frac{2 \bbV (R (s, a)) \iota}{n (s, a)}} + \frac{3 \iota}{n (s, a)}.
\end{align*}
Using $2 \sqrt{x y} \le x + y$, we have
\begin{align*}
    \wh{\VarR} (s, a)
    \le \bbV (R (s, a)) + 4 \sqrt{\frac{2 \bbV (R (s, a)) \iota}{n (s, a)}} + \frac{3 \iota}{n (s, a)}
    \le 2 \bbV (R (s, a)) + \frac{11 \iota}{n (s, a)}.
\end{align*}
This completes the proof.
\end{proof}

\begin{restatable} {lemma}{lemBellmanError}\label{lem:bellman_error}
Conditioned on the successful events of \Cref{lem:good_events,lem:VarR}, we have that for any $(s, a, h, k) \in \SA \times [H] \times [K]$
\begin{align*}
    Q_h^k (s, a) - r (s, a) - P_{s, a} V_{h + 1}^k \le \beta_h^k (s, a),
\end{align*}
where
\begin{align*}
    \beta_h^k (s, a)
    = O \left( \sqrt{\frac{\bbV (P_{s, a}, V_{h + 1}^k) \iota}{n^k (s, a)}} + \sqrt{\frac{\bbV (P_{s, a}, V_{h + 1}^\star) \iota}{n^k (s, a)}} + \sqrt{\frac{\Gamma \bbV (P_{s, a}, V_{h + 1}^k - V_{h + 1}^\star) \iota}{n^k (s, a)}} + \sqrt{\frac{\bbV( R (s, a)) \iota}{n^k (s, a)}} + \frac{\Gamma \iota}{n^k (s, a)}\right).
\end{align*}
\end{restatable}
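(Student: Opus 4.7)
The plan is to split the Bellman error into three pieces using that $Q_h^k(s,a) \le \wh{r}^k(s,a) + \wh{P}_{s,a}^k V_{h+1}^k + b_h^k(s,a)$ by construction, giving
\[
    Q_h^k(s,a) - r(s,a) - P_{s,a} V_{h+1}^k \le [\wh{r}^k(s,a) - r(s,a)] + [(\wh{P}_{s,a}^k - P_{s,a}) V_{h+1}^k] + b_h^k(s,a),
\]
and I bound each piece under the good events of \Cref{lem:good_events} together with the empirical-to-true variance conversions from \Cref{lem:VarR}.

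The reward error $\wh{r}^k(s,a) - r(s,a)$ is controlled by $\cE_2$ followed by \Cref{lem:VarR}, which replaces $\wh{\VarR}^k$ by $\bbV(R(s,a))$ up to an additive $O(\iota/n^k(s,a))$, yielding $O(\sqrt{\bbV(R(s,a))\iota/n^k(s,a)} + \iota/n^k(s,a))$. The bonus $b_h^k(s,a)$ needs two similar substitutions: \Cref{lem:VarR} handles its $\wh{\VarR}^k$ term, and a standard empirical-variance concentration bound (leveraging $\|V_{h+1}^k\|_\infty \le 1$ under \Cref{asp:bounded_reward} since the algorithm caps $Q_h^k$ at $1$) converts $\bbV(\wh{P}_{s,a}^k, V_{h+1}^k)$ to $\bbV(P_{s,a}, V_{h+1}^k)$ up to $O(\iota/n^k(s,a))$.

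The main step is the transition error. I decompose $V_{h+1}^k = V_{h+1}^\star + (V_{h+1}^k - V_{h+1}^\star)$. The $V^\star$ piece is immediate from $\cE_4$: $|(\wh{P}_{s,a}^k - P_{s,a}) V_{h+1}^\star| \le \sqrt{2 \bbV(P_{s,a}, V_{h+1}^\star)\iota/n^k(s,a)} + \iota/n^k(s,a)$. For the residual $(\wh{P}-P)(V^k - V^\star)$, Bennett cannot be applied directly because $V^k$ is a function of the sample. Instead, I exploit the fact (built into the empirical transition) that $P_{s,a,s'} = 0 \Rightarrow \wh{P}_{s,a,s'}^k = 0$, so the sum is supported on at most $\Gamma$ states. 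Applying $\cE_3$ coordinate-wise on that support, followed by the triangle inequality and Cauchy--Schwarz, gives
\[
    |(\wh{P}_{s,a}^k - P_{s,a})(V^k - V^\star)| \le O\!\left(\sqrt{\tfrac{\Gamma\, P_{s,a}(V^k - V^\star)^2\, \iota}{n^k(s,a)}} + \tfrac{\Gamma\iota}{n^k(s,a)}\right),
\]
where I use $V_{h+1}^k \ge V_{h+1}^\star$ from \Cref{lem:opt} and $\|V^k - V^\star\|_\infty \le 1$ to control the $\iota/n$ remainder. Finally, writing $P_{s,a}(V^k - V^\star)^2 = \bbV(P_{s,a}, V^k - V^\star) + (P_{s,a}(V^k - V^\star))^2$ and using $P_{s,a}(V^k - V^\star) \le 1$ with AM--GM absorbs the cross term into $\sqrt{\Gamma\,\bbV(P_{s,a}, V^k - V^\star)\iota/n^k(s,a)} + O(\Gamma\iota/n^k(s,a))$, which is the form demanded by the lemma.

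Summing the three pieces produces precisely $\beta_h^k(s,a)$. The main obstacle is the residual $(\wh{P}-P)(V^k - V^\star)$: because $V^k$ is data-dependent, Bennett is unavailable, forcing the coordinate-wise $\cE_3$ plus the support trick, which in turn is what yields the tight $\Gamma$-dependence (collapsing to $\wt{O}(SA)$ regret when $\Gamma = 1$ in \Cref{cor:mb_determ}). Every other step is a routine application of the concentration events.
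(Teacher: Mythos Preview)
Your overall strategy matches the paper's, but there is a real gap in the step where you pass from the second moment $P_{s,a}(V_{h+1}^k - V_{h+1}^\star)^2$ to the variance $\bbV(P_{s,a}, V_{h+1}^k - V_{h+1}^\star)$. Writing the second moment as $\bbV + m^2$ with $m := P_{s,a}(V_{h+1}^k - V_{h+1}^\star)$, the leftover piece is $m\sqrt{\Gamma\iota/n^k(s,a)}$; AM--GM together with $m\le 1$ only yields $\tfrac{1}{2}(1 + \Gamma\iota/n^k(s,a))$, and the constant term cannot be absorbed into the claimed $\beta_h^k(s,a)$. (Take $V_{h+1}^k - V_{h+1}^\star$ equal to a constant $c$: then $\bbV = 0$, yet your residual is $c\sqrt{\Gamma\iota/n^k(s,a)}$, which is not $O(\Gamma\iota/n^k(s,a))$ once $n^k(s,a)$ is large.) The paper avoids this entirely by centering \emph{before} invoking $\cE_3$: since $\wh{P}_{s,a}^k - P_{s,a}$ is a signed measure of total mass zero, one may subtract the constant $P_{s,a}(V_{h+1}^k - V_{h+1}^\star)$ from each coordinate, and then Cauchy--Schwarz over the $\Gamma$-support produces $\sqrt{2\Gamma\,\bbV(P_{s,a},V_{h+1}^k - V_{h+1}^\star)\,\iota/n^k(s,a)}$ directly, with no second-moment detour.

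A secondary point: the conversion $\bbV(\wh{P}_{s,a}^k, V_{h+1}^k) \to \bbV(P_{s,a}, V_{h+1}^k)$ cannot be handled by a ``standard empirical-variance concentration bound'' because $V_{h+1}^k$ is data-dependent; the Bennett-type inequalities of \Cref{lem:bennett,lem:bennett_empirical} require a fixed statistic. The paper instead expands $\bbV(\wh{P},V) - \bbV(P,V)$ algebraically and applies $\cE_3$ coordinate-wise (as in the argument above Equation~(36) of \citet{paper:mvp}), which yields a remainder of $O(\Gamma\iota/n^k(s,a))$ rather than $O(\iota/n^k(s,a))$. This is harmless for the final $\beta_h^k$ since $\Gamma\iota/n^k(s,a)$ is already present, but the mechanism should be stated correctly.
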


\begin{proof} [Proof of \Cref{lem:bellman_error}]
For any $(s, a, h, k) \in \SA \times [H] \times [K]$,
\begin{align*}
    &(\wh{P}_{s, a}^k - P_{s, a}) (V_{h + 1}^k - V_{h + 1}^\star) \\
    &= \sum_{s' \in \cS} (\wh{P}_{s, a, s'}^k - P_{s, a, s'}) [V_{h + 1}^k (s') - V_{h + 1}^\star (s') - P_{s, a} (V_{h + 1}^k - V_{h + 1}^\star)] \\
    &\mylei \sum_{s' \in \cS} \sqrt{\frac{2 P_{s, a, s'} \iota }{n^k (s, a)}} \abs{V_{h + 1}^k (s') - V_{h + 1}^\star (s') - P_{s, a} (V_{h + 1}^k - V_{h + 1}^\star)} + \sum_{s' \in \cS} \frac{\II[P_{s, a, s'} > 0] \iota}{n^k (s, a)}\\
    &\le \sqrt{\frac{2 \iota }{n^k (s, a)}} \sum_{s' \in \cS} \sqrt{ \II [P_{s, a, s'} > 0] P_{s, a, s'} [V_{h + 1}^k (s') - V_{h + 1}^\star (s') - P_{s, a} (V_{h + 1}^k - V_{h + 1}^\star)]^2} + \frac{\Gamma \iota}{n^k (s, a)} \\
    &\myleii \sqrt{\frac{2 \iota }{n^k (s, a)}} \sqrt{\sum_{s' \in \cS} \II [P_{s, a, s'} > 0]} \sqrt{\sum_{s' \in \cS} P_{s, a, s'} [V_{h + 1}^k (s') - V_{h + 1}^\star (s') - P_{s, a} (V_{h + 1}^k - V_{h + 1}^\star)]^2} + \frac{\Gamma \iota}{n^k (s, a)} \\
    &= \sqrt{\frac{2 \Gamma \bbV (P_{s, a}, V_{h + 1}^k - V_{h + 1}^\star)}{n^k (s, a)}} + \frac{\Gamma \iota}{n^k (s, a)},
\end{align*}
where (i) is by $\cE_3$ (\Cref{eq:event3});
(ii) is by Cauchy-Schwarz inequality.
While retaining most other steps in Appendix\,C.1 of \citet{paper:mvp} which require $\cE_4$ (\Cref{eq:event4}), we have
\begin{align*}
    \beta_h^k (s ,a)
    = O \left( \sqrt{\frac{\bbV (\wh{P}_{s, a}^k, V_{h + 1}^k) \iota}{n^k (s, a)}} + \sqrt{\frac{\bbV (P_{s, a}, V_{h + 1}^\star) \iota}{n^k (s, a)}} + \sqrt{\frac{\Gamma \bbV (P_{s, a}, V_{h + 1}^k - V_{h + 1}^\star) \iota}{n^k (s, a)}} + \sqrt{\frac{\wh{\VarR}^k (s, a) \iota}{n^k (s, a)}} + \frac{\Gamma \iota}{n^k (s, a)}\right).
\end{align*}
Similar as the steps above Equation\,(36) in \citet{paper:mvp} which require $\cE_3$ (\Cref{eq:event3}), we have that
\begin{align*}
    \bbV (\wh{P}_{s, a}^k, V_{h + 1}^k ) \le O \left( \bbV (P_{s, a}, V_{h + 1}^k) + \frac{\Gamma \iota}{n^k (s, a)} \right).
\end{align*}
Combined with \Cref{lem:VarR} we have the desired result.
\end{proof}

\begin{restatable}{lemma}{lemM4} \label{lem:M4}
Conditioned on the successful events of \Cref{lem:bellman_error}, with probability at least $1 - 5 K \iota \delta$, we have that
\begin{align*}
    M_4 \le O \left( \sum_{k = 1}^K \Var^{\pi^k} + M_2 + S A \iota^2 \right) \le O (\maxvar K + M_2 + S A \iota^2).
\end{align*}
\end{restatable}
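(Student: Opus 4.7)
The plan is to split $M_4$ via the inequality $\bbV(X + Y) \le 2\bbV(X) + 2\bbV(Y)$ applied to $V_{h+1}^k = V_{h+1}^{\pi^k} + \BC_{h+1}^k$ with $\BC_h^k := V_h^k - V_h^{\pi^k}$. This produces $M_4 \le 2W + 2Z$ (up to the $I(k,h)$-vs-$I(k,h+1)$ substitution absorbed into $\abs{\cK^C}$), where
\begin{align*}
    W := \sumkh \bigl(\bbV(R(\sahk)) + \bbV(P_{\sahk}, V_{h+1}^{\pi^k})\bigr) I(k, h),\quad Z := \sumkh \bbV(P_{\sahk}, \BC_{h+1}^k) I(k, h).
\end{align*}
The on-policy sum $W$ has total conditional expectation $\sum_k \Var_1^{\pi^k}(s_1^k) \le \sum_k \Var^{\pi^k}$, whereas $Z$ measures the gap between $V^k$ and $V^{\pi^k}$ and should be absorbable into $M_2$.

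For $Z$, I would exploit $\BC_h^k \in [0,1]$ (from \Cref{lem:opt} and \Cref{asp:bounded_reward}) to write $\bbV(P_{\sahk}, \BC_{h+1}^k) \le P_{\sahk}(\BC_{h+1}^k)^2 \le P_{\sahk} \BC_{h+1}^k$, and use the identity $\BC_h^k(s) = \bc_h^k(s, \pi_h^k(s)) + P_{s, \pi_h^k(s)} \BC_{h+1}^k$, which telescopes to $\BC_h^k(s) = \bbE_{\pi^k}[\sum_{t=h}^H \bc_t^k(s_t, a_t) \mid s_h = s]$. Applying \Cref{lem:martingale_conc_mean} to compare the predictable sum $\sum_{k,h} P_{\sahk}\BC_{h+1}^k$ with its realized counterpart $\sum_{k,h}\BC_{h+1}^k(s_{h+1}^k)$, and then using the telescoping identity together with \Cref{lem:bellman_error} (which gives $\sum_h \bc_h^k(\sahk) \le \sum_h \beta_h^k(\sahk) \le M_2$), yields $Z \le O(M_2 + \iota)$.

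The main obstacle is $W$. Set $T^k := \sum_h \bigl(\bbV(R(\sahk)) + \bbV(P_{\sahk}, V_{h+1}^{\pi^k})\bigr)$; the recursive law of total variance (cf.\ \Cref{sec:justify_V_star}) gives $\bbE[T^k \mid s_1^k, \pi^k] = \Var_1^{\pi^k}(s_1^k) \le \Var^{\pi^k}$. However $T^k$ may be as large as $\Omega(H)$ on individual trajectories, so a direct Azuma/Freedman across episodes would reintroduce an $H$-factor and destroy the horizon-free property. I would instead use the \emph{truncation trick} in two stages. First, within each episode, apply Freedman to the telescoping identity
\begin{align*}
    T^k = \Var_1^{\pi^k}(s_1^k) - \Var_{H+1}^{\pi^k}(s_{H+1}^k) + \sum_h \bigl(\Var_{h+1}^{\pi^k}(s_{h+1}^k) - \bbE_{s' \sim P_{\sahk}}[\Var_{h+1}^{\pi^k}(s')]\bigr),
\end{align*}
using the uniform bound $\Var_h^{\pi^k}(s) \le 1/4$ (a consequence of \Cref{asp:bounded_reward} and \Cref{lem:bhatia_davis}) to control both the martingale differences and, via Bhatia--Davis, their conditional second moments, concluding $T^k \le O(\Var_1^{\pi^k}(s_1^k) + \iota)$ with probability $1 - \delta/K$. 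Second, form $\tilde T^k := \min\{T^k, C\iota\}$ for an appropriate constant $C$ and apply Freedman across the $K$ episodes to $\tilde T^k - \bbE[\tilde T^k \mid s_1^k, \pi^k]$, whose differences are $O(\iota)$ and whose conditional second moments sum to $O(\iota \sum_k \Var^{\pi^k})$, obtaining $\sum_k \tilde T^k \le O(\sum_k \Var^{\pi^k} + \iota^2)$. On the intersection event $\{\tilde T^k = T^k\}$ for all $k$ (union bound across $K$), this becomes $W \le O(\sum_k \Var^{\pi^k} + \iota^2)$.

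Combining the two bounds with $\abs{\cK^C} = O(SA \log(HK))$ and the $SA$-factors inherited through \Cref{lem:bellman_error} gives $M_4 \le O(\sum_k \Var^{\pi^k} + M_2 + SA \iota^2)$, and $\Var^{\pi^k} \le \maxvar$ yields the second displayed inequality. The crucial technical step is the horizon-free intra-episode concentration: the uniform $1/4$ bound on $\Var_h^{\pi^k}$ must feed into both the martingale difference bound and (via Bhatia--Davis) the conditional-variance bound, so that Freedman returns $O(\iota)$ instead of $O(\sqrt{H}\iota)$; without this double use of the variance structure, a residual $\sqrt{H}$ factor would reappear in the main order term.
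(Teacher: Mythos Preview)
Your high-level plan---splitting $M_4$ into the on-policy part $W$ and the gap part $Z$, and handling $W$ by an intra-episode truncation followed by cross-episode concentration---matches the paper. However, both detailed steps have gaps.

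For $Z$: the first-moment bound $\bbV(P_{\sahk},\BC_{h+1}^k)\le P_{\sahk}(\BC_{h+1}^k)^2\le P_{\sahk}\BC_{h+1}^k$ is too lossy at the last inequality. Comparing $\sum_{k,h}P_{\sahk}\BC_{h+1}^k$ to the realized $\sum_{k,h}\BC_{h+1}^k(s_{h+1}^k)$ leaves you with $\sum_{k,h}\BC_{h+1}^k(s_{h+1}^k)=\sum_{k,h}\BC_h^k(s_h^k)-\sum_k\BC_1^k(s_1^k)$, and $\sum_h\BC_h^k(s_h^k)$ can be $\Omega(H)$ per episode; there is no telescope that collapses it to $\sum_h\bc_h^k$. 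The paper keeps the \emph{squares}: it writes $Z=\sum[P(\BC_{h+1}^k)^2-(\BC_{h+1}^k(s_{h+1}^k))^2]+\sum[(\BC_h^k(s_h^k))^2-(P\BC_{h+1}^k)^2]-\sum_k(\BC_1^k)^2$, bounds the first martingale sum by Freedman with predictable variance $\sum\bbV(P,(\BC_{h+1}^k)^2)\le 4Z$ (via \Cref{lem:var_xy}), and uses $(\BC_h^k)^2-(P\BC_{h+1}^k)^2\le 2\max\{\BC_h^k-P\BC_{h+1}^k,0\}=2\max\{\bc_h^k,0\}\le 2\beta_h^k$ for the second. Solving the resulting self-bounding inequality gives $Z\le O(M_2+SA\iota)$.

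For the intra-episode bound $T^k\le O(\iota)$: your Freedman on $\sum_h[\Var_{h+1}^{\pi^k}(s_{h+1}^k)-P_{\sahk}\Var_{h+1}^{\pi^k}]$ does not return $O(\iota)$. Bhatia--Davis gives predictable variance $\le\tfrac14\sum_hP_{\sahk}\Var_{h+1}^{\pi^k}$, but that sum can be $\Omega(H)$: consider a single $\mathrm{Ber}(1/2)$ reward at step $H$, so $\Var_h^{\pi^k}\equiv 1/4$ for all $h$ and $\sum_hP\Var_{h+1}^{\pi^k}=(H-1)/4$. Freedman then yields only $T^k\le\Var_1^{\pi^k}+O(\sqrt{H\iota})$, forcing the truncation cap $C\iota$ to scale with $\sqrt{H}$ and reintroducing the horizon in the cross-episode step. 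The paper instead telescopes $(V_h^{\pi^k})^2$: the Freedman variance is $\sum_h\bbV(P,(V_{h+1}^{\pi^k})^2)\le 4\sum_h\bbV(P,V_{h+1}^{\pi^k})\le 4W^k$, while the drift terms satisfy $(V_h^{\pi^k})^2-(PV_{h+1}^{\pi^k})^2\le 2(V_h^{\pi^k}-PV_{h+1}^{\pi^k})=2r(\sahk)$ and $\sum_h r(\sahk)\le 1$ by \Cref{asp:bounded_reward}. This gives $W^k\le 4\sqrt{2W^k\iota}+O(\iota)$, hence $W^k\le O(\iota)$, after which \Cref{lem:martingale_conc_mean} with $l=O(\iota)$ finishes the cross-episode step exactly as you outlined.
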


\begin{proof} [Proof of \Cref{lem:M4}]
Define
\begin{align}
    \bc_h^k (s, a) := V_h^k (s) - P_{s, a} V_{h + 1}^k - r (s, a) \in [-1, 1], \label{eq:bc_def}
\end{align}
which stands for bonus-correction.
By \Cref{lem:bellman_error}, $\bc_h^k (s, a) \le \beta_h^k (s, a)$.
However, we make the distinction here to be more precise.
Let $\BC_h^k (s) := \bc_h^k (s, a) + P_{s, a} \BC_{h + 1}^k$ with $a = \pi_h^k (s)$ and boundary condition $\BC_{H+1}^k (s) := 0$.
We can prove by induction that
\begin{align}
    \BC_h^k (s) = (V_h^k - V_h^{\pi^k}) (s) \in [0, 1]. \label{eq:BC}
\end{align}
First, 
\begin{align*}
    \BC_H^k (s)
    = \bc_H^k (s, a)
    = V_H^k (s) - r (s, a)
    = V_H^k (s) - V_H^{\pi^k} (s).
\end{align*}
Then assume that $\BC_{h + 1}^k = V_{h + 1}^k - V_{h + 1}^{\pi^k}$, we have
\begin{align*}
    \BC_h^k (s)
    &= \bc_h^k (s, a) + P_{s, a} (V_{h + 1}^k - V_{h + 1}^{\pi^k}) \\
    &= V_h^k (s) - P_{s, a} V_{h + 1}^k - r (s, a) + P_{s, a} (V_{h + 1}^k - V_{h + 1}^{\pi^k}) \\
    &= V_h^k (s) - (r (s, a) + P_{s, a} V_{h + 1}^{\pi^k}) \\
    &= V_h^k (s) - V_h^{\pi^k} (s).
\end{align*}

Define a series of random variables and their truncated values: for any $k \in [K]$,
\begin{align*}
    W^k  &:= \sum_{h = 1}^H (\bbV (R (\sahk) ) + \bbV (P_{\sahk}, V_{h + 1}^{\pi^k})),
    \quad \ov{W^k} := \min\{ W^k, 50 \iota \}.
\end{align*}
Correspondingly, define the following event, which means there is no truncation:
\begin{align*}
    \cE_{W} &:= \{ W^k = \ov{W^k},\ \forall k \in [K] \}.
\end{align*}

We now calculate the probability of no truncation happens.
For any fixed $1 \le k \le K$,
\begin{align*}
    W^k
    & \mylei \sum_{h=1}^H [P_{\sahk} (V_{h+1}^{\pi^k})^2 - (V_{h+1}^{\pi^k} (s_{h+1}^k))^2] + \sum_{h=1}^H [ (V_h^{\pi^k}(s_h^k))^2 - (P_{\sahk}V_{h+1}^{\pi^k})^2]  + \sum_{h = 1}^H r(\sahk) - (V_1^{\pi^k} (s_1^k))^2 \\
    & \myleii 2 \sqrt{ 2 \sum_{h=1}^H \bbV(P_{\sahk}, (V_{h+1}^{\pi^k})^2 ) \iota } + 6 \iota + 2 \sum_{h=1}^H (V_h^{\pi^k}(s_h^k) - P_{\sahk}V_{h+1}^{\pi^k}) + \sum_{h = 1}^H r(\sahk) \\
    & \myleiii 4 \sqrt{ 2 \sum_{h=1}^H \bbV(P_{\sahk}, V_{h+1}^{\pi^k} ) \iota } + 3 \sum_{h=1}^H r(\sahk) + 6 \iota \\
    & \le 4 \sqrt{2 W^k \iota} + 3 + 6 \iota,
\end{align*}
where (i) is by \Cref{lem:bhatia_davis}, $\bbV( R (s, a)) \le \bbE [ R (s, a) ]$;
(ii) is by \Cref{lem:martingale_bound_primal} with $c = \epsilon = 1$, which happens with probability at least $1 - 2 \iota \delta$, and $a^2 - b^2 \le (a + b) \max \{a - b, 0\}$ when $a, b \ge 0$;
(iii) is by \Cref{lem:var_xy} with $C = 1$.
Solving the inequality of $W^k$, we have that
\begin{align*}
    W^k \le 50 \iota.
\end{align*}
This means $\bbP [\cE_W] \ge 1 - 2 K \iota \delta$.

From now on, we suppose $\cE_W$ holds.
We are ready to bound $M_4$:
\begin{align}
    M_4
    &= \sumkh (\bbV ( R (\sahk)) + \bbV (P_{\sahk}, V_{h + 1}^k)) I (k, h + 1) \notag \\
    &\mylei 2 \sumkh (\bbV ( R (\sahk)) + \bbV (P_{\sahk}, V_{h + 1}^{\pi^k})) I (k, h + 1) + 2 \underbrace{\sumkh \bbV (P_{\sahk}, \BC_{h + 1}^k) I (k, h + 1)}_{=: Z} \label{eq:Z_def} \\
    &\le 2 \sum_{k = 1}^K W^k + 2 Z \notag \\
    &\myeqii 2 \sum_{k = 1}^K \ov{W^k} + 2 Z \notag \\
    &\myleiii 6 \sum_{k = 1}^K \bbE[\ov{W^k}\ |\ \cF_k] + 2 Z + 100 \iota \notag \\
    &\le 6 \sum_{k = 1}^K \bbE[W^k\ |\ \cF_k] + 2 Z + 100 \iota^2 \notag \\
    &= 6 \sum_{k = 1}^K \Var_1^{\pi^k} (s_1^k) + 2 Z + 100 \iota^2 \notag \\
    &\le 6 \sum_{k = 1}^K \Var^{\pi^k} + 2 Z + 100 \iota^2 \notag \\
    &\le 6 \maxvar K + 2 Z + 100 \iota^2, \notag
\end{align}
where (i) is by \Cref{eq:BC} and $\bbV (X + Y) \le 2 \bbV (X) + 2 \bbV (Y)$;
(ii) is by $\cE_W$;
(iii) is by \Cref{lem:martingale_conc_mean} with $l = 50 \iota$, which happens with probability at least $1 - \delta$.

It remains to bound the quantity $Z$ we encountered:
\begin{align*}
    Z
    &= \sumkh [P_{\sahk} (\BC_{h + 1}^k)^2 - (\BC_{h + 1}^k (s_{h + 1}^k))^2] I(k, h + 1) \\
    &\quad + \sumkh [(\BC_h^k (s_h^k))^2 I (k, h) - (P_{\sahk} \BC_{h + 1}^k)^2 I(k, h + 1)] - \sum_{k = 1}^K (\BC_1^k (s_1^k))^2 \\
    &\le \sumkh [P_{\sahk} (\BC_{h + 1}^k)^2 - (\BC_{h + 1}^k (s_{h + 1}^k))^2] I(k, h + 1) \\
    &\quad + \sumkh [(\BC_h^k (s_h^k))^2 - (P_{\sahk} \BC_{h + 1}^k)^2] I (k, h + 1) + \abs{\cK^C} \\
    &\mylei 2 \sqrt{2 \sumkh \bbV(P_{\sahk}, (\BC_{h + 1}^k)^2) I(k, h + 1) \iota} + 6 \iota \\
    &\quad + 2 \sumkh \max\{ \BC_h^k (s_h^k) - P_{\sahk} \BC_{h + 1}^k, 0\} I(k, h + 1) + \abs{\cK^C} \\
    &\myleii 4 \sqrt{2 \sumkh \bbV(P_{\sahk}, \BC_{h + 1}^k) I(k, h + 1) \iota} + 6 \iota + 2 \sumkh \max\{ \bc_h^k (\sahk), 0 \} I(k, h + 1) + \abs{\cK^C} \\
    &\le 4 \sqrt{2 Z \iota} + 6 \iota + 2 \sumkh \wc{\beta}_h^k (\sahk) + 2 \abs{\cK^C} \\
    &\le 4 \sqrt{2 Z \iota} + 2 M_2 + 8 S A \iota,
\end{align*}
where (i) is by \Cref{lem:martingale_bound_primal} with $c = \epsilon = 1$, which happens with probability at least $1 - 2 \iota \delta$; (ii) is by \Cref{lem:var_xy} with $C = 1$.
Solving the inequality of $Z$, we have that
\begin{align}
    Z \le 4 M_2 + 48 S A \iota. \label{eq:Z}
\end{align}
So plugging back into the bound of $M_4$ gives the final result.
\end{proof}

\begin{lemma} \label{lem:M3}
Conditioned on the successful events of \Cref{lem:M4}, with probability at least $1 - 2 \iota \delta$, we have that
\begin{align*}
    M_3 \le O (\sqrt{M_4 \iota} +  \sqrt{M_2 \iota} + S A \iota).
\end{align*}
\end{lemma}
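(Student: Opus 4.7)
The plan is to identify an upper bound for $M_3$ with a martingale whose conditional variance can be tied back to $M_4$ and $M_2$ via the machinery already developed for \Cref{lem:M4}. I proceed in three stages.

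First, I apply Bellman telescoping. For a fixed episode $k$, iterating $V_h^{\pi^k}(s_h^k) = r(\sahk) + P_{\sahk} V_{h+1}^{\pi^k}$ across $h$ gives
\begin{align*}
\sum_{h=1}^H r(\sahk) - V_1^{\pi^k}(s_1^k) = \sum_{h=1}^H (\boldone_{s_{h+1}^k} - P_{\sahk}) V_{h+1}^{\pi^k}.
\end{align*}
Since rewards are non-negative and $I(k,h)\in\{0,1\}$, replacing $\sum_h r(\sahk) I(k,h)$ by $\sum_h r(\sahk)$ only enlarges each episode's contribution to $M_3$, so I obtain $M_3 \le \sumkh (\boldone_{s_{h+1}^k} - P_{\sahk}) V_{h+1}^{\pi^k}$.

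Second, I view this right-hand side as a martingale indexed by $(k,h)$: $V_{h+1}^{\pi^k}$ is $\mathcal{F}_{k-1}$-measurable and $s_{h+1}^k \sim P_{\sahk}$. Under \Cref{asp:bounded_reward}, $V_{h+1}^{\pi^k}\in [0,1]$, so increments are bounded by $1$ in absolute value and the per-step conditional variance is $\bbV(P_{\sahk}, V_{h+1}^{\pi^k})$. Applying \Cref{lem:martingale_bound_primal} with $c=\epsilon=1$, with probability at least $1-2\iota\delta$ I obtain
\begin{align*}
M_3 \le O\left(\sqrt{V_{\mathrm{tot}}\,\iota} + \iota\right), \qquad V_{\mathrm{tot}} := \sumkh \bbV(P_{\sahk}, V_{h+1}^{\pi^k}).
\end{align*}

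Third, I tie $V_{\mathrm{tot}}$ back to $M_4$ and $M_2$ by splitting the sum according to whether $I(k,h+1)=1$. For the portion with $(k,h+1)\in\mathcal{X}$, the per-episode total $\sum_h \bbV(P_{\sahk}, V_{h+1}^{\pi^k}) \le W^k \le 50\iota$ by the truncation event $\mathcal{E}_W$ already established in \Cref{lem:M4}'s proof; summing over $k\in\mathcal{K}^C$ (of size $O(SA\iota)$) yields $O(SA\iota^2)$. For the portion with $I(k,h+1)=1$, I use the decomposition $V^{\pi^k} = V^k - BC^k$ from \Cref{eq:BC} together with $\bbV(X+Y)\le 2\bbV(X)+2\bbV(Y)$, bounding this piece by $2M_4 + 2Z$, and then apply $Z \le 4M_2 + 48SA\iota$ from \Cref{eq:Z}. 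Altogether $V_{\mathrm{tot}} \le O(M_4 + M_2 + SA\iota^2)$, and substituting into the Step 2 inequality gives the claim (noting that $\sqrt{SA\iota^3}$ is absorbed by $SA\iota + \iota^2$ via AM-GM).

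The main obstacle is preserving horizon-freeness in the third stage: a naive $\bbV(X+Y)$ expansion applied across all $(k,h)$ would pick up an $O(HSA\iota)$ term from the $\mathcal{X}$ steps (since the variances are bounded by $1$ but there are up to $H\abs{\mathcal{K}^C}$ such steps), which would break the desired guarantee. The per-episode truncation event $\mathcal{E}_W$ from \Cref{lem:M4} is precisely what lets me dispose of those steps with only an $O(SA\iota^2)$ overhead, and the reuse of the already-proved bound on $Z$ avoids re-opening the bookkeeping around $BC^k$.
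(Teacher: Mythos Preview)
Your proof is correct and follows essentially the same route as the paper: Bellman telescoping to a martingale in $V_{h+1}^{\pi^k}$, a Freedman-type bound via \Cref{lem:martingale_bound_primal}, and then the decomposition $V^{\pi^k}=V^k-\BC^k$ together with the already-established bound $Z\le 4M_2+O(SA\iota)$ from \Cref{eq:Z}. The only tactical difference is that the paper keeps the indicator $I(k,h+1)$ through the telescoping (so its martingale variance is directly $\le 2M_4+2Z$ and no separate handling of $\cX$-steps is needed), whereas you drop $I(k,h)$ up front and then recover the $\cX$-portion via $\cE_W$; this buys the paper a slightly cleaner constant ($SA\iota$ instead of your $SA\iota+\iota^2$), but the arguments are otherwise equivalent.
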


\begin{proof} [Proof of \Cref{lem:M3}]
\begin{align*}
    M_3
    &= \sum_{k = 1}^K \left( \sum_{h = 1}^H r (\sahk) I(k, h) - V_1^{\pi^k} (s_1^k) I(k, 1) \right) \\
    &= \sum_{k = 1}^K \left( \sum_{h = 1}^H ( V_h^{\pi^k} (s_h^k) - P_{\sahk} V_{h + 1}^{\pi^k} ) I(k, h) - V_1^{\pi^k} (s_1^k) I(k, 1) \right) \\
    &\le \sumkh (V_{h + 1}^{\pi^k} (s_{h + 1}^k) - P_{\sahk} V_{h + 1}^{\pi^k}) I(k, h + 1) + \abs{\cK^C} \\
    &\mylei 2 \sqrt{2 \sumkh \bbV (P_{\sahk}, V_{h + 1}^{\pi^k}) I(k, h + 1) \iota} + 6 \iota + S A \iota \\
    &\myleii 4 \sqrt{M_4 \iota} + 4 \sqrt{Z \iota} + 7 S A \iota \\
    &\myleiii 4 \sqrt{M_4 \iota} + 8 \sqrt{M_2 \iota} + 35 S A \iota,
\end{align*}
where (i) is by \Cref{lem:martingale_bound_primal} with $c = \epsilon = 1$, which happens with probability at least $1 - 2 \iota \delta$;
(ii) is by \Cref{eq:BC}, $\bbV (X + Y) \le 2 \bbV (X) + 2 \bbV (Y)$ and definition of $Z$ (\Cref{eq:Z_def});
(iii) is by \Cref{eq:Z}.
\end{proof}

\begin{lemma} \label{lem:M5}
Conditioned on the successful events of \Cref{lem:bellman_error}, with probability at least $1 - 2 \iota \delta$, we have that
\begin{align*}
    M_5 \le O (M_2 + S A \iota).
\end{align*}
\end{lemma}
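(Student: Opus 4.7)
The plan is to mirror the proof of the $Z$-term bound inside \Cref{lem:M4}, with the bonus-correction quantity $\BC_h^k=V_h^k-V_h^{\pi^k}$ replaced by the optimism gap $G_h^k := V_h^k - V_h^\star$. By \Cref{lem:opt} we have $G_h^k\ge 0$, and by the algorithm's update rule together with \Cref{lem:bellman_error} and the greedy choice $a_h^k=\pi_h^k(s_h^k)$, the central one-step inequality
\begin{align*}
    G_h^k(s_h^k) \;\le\; P_{\sahk} G_{h+1}^k + \beta_h^k(\sahk)
\end{align*}
holds at every visited state-action pair. This is the analogue of $\BC_h^k(s)=\bc_h^k(s,a)+P_{s,a}\BC_{h+1}^k$ used for $Z$.

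Next I would write
\begin{align*}
    M_5
    &= \sumkh \bigl[P_{\sahk}(G_{h+1}^k)^2 - (G_{h+1}^k(s_{h+1}^k))^2\bigr] I(k,h+1) \\
    &\quad + \sumkh \bigl[(G_h^k(s_h^k))^2 I(k,h) - (P_{\sahk}G_{h+1}^k)^2 I(k,h+1)\bigr] - \sum_{k=1}^K (G_1^k(s_1^k))^2 I(k,1),
\end{align*}
exactly as in the $Z$ manipulation. The first line is a martingale-difference sum, handled by \Cref{lem:martingale_bound_primal} with $c=\epsilon=1$ and \Cref{lem:var_xy} with $C=1$, producing a self-bounding term of the form $4\sqrt{2 M_5\, \iota}+6\iota$. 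The final $-\sum_k(G_1^k(s_1^k))^2 I(k,1)\le 0$ is dropped.

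For the middle sum I would split on whether $I(k,h)=I(k,h+1)$ or not. The transitions $I(k,h)=1,\ I(k,h+1)=0$ occur only at $h=h_0(k)$, i.e.\ at the $|\cX_0|=|\cK^C|=O(SA\iota)$ many first-update steps, each contributing at most $(G_h^k(s_h^k))^2\le 1$. On the remaining indices we can replace $I(k,h)$ by $I(k,h+1)$ and use $a^2-b^2\le 2(a-b)$ for $a,b\in[0,1]$ together with the one-step inequality above, yielding
\begin{align*}
    \sumkh \bigl[(G_h^k(s_h^k))^2 - (P_{\sahk}G_{h+1}^k)^2\bigr] I(k,h+1) + |\cK^C|
    \;\le\; 2\sumkh \beta_h^k(\sahk)\, I(k,h+1) + O(SA\iota)
    \;\le\; 2 M_2 + O(SA\iota),
\end{align*}
where the last step uses $I(k,h+1)\le I(k,h)$ and the definition $M_2=\sumkh \wc{\beta}_h^k(\sahk)$. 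Combining with the martingale bound gives $M_5 \le 4\sqrt{2 M_5\,\iota} + 2 M_2 + O(SA\iota)$, and solving this quadratic in $\sqrt{M_5}$ yields the claim.

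I expect the only subtle point to be the bookkeeping on the indicator functions when telescoping the second sum—specifically, justifying that the mismatch between $I(k,h)$ and $I(k,h+1)$ contributes only an additive $|\cK^C|=O(SA\iota)$ term rather than something that scales with $K$. Everything else is a direct transcription of the $Z$-bound inside \Cref{lem:M4}, with the optimism gap $G_h^k$ playing the role previously played by $\BC_h^k$.
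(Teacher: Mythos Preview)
Your proposal is correct and essentially identical to the paper's proof: the paper also sets $\wt V_h^k := V_h^k - V_h^\star$, performs the same telescoping decomposition, applies \Cref{lem:martingale_bound_primal} (with $c=\epsilon=1$) and \Cref{lem:var_xy} (with $C=1$) to the martingale piece, absorbs the $I(k,h)$ vs.\ $I(k,h+1)$ mismatch into $|\cK^C|=O(SA\iota)$, bounds $\wt V_h^k(s_h^k)-P_{\sahk}\wt V_{h+1}^k\le \beta_h^k(\sahk)$ via \Cref{lem:bellman_error}, and solves $M_5\le 4\sqrt{2M_5\iota}+2M_2+8SA\iota$. The only cosmetic slip is that $a^2-b^2\le 2(a-b)$ should be $a^2-b^2\le 2\max\{a-b,0\}$ for $a,b\in[0,1]$; since $\beta_h^k\ge 0$ this does not affect the bound.
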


\begin{proof} [Proof of \Cref{lem:M5}]
Define $\wt{V}_h^k = V_h^k - V_h^\star$.
\begin{align*} 
    M_5
    &= \sumkh [P_{\sahk} (\wt{V}_{h + 1}^k)^2 - (\wt{V}_{h + 1}^k (s_{h + 1}^k))^2] I(k, h + 1) \\
    &\quad + \sumkh [(\wt{V}_h^k (s_h^k))^2 I (k, h) - (P_{\sahk} \wt{V}_{h + 1}^k)^2 I(k, h + 1)] - \sum_{k = 1}^K (\wt{V}_1^k (s_1^k))^2 \\
    &\le \sumkh [P_{\sahk} (\wt{V}_{h + 1}^k)^2 - (\wt{V}_{h + 1}^k (s_{h + 1}^k))^2] I(k, h + 1) \\
    &\quad + \sumkh [(\wt{V}_h^k (s_h^k))^2 - (P_{\sahk} \wt{V}_{h + 1}^k)^2] I (k, h + 1) + \abs{\cK^C} \\
    &\mylei 2 \sqrt{2 \sumkh \bbV(P_{\sahk}, (\wt{V}_{h + 1}^k)^2) I(k, h + 1) \iota} + 6 \iota \\
    &\quad + 2 \sumkh \max\{ \wt{V}_h^k (s_h^k) - P_{\sahk} \wt{V}_{h + 1}^k, 0\} I(k, h + 1) + \abs{\cK^C} \\
    &\myleii 4 \sqrt{2 M_5 \iota} + 2 \sumkh \wc{\beta}_h^k (\sahk) + 2 \abs{\cK^C} \\
    &\le 4 \sqrt{2 M_5 \iota} + 2 M_2 + 8 S A \iota,
\end{align*}
where (i) is by \Cref{lem:martingale_bound_primal} with $c = \epsilon = 1$, which happens with probability at least $1 - 2 \iota \delta$;
(ii) is by \Cref{lem:var_xy} with $C = 1$ and the following argument:
by \Cref{lem:bellman_error},
\begin{align*}
    \wt{V}_h^k (s_h^k) - P_{\sahk} \wt{V}_{h + 1}^k
    \le \wt{Q}_h^k (\sahk) - P_{\sahk} \wt{V}_{h + 1}^k
    \le \beta_h^k (\sahk).
\end{align*}
Solving the inequality of $M_5$, we have that
\begin{align*}
    M_5 \le 4 M_2 + 48 S A \iota.
\end{align*}
This completes the proof.
\end{proof}

\begin{lemma}\label{lem:M6}
With probability at least $1 - 4 K \iota \delta$, we have that for any $k \in [K]$,
\begin{align*}
    \multistepvarepisode \le O (\iota).
\end{align*}
As a result,
\begin{align*}
    M_6 \le \multistepvar \le O ( K \iota ).
\end{align*}
\end{lemma}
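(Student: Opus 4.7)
The plan is to mirror the self-bounding argument used for $W^k$ in the proof of \Cref{lem:M4}, but with $V^\star$ in place of $V^{\pi^k}$. The substantive difference is that the Bellman equation no longer holds with equality along the trajectory, since the trajectory is generated by $\pi^k$ rather than $\pi^\star$; however, Bellman optimality still gives $V_h^\star(s_h^k) - P_{\sahk} V_{h+1}^\star \ge r(\sahk) \ge 0$, which is exactly what is needed for the ``difference of squares'' step.

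Fix an episode $k$. First I would write
\[
\multistepvarepisode = \sum_{h=1}^H \bbV(R(\sahk)) + \sum_{h=1}^H \left[P_{\sahk}(V_{h+1}^\star)^2 - (P_{\sahk} V_{h+1}^\star)^2\right],
\]
bound the reward term by $\sum_h \bbV(R(\sahk)) \le \sum_h r(\sahk) \le 1$ via \Cref{lem:bhatia_davis} and \Cref{asp:bounded_reward}, and telescope the second sum so that it splits into a martingale-difference piece $A := \sum_h [P_{\sahk}(V_{h+1}^\star)^2 - (V_{h+1}^\star(s_{h+1}^k))^2]$ and $\sum_h [(V_h^\star(s_h^k))^2 - (P_{\sahk} V_{h+1}^\star)^2] - (V_1^\star(s_1^k))^2$. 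For the latter, I apply $a^2 - b^2 \le 2(a-b)_+$ for $a, b \in [0, 1]$ and use the Bellman optimality inequality to drop the positive part, obtaining $2 \sum_h (V_h^\star(s_h^k) - P_{\sahk} V_{h+1}^\star)$. Telescoping once more gives $2 V_1^\star(s_1^k) + 2 B$, where $B := \sum_h (V_{h+1}^\star(s_{h+1}^k) - P_{\sahk} V_{h+1}^\star)$ is another martingale-difference sum with conditional variance $\sum_h \bbV(P_{\sahk}, V_{h+1}^\star) \le \multistepvarepisode$.

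Both $A$ and $B$ are then controlled by \Cref{lem:martingale_bound_primal} with $c = \epsilon = 1$, each failing with probability at most $2\iota\delta$. For $A$, \Cref{lem:var_xy} with $C = 1$ gives $\bbV(P_{\sahk}, (V_{h+1}^\star)^2) \le 4\bbV(P_{\sahk}, V_{h+1}^\star)$, so $|A| \le O(\sqrt{\multistepvarepisode\, \iota} + \iota)$, and similarly $|B| \le O(\sqrt{\multistepvarepisode\, \iota} + \iota)$. Collecting everything and using $V_1^\star(s_1^k) \le 1$ yields
\[
\multistepvarepisode \le O\!\left(\sqrt{\multistepvarepisode \, \iota}\right) + O(\iota),
\]
which, solved as a quadratic inequality in $\sqrt{\multistepvarepisode}$, gives $\multistepvarepisode \le O(\iota)$.

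A union bound over the two concentration events per episode and over $k \in [K]$ costs at most $4K\iota\delta$ in failure probability. Summing over $k$ yields $\multistepvar \le O(K\iota)$, and $M_6 \le \multistepvar$ since its summand is the same quantity multiplied by indicators $I(k, h+1) \in \{0, 1\}$. The main obstacle, as in \Cref{lem:M4}, is avoiding any $H$-dependence in the final bound; this is precisely why I invoke the variance-sensitive \Cref{lem:martingale_bound_primal} rather than a naive Azuma--Hoeffding inequality, so that the estimate on $\multistepvarepisode$ becomes self-bounding and horizon-free.
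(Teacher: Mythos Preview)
Your proposal is correct and follows essentially the same route as the paper: telescope $\sum_h \bbV(P_{\sahk},V_{h+1}^\star)$ into the martingale pieces $A$ and $B$, use Bellman optimality $V_h^\star(s_h^k)\ge P_{\sahk}V_{h+1}^\star$ to drop the positive part, apply \Cref{lem:martingale_bound_primal} (with \Cref{lem:var_xy}) twice, and close the self-bounding inequality. The only cosmetic difference is that you peel off $\sum_h \bbV(R(\sahk))\le \sum_h r(\sahk)\le 1$ up front, whereas the paper carries $\sum_h r(\sahk)$ through the telescoping before bounding it by $1$.
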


\begin{proof} [Proof of \Cref{lem:M6}]
For any $k \in [K]$,
\begin{align*}
    \multistepvarepisode
    &\le \sum_{h = 1}^H [P_{\sahk} (V_{h + 1}^\star)^2 - (V_{h + 1}^\star (s_{h + 1}^k))^2] + \sum_{h = 1}^H [(V_h^\star (s_h^k))^2 - (P_{\sahk} V_{h + 1}^\star)^2] + \sum_{h = 1}^H r (\sahk) - (V_1^\star (s_1^k))^2 \\
    &\mylei 2 \sqrt{2 \sum_{h = 1}^H \bbV(P_{\sahk}, (V_{h + 1}^\star)^2) \iota} + 6 \iota + 2 \sum_{h = 1}^H \max\{ \underbrace{V_h^\star (s_h^k) - P_{\sahk} V_{h + 1}^\star}_{\ge Q_h^\star (\sahk) - P_{\sahk} V_{h + 1}^\star \ge 0}, 0\} + 1 \\
    &\myleii 4 \sqrt{2 \sum_{h = 1}^H \bbV(P_{\sahk}, V_{h + 1}^\star) \iota} + 7 \iota + 2 \sum_{h = 1}^H (V_{h + 1}^\star (s_{h + 1}^k) - P_{\sahk} V_{h + 1}^\star) + \underbrace{2 V_1^\star (s_1^k)}_{\le 2}\\
    &\myleiii 4 \sqrt{2 \multistepvarepisode \iota} + 9 \iota + 4 \sqrt{2 \sum_{h = 1}^H \bbV(P_{\sahk}, V_{h + 1}^\star) \iota} + 12 \iota\\
    &\le 8 \sqrt{2 \multistepvarepisode \iota} + 21 \iota,
\end{align*}
where (i) is by \Cref{lem:martingale_bound_primal} with $c = \epsilon = 1$, which happens with probability at least $1 - 2 \iota \delta$;
(ii) is by \Cref{lem:var_xy} with $C = 1$;
(iii) is by \Cref{lem:martingale_bound_primal} with $c = \epsilon = 1$, which happens with probability at least $1 - 2 \iota \delta$.
Solving the inequality of $\multistepvarepisode$, we have that
\begin{align*}
    \multistepvarepisode \le 170 \iota.
\end{align*}
So taking a union bound over $k$ we have the desired result.
\end{proof}

\subsection{Model-Free Algorithm: \ucbadvalt~(\Cref{alg:ucbadv_alt})} \label{sec:mf_proof}

\paragraph{Summary of notations.}
Let $\sahk$ and $r_h^k$ denote the state, action and reward at the $h$-th step of the $k$-th episode.
Let $V_h^k (s)$, $Q_h^k (s, a)$, $V_h^{\tref, k}$, $N_h^k (s, a)$ and $\wc{N}_h^k (s, a)$ denote $V_h (s)$, $Q_h (s, a)$, $N_h (s, a)$ and $\wc{N}_h (s, a)$ at the beginning of the $k$-th episode.
Let $V_h^{\REF} := V_h^{\tref, K + 1}$ denote the final reference value function.
Let $N_h^k (s) := \sum_{a \in \cA} N_h^k (s, a)$.
$N_h^{K + 1} (s, a)$ denotes the total number of visits of $(s, a, h)$ after all $K$ episodes are done.

Define $e_1 = H$ and $e_{i + 1} = \floor{(1 + 1 / H) e_i}$.
The definition of stages is with respect to the triple $(s, a, h)$.
For any fixed pair of $k$ and $h$, we say that $(k, h)$ falls in the $j$-th stage of $(s, a, h)$ if and only if $(s, a) = (\sahk)$ and the total visit number of $(\sahk)$ after the $k$-th episode is in $(\sum_{i = 1}^{j - 1} e_i, \sum_{i = 1}^j e_i]$.

Let $\wc{\upsilon}_h^k$, $\mc_h^k$, $\wc{\sigma}_h^k$, $\mu_h^{\tref, k}$, $\sigma_h^{\tref, k}$, $\wh{\VarR}_h^k$, $\bar{b}_h^k$, $\nu_h^{\tref, k}$, $\nc_h^k$ and $b_h^k$ denote $\wc{\upsilon}$, $\mc$, $\wc{\sigma}$, $\mu^{\tref}$, $\sigma^{\tref}$, $\wh{\VarR}$, $\bar{b}$, $\nu^{\tref}$, $\nc$ and $b$ calculated for the value of $Q_h^k (\sahk)$.

For each $k$ and $h$, let $n_h^k$ be the total number of visits to $(\sahk, h)$ prior to the current stage with respect to the same triple and let $n_h^k$ be the number of visits to the same triple during the stage immediately before the current stage.
Let $l_{h, i}^k$ and $\wc{l}_{h, i}^k$ denote the index of the $i$-th episode among the $n_h^k$ and $\wc{n}_h^k$ episodes defined above, respectively.
When $h$ and $k$ are clear from the context, we use $l_i$ and $\wc{l}_i$ for short.

\begin{proof} [Proof of \Cref{thm:main_mf}]
We first run \ucbadvalt~(\Cref{alg:ucbadv_alt}) with $\iota = 99 (\ln (H S A K / \delta) + 1)$ which is large enough for all the probabilistic inequalities to hold.
This choice will make the success probability be $1 - \poly (S, A, H, K, \iota) \delta$.
The lemmas are also proved assuming this choice of $\iota$ at first.

Define
\begin{align*}
    &\psi_{h + 1}^k := \frac{1}{n_h^k} \sum_{i = 1}^{n_h^k} P_{\sahk, h} (\Vrl - V_{h + 1}^{\REF}), \\
    &\xi_{h + 1}^k := \frac{1}{\wc{n}_h^k} \sum_{i = 1}^{\wc{n}_h^k} [P_{\sahk, h} (\Vrlc - V_{h + 1}^\star) - (\Vrlc (\slc) - V_{h + 1}^\star (\slc))], \\
    &\phi_{h + 1}^k := P_{\sahk, h} (V_{h + 1}^\star - V_{h + 1}^{\pi^k}) - (V_{h + 1}^\star (s_{h + 1}^k) - V_{h + 1}^{\pi^k} (s_{h + 1}^k)).
\end{align*}
Combining Section\,4.2 in \citet{paper:ucbadv} with \Cref{lem:mf_opt,lem:err_cnt,lem:sum_opt_gap,lem:db_err,lem:psi,lem:xi,lem:phi,lem:nu_ref,lem:mf_VarR,lem:mf_sum_bonus}, we have that with probability at least $1 - 35 H S A K \iota \delta$,
\begin{align*}
    \Regret (K)
    &\le \sumkh \left( 1 + \frac{1}{H} \right)^{h - 1} (\psi_{h + 1}^k + \xi_{h + 1}^k + \phi_{h + 1}^k + 2 b_h^k) \\
    &\le O (\sqrt{\multistepvar H S A \iota} + \sqrt{H^5 S A K \iota^2 / 2^{2 \is}} + H^5 S^2 A 2^\is \iota^2).
\end{align*}
Taking $\is = \ceil{1/2 \cdot \log_2 (K / H^5 S^3 A \iota^2)}$, we have:
\begin{align*}
    \Regret (K) \le O (\sqrt{\multistepvar H S A \iota} + \sqrt[4]{H^{15} S^5 A^3 K \iota^6}).
\end{align*}
Now we apply \Cref{lem:mf_sum_opt_var}.
With probability at least $1 - 46 H S A K \iota \delta$,
\begin{align*}
    \Regret (K)
    \le O \left( \sqrt{H S A K \iota \sum_{k = 1}^K \Var^{\pi^k}} + \sqrt[4]{H^{15} S^5 A^3 K \iota^6} \right)
    \le O (\sqrt{\maxvar H S A K \iota} + \sqrt[4]{H^{15} S^5 A^3 K \iota^6}).
\end{align*}

The final result is established by scaling $\delta$ to make the success probability be $1 - \delta'$.
We upper-bound $\iota$ by $100 (H S A K / \sqrt{\delta} + 1)$ and solve the inequality:
\begin{align*}
    4600 H S A K \left( \frac{H S A K}{\sqrt{\delta}} + 1 \right) \delta \le \delta'.
\end{align*}
Take $\delta = (\delta' / 7000 H^2 S^2 A^2 K^2)^2$.
By $\ln (H S A K / (\delta / 7000 H^2 S^2 A^2 K^2)^2) \le O (\iota)$, we conclude the proof.
\end{proof}

\begin{restatable}{lemma}{lemMfOpt}  \label{lem:mf_opt}
With probability at least $1 - 15 H S A K \iota \delta$, we have that for any $(s, a, h, k) \in \cS \times \cA \times [H] \times [K]$, 
\begin{align*}
    Q_h^\star (s, a) \le Q_h^{k + 1} (s, a) \le Q_h^k (s ,a).
\end{align*}
\end{restatable}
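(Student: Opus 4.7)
The plan follows the standard optimism template for UCB-style algorithms, but has to accommodate both candidate updates inside the $\min$ rule of \Cref{alg:ucbadv_alt}. Monotonicity $Q_h^{k+1}(s,a) \le Q_h^k(s,a)$ is immediate because the update explicitly minimizes with the previous $Q_h(s,a)$, and the boundary $Q_{H+1}^\star(s,a) = 0 \le Q_{H+1}^{k+1}(s,a)$ holds from initialization. For the nontrivial direction $Q_h^\star(s,a) \le Q_h^{k+1}(s,a)$, I would carry out a double induction: an outer induction on $k$, and within each $k$ a backward induction on $h$. The key inductive invariant is that every $V_{h+1}^{l_i}$, $V_{h+1}^{\wc{l}_i}$, $V_{h+1}^{\tref, l_i}$, $V_{h+1}^{\tref, \wc{l}_i}$ present in the accumulators when episode $k$ begins was written during an earlier episode at which it already dominated $V_{h+1}^\star$ pointwise. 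Since the update is a $\min$ of two candidates and the old $Q$-value, it suffices to show each candidate upper bounds $Q_h^\star(s,a) = r(s,a) + P_{s,a,h} V_{h+1}^\star$ on a high-probability event.

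For the Hoeffding candidate $\wh r + \wc{\upsilon}/\wc n + \bar b$, applying \Cref{lem:hoeffding} to $\wh r - r(s,a)$ and to $\wc{\upsilon}/\wc n - \frac{1}{\wc n}\sum_i P_{s,a,h} V_{h+1}^{\wc l_i}$ gives deviations of order $\sqrt{\iota/\wc n}$ and $\sqrt{H^2 \iota/\wc n}$ respectively, both absorbed by $\bar b = 2\sqrt{H^2 \iota/\wc n}$; the inductive lower bound on the sampled values then yields optimism. For the Bennett candidate $\wh r + \mu^\tref/n + \mc/\wc n + b$, I would split $V_{h+1}^\star = V_{h+1}^{\tref} + (V_{h+1}^\star - V_{h+1}^{\tref})$ and apply \Cref{lem:bennett_empirical} twice: once to control $\mu^\tref/n - \frac{1}{n}\sum_i P_{s,a,h} V_{h+1}^{\tref, l_i}$ in terms of $\sqrt{\nu^\tref \iota/n}$, and once to control $\mc/\wc n - \frac{1}{\wc n}\sum_i P_{s,a,h}(V_{h+1}^{\wc l_i} - V_{h+1}^{\tref, \wc l_i})$ in terms of $\sqrt{\nc \iota/\wc n}$, while the reward term is handled by an analogue of \Cref{lem:VarR}. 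A routine algebraic check in the spirit of \Cref{lem:mvp_func} then shows that the four summands in $b$, together with the $90 H \iota/\wc n$ slack, dominate all concentration errors as well as the $O(H\iota/n) + O(H\iota/\wc n)$ gap between empirical and true variances. A union bound over $(s,a,h,k)$ and the at most $\log_2(KH)$ stage boundaries per triple produces the stated $1 - 15 H S A K \iota \delta$ failure probability.

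The main obstacle, in contrast to the analysis of \mvpalt, is that $V^\tref$ is not a fixed function: it changes up to $\is$ times per state under the capped-doubling schedule of $\cR$, so the Bennett concentrations for $\mu^\tref$ and $\wc\mu$ must be applied to martingale differences whose deterministic targets $P_{s,a,h} V_{h+1}^{\tref, l_i}$ shift across the accumulator's lifetime. The way out is to observe that the reference trigger event in Line~\ref{line:ref_upd} only overwrites $V_h^\tref$, never the already-committed summands inside $\mu^\tref$ or $\wc\mu$, so each stage yields a bona fide sum of bounded-conditional-mean increments to which \Cref{lem:bennett_empirical} applies verbatim. Once this bookkeeping is handled, the remaining steps reduce to the same optimism argument used by \citet{paper:ucbadv}, with the empirical-variance bonuses replacing their coarser $H$-based ones.
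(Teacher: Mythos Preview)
Your outline for the Hoeffding candidate and the overall induction scaffold matches the paper. The gap is in the Bernstein candidate.

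You propose to apply \Cref{lem:bennett_empirical} to the accumulator $\mu^\tref$ in order to control $\frac{1}{n}\sum_i V_{h+1}^{\tref,l_i}(s_{h+1}^{l_i}) - \frac{1}{n}\sum_i P_{s,a,h} V_{h+1}^{\tref,l_i}$ directly in terms of the algorithm's empirical variance $\nu^\tref$. This does not work: \Cref{lem:bennett_empirical} is stated for i.i.d.\ random variables, whereas the summands here form a martingale difference sequence (each $V_{h+1}^{\tref,l_i}$ is $\cF_{l_i}$-measurable and the corresponding next state is sampled afterward). Saying the summands are ``committed'' establishes only that the martingale is well-defined; it does not recover independence. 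The paper instead applies the Freedman-type bound \Cref{lem:martingale_bound_primal}, which yields a deviation controlled by the \emph{predictable variation} $X := \sum_{i=1}^{n} \bbV(P_{s,a,h}, V_{h+1}^{\tref,l_i})$, not by $n\nu^\tref$.

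Bridging $X$ and $n\nu^\tref$ is then the real work, and it is not the ``routine algebraic check in the spirit of \Cref{lem:mvp_func}'' you describe. \Cref{lem:mvp_func} is a monotonicity statement used in the model-based proof and plays no role here. What the paper actually does is write $X = n\nu^\tref + \chi_3 + \chi_4 + \chi_5$, where $\chi_3 = \sum_i [P_{s,a,h}(V_{h+1}^{\tref,l_i})^2 - (V_{h+1}^{\tref,l_i}(s_{h+1}^{l_i}))^2]$ is itself a martingale bounded via \Cref{lem:martingale_bound_primal} together with \Cref{lem:var_xy}, $\chi_4$ is handled similarly, and $\chi_5 \le 0$ by Cauchy--Schwarz. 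This produces a self-bounding inequality $X \le n\nu^\tref + O(H\sqrt{X\iota}) + O(H^2\iota)$, whose solution gives $X \le 2n\nu^\tref + O(H^2\iota)$; only then does one obtain $|\chi_1| \le 4\sqrt{\nu^\tref\iota/n} + O(H\iota/n)$ and see that the constants in $b$ suffice. The same decomposition is needed for $\wc\mu$ and $\nc$. Without this step your argument has no mechanism to relate the martingale's conditional variances (which shift with each reference update) to the single scalar $\nu^\tref$ that the algorithm actually computes.
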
 

\begin{proof} [Proof of \Cref{lem:mf_opt}]
Recall that the update rule is:
\begin{align}
    Q_h (\sahk) \gets \min \left\{\rule{0cm}{0.5cm}\right. \underbrace{\wh{r}_h (\sahk) + \frac{\wc{\upsilon}}{\wc{n}} + \bar{b}}_{\textup{\ding{172}}},\
    \underbrace{ \wh{r}_h (\sahk) + \frac{\mref}{n} + \frac{\mc}{\wc{n}} + b }_{\textup{\ding{173}}},\
    Q_h (\sahk) \left.\rule{0cm}{0.5cm}\right\}. \label{eq:mf_Q_upd}
\end{align}
We prove by induction on $k$.
Clearly for $k = 1$ the argument is true.

For case \ding{172} in \Cref{eq:mf_Q_upd}, we have that (omit the subscripts of $h$ and superscripts of $k$ for simplicity)
\begin{align*}
    Q_h^{k + 1} (s, a)
    &= r_h (s, a) + \frac{1}{\wc{n}} \sum_{i = 1}^{\wc{n}} \Vl (\sli) + (\wh{r}_h (s, a)  - r_h (s, a)) + \bar{b} \\
    &\mygei r_h (s, a) + \frac{1}{\wc{n}} \sum_{i = 1}^{\wc{n}} V_{h + 1}^\star (\sli) + (\wh{r}_h (s, a)  - r_h (s, a)) + \bar{b} \\
    &\mygeii r_h (s, a) + P_{s, a, h} V_{h + 1}^\star - \sqrt{ \frac{H^2 \iota}{2 \wc{n}} } + (\wh{r}_h (s, a)  - r_h (s, a)) + \bar{b} \\
    &\mygeiii r_h (s, a) + P_{s, a, h} V_{h + 1}^\star - \sqrt{ \frac{H^2 \iota}{2 \wc{n}} } - \sqrt{ \frac{ \iota}{2 n} } + \bar{b} \\
    &\ge Q_{h + 1}^\star (s, a),
\end{align*}
where (i) is by induction $V^u \ge V^\star$ for any $1 \le u \le k$;
(ii) is by \Cref{lem:hoeffding} with $b = H$, which holds with probability at least $1 - \delta$;
(iii) is by \Cref{lem:hoeffding} with $b = 1$, which holds with probability at least $1 - \delta$.

Define
\begin{align*}
    \chi_1 &:= \frac{1}{n} \sum_{i = 1}^n (\Vrl (\sli) - P_{s, a, h} \Vrl), \\
    \chi_2 &:= \frac{1}{\wc{n}} \sum_{i = 1}^n [ (\Vl - \Vrl) (\sli) - P_{s, a, h} (\Vl - \Vrl) ].
\end{align*}
For case \ding{173} in \Cref{eq:mf_Q_upd}, we have that
\begin{align*}
    Q_h^{k + 1} (s, a)
    &= \wh{r}_h (s, a) + P_{s, a, h} \left( \frac{1}{n} \sum_{i = 1}^n \Vrl \right) + P_{s, a, h} \left( \frac{1}{\wc{n}} \sum_{i = 1}^{\wc{n}} ( \Vlc - \Vrlc) \right) + \chi_1 + \chi_2 + b \\
    &\mygei r_h (s, a) + P_{s, a, h} \left( \frac{1}{\wc{n}} \sum_{i = 1}^{\wc{n}} \Vlc \right) + \chi_1 + \chi_2 + (r_h (s, a) - \wh{r}_h (s, a)) + b \\
    &\mygeii r_h (s, a) + P_{s, a, h} V_{h + 1}^\star + \chi_1 + \chi_2 + (r_h (s, a) - \wh{r}_h (s, a)) + b \\
    &= Q_h^\star (s, a) + \chi_1 + \chi_2 + (r_h (s, a) - \wh{r}_h (s, a)) + b,
\end{align*}
where (i) is by that $V_{h + 1}^{\tref, u}$ is non-increasing in $u$;
(ii) is by the induction $V^u \ge V^\star$ for any $1 \le u \le k$.

From \Cref{lem:martingale_bound_primal} with $c = H, \epsilon = c^2$, we have that with probability at least $1 - 2 \iota \delta$,
\begin{align}
    \abs{\chi_1} \le \frac{1}{n} \left( 2 \sqrt{2 \underbrace{\sum_{i = 1}^n \bbV (P_{s, a, h}, \Vrl)}_{=: X} \iota} + 6 H \iota \right). \label{eq:mf_chi1}
\end{align}
Define
\begin{align*}
    \chi_3 &:= \sum_{i = 1}^n [P_{s, a, h} (\Vrl)^2 - (\Vrl (\sli))^2], \\
    \chi_4 &:= \frac{1}{n} \left( \sum_{i = 1}^n \Vrl (\sli) \right)^2 - \frac{1}{n} \left( \sum_{i = 1}^n P_{s, a, h} \Vrl \right)^2, \\
    \chi_5 &:= \frac{1}{n} \left( \sum_{i = 1}^n P_{s, a, h} \Vrl \right)^2 - \sum_{i = 1}^n (P_{s, a, h} \Vrl)^2,
\end{align*}
then it is easy to verify that
\begin{align}
    X = n \nref + \chi_3 + \chi_4 + \chi_5. \label{eq:nu_X}
\end{align}
By \Cref{lem:martingale_bound_primal} with $c = H^2, \epsilon = c^2$, and \Cref{lem:var_xy} with $C = H$, we have that with probability at least $1 - 2 \iota \delta$,
\begin{align}
    \chi_3
    \le 2 \sqrt{2 \sum_{i = 1}^n \bbV (P_{s, a, h}, (\Vrl)^2) \iota} + 6 H^2 \iota
    \le 4 H \sqrt{2 X \iota} + 6 H^2 \iota. \label{eq:chi_3}
\end{align}
By \Cref{lem:martingale_bound_primal} with $c = H, \epsilon = c^2$, we have that with probability at least $1 - 2 \iota \delta$,
\begin{align}
    \chi_4
    \le \frac{1}{n} \abs{\sum_{i = 1}^n (\Vrl (\sli) + P_{s, a, h} \Vrl)} \abs{\sum_{i = 1}^n (\Vrl (\sli) - P_{s, a, h} \Vrl)}
    \le 2 H (2 \sqrt{2 X \iota} + 6 H \iota). \label{eq:chi_4}
\end{align}
By Cauchy-Schwarz inequality, $\chi_5 \le 0$.
Thus, $X \le n \nref + 18 H^2 \iota + 8 H \sqrt{2 X \iota}$.
Solving the inequality,
\begin{align*}
    X \le 2 n \nref + 164 H^2 \iota.
\end{align*}
Plugging back into \Cref{eq:mf_chi1}, we have
\begin{align*}
    \chi_1 \le 4 \sqrt{\frac{\nref \iota}{n}} + \frac{(4 \sqrt{82} + 6) H \iota}{n}.
\end{align*}

By a similar reasoning, we have that with probability at least $1 - 6 \iota \delta$,
\begin{align*}
    \chi_2 \le 4 \sqrt{\frac{\nc \iota}{\wc{n}}} + \frac{(4 \sqrt{82} + 6) H \iota}{\wc{n}}.
\end{align*}

By \Cref{lem:bennett_empirical} and $\frac{1}{x - 1} \le \frac{2}{x}$, we have that with probability at least $1 - \delta$,
\begin{align}
    \abs{r_h (s, a) - \wh{r}_h (s, a)}
    \le 2 \sqrt{\frac{\wh{\VarR}_h (s, a) \iota}{n}} + \frac{14 \iota}{3 n}.
\end{align}

Therefore, we have $b \ge \abs{\chi_1} + \abs{\chi_2} + \abs{r_h (s, a) - \wh{r}_h (s, a)}$, which means $Q_h^{k + 1} (s, a) \ge Q_h^\star (s, a)$.
\end{proof}

\begin{restatable}[Adapted from Lemma\,5 and Corollary\,6 in \citet{paper:ucbadv}, and Corollary\,6 in \citet{paper:lcb_ssp}]{lemma}{lemErrCnt} \label{lem:err_cnt}
Conditioned on the successful events of \Cref{lem:mf_opt}, with probability at least $1 - H K \delta$, we have that for any $\epsilon \in (0, H]$ and any $h \in [H]$,
\begin{align*}
    \sum_{k = 1}^K \II[V_h^k (s_h^k) - V_h^\star (s_h^k) \ge \epsilon] \le 60000 \frac{H^5 S A \iota}{\epsilon^2} =: N_0 (\epsilon).
\end{align*}
As a result, for every state $s$ we have that
\begin{align*}
    %
    N_h^k (s) \ge N_0 (\epsilon) &\implies 0 \le V_h^k (s) - V_h^\star (s) \le \epsilon.
\end{align*}
\end{restatable}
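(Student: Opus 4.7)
The plan is to prove this lemma by combining the optimism established in Lemma \ref{lem:mf_opt} with a monotonicity property of $V_h^k$ in $k$ and a backward induction on $h$. First I would establish monotonicity: the update rule in Algorithm \ref{alg:ucbadv_alt} assigns $Q_h(\sahk)$ to the $\min$ of two candidates and the previous value $Q_h(\sahk)$, so $Q_h^{k+1}(s,a) \le Q_h^k(s,a)$ for every $(s,a,h,k)$; combined with Lemma \ref{lem:mf_opt} this gives $V_h^\star(s) \le V_h^{k+1}(s) \le V_h^k(s)$ uniformly. This monotonicity is crucial both for turning a per-trajectory count into a per-state statement and for the implication in the second half of the lemma.

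Second, I would prove the error-count inequality $\sum_{k=1}^K \II[V_h^k(s_h^k) - V_h^\star(s_h^k) \ge \epsilon] \le N_0(\epsilon)$ by backward induction on $h$, with the base $h = H+1$ trivial. For the inductive step, expanding the $Q$-update (as in the proof of Lemma \ref{lem:mf_opt}) and using that $a_h^k$ is greedy for $Q_h^k$ but not necessarily for $Q_h^\star$ yields
\begin{align*}
V_h^k(s_h^k) - V_h^\star(s_h^k)
&\le Q_h^k(s_h^k,a_h^k) - Q_h^\star(s_h^k,a_h^k) \\
&\le \bar b_h^k + b_h^k + (\text{concentration error}) + P_{s_h^k,a_h^k,h}(V_{h+1}^k - V_{h+1}^\star).
\end{align*}
Thus $V_h^k(s_h^k) - V_h^\star(s_h^k) \ge \epsilon$ forces one of three cases. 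Case (i): $\bar b_h^k \ge \epsilon/3$. Since $\bar b_h^k \le 2\sqrt{H^2\iota/\wc n_h^k}$, this forces $\wc n_h^k \le \wt{O}(H^2\iota/\epsilon^2)$; pigeonholing over $(s,a)$ pairs and the stage lengths $e_i$ (which grow by $1+1/H$) accounts for at most $\wt{O}(H^3 SA\iota/\epsilon^2)$ episodes. Case (ii): the Hoeffding/Bennett concentration error exceeds $\epsilon/3$, which is handled symmetrically via the tail bounds used for Lemma \ref{lem:mf_opt}. Case (iii): $P_{s_h^k,a_h^k,h}(V_{h+1}^k - V_{h+1}^\star) \ge \epsilon/3$; here I would apply a Bennett concentration to convert the expectation into the realized next-step gap $V_{h+1}^k(s_{h+1}^k) - V_{h+1}^\star(s_{h+1}^k)$, then invoke the induction hypothesis at level $h+1$ with threshold $\epsilon / O(1)$. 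Unrolling through $h \in [H]$ while using $(1+1/H)^H = O(1)$ pins the final bound at $N_0(\epsilon) = \wt{O}(H^5 SA\iota/\epsilon^2)$.

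The second half of the statement follows from monotonicity. Suppose for contradiction that $V_h^k(s) - V_h^\star(s) > \epsilon$ at some state $s$ with $N_h^k(s) \ge N_0(\epsilon)$. Then for every earlier episode $k' \le k$ we have $V_h^{k'}(s) \ge V_h^k(s) > V_h^\star(s) + \epsilon$, so each of the at least $N_0(\epsilon)$ prior episodes with $s_h^{k'} = s$ satisfies $V_h^{k'}(s_h^{k'}) - V_h^\star(s_h^{k'}) > \epsilon$, contradicting the count bound just established.

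The main obstacle will be keeping the induction $K$-independent at the exponent $H^5$. Concretely, the Bennett step in case (iii) must transfer the bound from $P_{s_h^k,a_h^k,h}(V_{h+1}^k - V_{h+1}^\star)$ to the sampled gap at level $h+1$ with only a multiplicative $(1 + 1/H)$ loss per level, and the stage schedule must guarantee that $\wc n_h^k$ and $n_h^k$ are comparable up to a constant so that cases (i) and (ii) do not compound the exponent. The claimed failure budget of $HK\delta$ arises by union-bounding the Hoeffding and Bennett events used in these steps across the $HK$ decision points.
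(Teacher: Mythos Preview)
Your proposal has a genuine gap in the first half, and the approach differs from the paper's.

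\textbf{The decomposition is not the one the algorithm actually yields.} The $Q$-value $Q_h^k(\sahk)$ was last updated at the end of some \emph{earlier} stage for $(\sahk,h)$, using the simple candidate $\wh r_h + \tfrac{1}{\wc n}\sum_{i=1}^{\wc n} V_{h+1}^{\wc l_i}(s_{h+1}^{\wc l_i}) + \bar b$. Expanding gives
\[
V_h^k(s_h^k)-V_h^\star(s_h^k)\;\le\;(\wh r_h - r_h)+\bar b_h^k+(\text{Hoeffding term})+\frac{1}{\wc n_h^k}\sum_{i=1}^{\wc n_h^k}\bigl(V_{h+1}^{\wc l_i}(s_{h+1}^{\wc l_i})-V_{h+1}^\star(s_{h+1}^{\wc l_i})\bigr),
\]
i.e.\ an \emph{average of past gaps} $\delta_{h+1}^{\wc l_i}$ at level $h+1$, not the one-step expectation $P_{\sahk,h}(V_{h+1}^k-V_{h+1}^\star)$. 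You cannot replace $V_{h+1}^{\wc l_i}$ by $V_{h+1}^k$ via monotonicity, since $V_{h+1}^{\wc l_i}\ge V_{h+1}^k$ goes the wrong way for an upper bound. Consequently there is no ``Bennett step'' that converts a transition expectation into a sampled next-state gap; the quantity you need to control is an empirical average over a \emph{different} set of episodes.

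\textbf{Case (iii) does not transfer the count to level $h+1$.} Even with the corrected last term, knowing that $\tfrac{1}{\wc n_h^k}\sum_i \delta_{h+1}^{\wc l_i}\ge \epsilon/3$ for a given $k$ does not produce a one-to-one injection of ``bad'' episodes at level $h$ into ``bad'' episodes at level $h+1$: the average can be large while most $\delta_{h+1}^{\wc l_i}$ are small, and many distinct $k$ in the same stage reference the \emph{same} block of $\wc l_i$'s. So a per-$h$ backward induction with a per-episode case split does not close; you would at best recover an inequality of the form $\sum_k w_h^k \delta_h^k \le (\text{bonuses})+(1+1/H)\sum_k \wt w_{h+1}^k \delta_{h+1}^k$ for reweighted indicators $\wt w$, which is already the weighted-regret recursion, not an induction on counts.

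\textbf{What the paper does.} The paper (following Lemma~5 and Eq.~(48) in the \ucbadv{} reference) sets $w_h^k:=\II[\delta_h^k\ge\epsilon]$, unrolls the recursion above across \emph{all} levels $h,\ldots,H$ simultaneously, and bounds the weighted partial regret
\[
\sum_{k}w_h^k\,\delta_h^k\;\le\;240\,H^{5/2}\sqrt{\|w\|_\infty SA\iota\sum_k w_h^k}\;+\;3SAH^3\|w\|_\infty .
\]
Then the self-bounding trick $\sum_k w_h^k\le \tfrac{1}{\epsilon}\sum_k w_h^k\delta_h^k$ and $x\le a\sqrt{x}+b\Rightarrow x\le a^2+2b$ give $\sum_k w_h^k\le 60000\,H^5SA\iota/\epsilon^2$. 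The $(1+1/H)^H=O(1)$ factor you mention is exactly what makes this unrolling produce $H^{5/2}$ rather than anything exponential, but it appears inside the weighted-sum argument, not as a per-level induction step.

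Your argument for the second implication (via monotonicity of $V_h^k$ in $k$ and counting prior visits to $s$) is correct and is the same as the one alluded to in the paper.
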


\begin{proof} [Proof of \Cref{lem:err_cnt}]
To derive the constant $60000$, we only need to solve the inequality:
\begin{align*}
    \sum_{k = 1}^K \II [\delta_h^k \ge \epsilon]
    \le \frac{\sum_{k = 1}^K \II [\delta_h^k \ge \epsilon] \delta_h^k}{\epsilon} 
    \le \frac{240 H^{5 / 2} \sqrt{\norm{w}_\infty S A \iota \sum_{k = 1}^K \II [\delta_h^k \ge \epsilon]} + 3 S A H^3 \norm{w}_\infty}{\epsilon}
\end{align*}
which is below Equation\,(48) in \citet{paper:ucbadv}, using $x \le a \sqrt{x} + b \implies x \le a^2 + 2 b$.

The second part can be proven in a similar way as Corollary\,6 in \citet{paper:lcb_ssp}.
\end{proof}

\begin{restatable}{lemma}{lemSumOptGap} \label{lem:sum_opt_gap}
Conditioned on the successful events of \Cref{lem:err_cnt}, we have that
\begin{gather*}
    \sumkh (V_h^k (s_h^k) - V_h^\star (s_h^k)) \le O ( \sqrt{H^7 S A K \iota} ), \\
    \sumkh (V_h^k (s_h^k) - V_h^\star (s_h^k))^2 \le O ( H^6 S A \iota^2 ).
\end{gather*}
\end{restatable}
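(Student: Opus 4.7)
\textbf{Proof plan for Lemma~\ref{lem:sum_opt_gap}.}

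The plan is to use a layer-cake (horizontal slicing) decomposition together with the pointwise visitation bound from \Cref{lem:err_cnt}. Write $\delta_h^k := V_h^k(s_h^k) - V_h^\star(s_h^k)$. By \Cref{lem:mf_opt} we have $\delta_h^k \ge 0$, and by construction $V_h^k \le H$, so $\delta_h^k \in [0, H]$. The key input is that for every $\epsilon \in (0, H]$ and every $h \in [H]$, \Cref{lem:err_cnt} gives
\begin{align*}
    \sum_{k=1}^K \II[\delta_h^k \ge \epsilon] \;\le\; \min\{K,\; N_0(\epsilon)\}, \qquad N_0(\epsilon) = 60000\, \frac{H^5 S A \iota}{\epsilon^2}.
\end{align*}
The trivial bound $K$ dominates when $\epsilon$ is small, and $N_0(\epsilon)$ dominates when $\epsilon$ is large; the crossover happens at $\epsilon_0 := \sqrt{60000\, H^5 S A \iota / K}$.

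For the first inequality, I will use $\delta_h^k = \int_0^H \II[\delta_h^k \ge \epsilon] \dif \epsilon$, swap sum and integral, sum the layer bound across $h$ (getting an extra factor $H$), and split the $\epsilon$-integral at $\epsilon_0$:
\begin{align*}
    \sumkh \delta_h^k \;\le\; H \int_0^H \min\{K,\, N_0(\epsilon)\} \dif \epsilon \;\le\; H\!\left( K\epsilon_0 + \int_{\epsilon_0}^{H} \frac{60000\, H^5 SA\iota}{\epsilon^2} \dif \epsilon \right) \;\le\; 2 H K \epsilon_0,
\end{align*}
which is $O(\sqrt{H^7 SAK \iota})$ after substituting $\epsilon_0$.

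For the second inequality, I will use $(\delta_h^k)^2 = \int_0^H 2\epsilon\, \II[\delta_h^k \ge \epsilon]\dif \epsilon$ in the same way. The same split at $\epsilon_0$ yields
\begin{align*}
    \sumkh (\delta_h^k)^2 \;\le\; 2H \int_0^H \epsilon \min\{K,\, N_0(\epsilon)\} \dif \epsilon \;\le\; 2H \!\left( \tfrac{1}{2} K \epsilon_0^2 + 60000\, H^5 SA \iota \ln(H/\epsilon_0) \right).
\end{align*}
Since $K\epsilon_0^2 = 60000\, H^5 SA\iota$ and $\ln(H/\epsilon_0) = O(\ln(HK)) = O(\iota)$ by the choice of $\iota$, both terms are $O(H^6 SA \iota^2)$, giving the claimed bound.

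There is no real obstacle here: the argument is purely a calculus manipulation once the pointwise level-set bound of \Cref{lem:err_cnt} is in hand. The only thing to be careful about is (i) bounding $\sum_h$ trivially by multiplying the per-$h$ layer bound by $H$ (the lemma's constant is uniform in $h$), and (ii) absorbing the logarithm from the second integral into $\iota$, which is justified because $\iota$ is defined in \Cref{thm:main_mf} to dominate any $\polylog(H,S,A,K,1/\delta)$ factor.
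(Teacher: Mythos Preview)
Your proposal is correct and follows essentially the same layer-cake argument as the paper: both integrate the level-set bound from \Cref{lem:err_cnt} over $\epsilon$ and split at the threshold where $K$ and $N_0(\epsilon)$ cross. For the squared sum your single-integral identity $(\delta_h^k)^2 = \int_0^H 2\epsilon\,\II[\delta_h^k \ge \epsilon]\,\dif\epsilon$ is a slightly cleaner variant of the paper's double-integral expansion of $\bigl(\int_0^H \II[\delta_h^k \ge x]\,\dif x\bigr)^2$, but both routes land on the same $H^6 S A \iota \ln(H/c)$ contribution.
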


\begin{proof} [Proof of \Cref{lem:sum_opt_gap}]
Let $c$ be a fixed constant, then
\begin{align*}
    \sumkh (V_h^k (s_h^k) - V_h^\star (s_h^k))
    &= \sumkh (V_h^k (s_h^k) - V_h^\star (s_h^k)) \II [V_h^k (s_h^k) - V_h^\star (s_h^k) < c] \\
    &\quad + \sumkh (V_h^k (s_h^k) - V_h^\star (s_h^k)) \II [V_h^k (s_h^k) - V_h^\star (s_h^k) \ge c] \\
    &\mylei c H K + \sumkh  \int_0^H \II [V_h^k (s_h^k) - V_h^\star (s_h^k) \ge x] \II [(V_h^k (s_h^k) - V_h^\star (s_h^k)) \ge c] \dif x \\
    &= c H K + \sumkh \int_c^H \II [V_h^k (s_h^k) - V_h^\star (s_h^k) \ge x] \dif x \\
    &= c H K + \int_c^H \left(\sumkh \II [V_h^k (s_h^k) - V_h^\star (s_h^k) \ge x] \right) \dif x \\
    &\myleii c H K + \int_c^H O \left( \frac{H^6 S A \iota}{x^2} \right) \dif x \\
    &\le O \left(c H K + \frac{H^6 S A \iota}{c} \right),
\end{align*}
where (i) is by $n = \int_0^\infty \II [n \ge x] \dif x$ for any non-negative real $n$ and $V_h^\star \le V_h^k \le H$ (\Cref{lem:mf_opt});
(ii) is by \Cref{lem:err_cnt}.
Taking $c = \sqrt{H^5 S A \iota / K}$ gives the first result.

Similarly,
\begin{align*}
    &\sumkh (V_h^k (s_h^k) - V_h^\star (s_h^k))^2 \\
    &= \sumkh (V_h^k (s_h^k) - V_h^\star (s_h^k))^2 \II [V_h^k (s_h^k) - V_h^\star (s_h^k) < c] \\
    &\quad + \sumkh (V_h^k (s_h^k) - V_h^\star (s_h^k))^2 \II [V_h^k (s_h^k) - V_h^\star (s_h^k) \ge c] \\
    &\le c^2 H K + \sumkh \left( \int_0^H \II [V_h^k (s_h^k) - V_h^\star (s_h^k) \ge x] \II [(V_h^k (s_h^k) - V_h^\star (s_h^k)) \ge c] \dif x \right)^2 \\
    &= c^2 H K + \sumkh \left( \int_c^H \II [V_h^k (s_h^k) - V_h^\star (s_h^k) \ge x] \dif x \right)^2 \\
    &= c^2 H K + \int_c^H \left( \int_c^H \left(\sumkh \II [V_h^k (s_h^k) - V_h^\star (s_h^k) \ge x] \II [V_h^k (s_h^k) - V_h^\star (s_h^k) \ge y] \right) \dif x \right) \dif y \\
    &= c^2 H K + \int_c^H \left( \int_c^y \left(\sumkh \II [V_h^k (s_h^k) - V_h^\star (s_h^k) \ge y] \right) \dif x +  \int_y^H \left(\sumkh \II [V_h^k (s_h^k) - V_h^\star (s_h^k) \ge x] \right) \dif x \right) \dif y \\
    &\le c^2 H K + \int_c^H \left( (y - c) O \left( \frac{H^6 S A \iota}{y^2} \right) + \int_y^H O \left( \frac{H^6 S A \iota}{x^2} \right) \dif x \right) \dif y \\
    &\le c^2 H K + O \left(  H^6 S A \iota \int_c^H \frac{\dif y}{y} \right) \\
    &= O \left(c^2 H K + H^6 S A \iota \ln \frac{H}{c} \right),
\end{align*}
and taking $c = 1 / \sqrt{H K}$ gives the second result.
\end{proof}

\begin{restatable}{lemma}{lemDbErr} \label{lem:db_err}
Define $\beta_i := H / 2^i$ for $i \in \{0, 1, \ldots, \is\}$, $N_0^0 := 0$ and $N_0^i := N_0 (\beta_i) = 60000 \cdot 2^{2 i} S A H^3 \iota$ (defined in \Cref{lem:err_cnt}) for $i \in [\is]$.
Define
\begin{align*}
    B_h^{\tref, k} (s) := \sum_{i = 1}^{\is} \beta_{i - 1} \II [ N_0^{i - 1} \le N_h^k (s) < N_0^i ].
\end{align*}
Conditioned on the successful events of \Cref{lem:err_cnt}, we have that
\begin{gather*}
    V_h^{\tref, k} (s) - V_h^{\REF} (s) \le B_h^{\tref, k} (s), \\
    V_h^{\tref, k} (s) - V_h^\star (s) \le B_h^{\tref, k} (s) + \beta_\is,
\end{gather*}
and
\begin{gather*}
    \sumkh B_h^{\tref, k} (s_h^k) \le O (H^5 S^2 A 2^{\is} \iota), \\
    \sumkh (B_h^{\tref, k} (s_h^k))^2 \le O (H^6 S^2 A \is \iota).
\end{gather*}
\end{restatable}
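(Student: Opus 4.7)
The plan is to split the statement into the two pointwise inequalities involving $V_h^{\tref,k}(s)$ and the two aggregate bounds on $\sum B$ and $\sum B^2$, proving them in this order.

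For the pointwise bounds, the key observation is that the reference-update trigger set $\cR$ in \Cref{alg:ucbadv_alt} is precisely $\{N_0^i : i \in [\is]\}$, so $V_h^{\tref, k}(s)$ equals the snapshot $V_h^{k_u}(s)$ taken in the episode $k_u$ when $\sum_a N_h(s,a)$ last crossed some threshold $N_0^u$ (with $V_h^{\tref, 1}(s) = H$ if no update has fired). If $N_h^k(s) \in [N_0^{i-1}, N_0^i)$ for some $i \in [\is]$, then $u = i - 1$. Applying \Cref{lem:err_cnt} at time $k_u$ gives $V_h^{k_u}(s) - V_h^\star(s) \le \beta_{i-1}$ when $i \ge 2$, while for $i = 1$ the trivial bound $V_h^{\tref, k}(s) - V_h^\star(s) \le H = \beta_0$ holds. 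Monotonicity of $V_h^k(s)$ in $k$ (\Cref{lem:mf_opt}) and the fact that $V_h^{\REF}(s)$ is itself a later snapshot imply $V_h^{\REF}(s) \le V_h^{\tref, k}(s)$ and $V_h^{\REF}(s) \ge V_h^\star(s)$, which delivers $V_h^{\tref, k}(s) - V_h^{\REF}(s) \le V_h^{k_u}(s) - V_h^\star(s) \le \beta_{i-1} = B_h^{\tref, k}(s)$. The remaining case $N_h^k(s) \ge N_0^\is$ yields $B_h^{\tref, k}(s) = 0$; here \Cref{lem:err_cnt} applied at the moment of the $\is$-th update still gives $V_h^{\tref, k}(s) - V_h^\star(s) \le \beta_\is$, which is exactly the slack added in the second inequality.

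For the aggregate bounds, I will use a double-counting argument. For each fixed triple $(s, h, i)$, the number of episodes $k$ with $s_h^k = s$ and $N_h^k(s) \in [N_0^{i-1}, N_0^i)$ is at most $N_0^i - N_0^{i-1} \le N_0^i$, since each such episode increments $N_h(s)$ by exactly one. Therefore
\begin{align*}
\sumkh B_h^{\tref, k}(s_h^k) \;\le\; \sum_{h=1}^H \sum_{s \in \cS} \sum_{i=1}^{\is} \beta_{i-1} N_0^i \;\le\; H S \sum_{i=1}^{\is} \beta_{i-1} N_0^i,
\end{align*}
and the analogous bound with $\beta_{i-1}^2$ holds for $\sumkh (B_h^{\tref, k}(s_h^k))^2$. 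Plugging in $\beta_{i-1} N_0^i = 120000 \cdot 2^{i} S A H^4 \iota$ gives a geometric sum of order $O(2^{\is})$, while $\beta_{i-1}^2 N_0^i = 240000 \cdot S A H^5 \iota$ is independent of $i$ and its sum is $O(\is)$; the two stated rates follow.

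The main obstacle is the careful identification of the snapshot episode $k_u$ and the verification that \Cref{lem:err_cnt}, which is stated for $V_h^k$, transfers correctly through the reference copy $V_h^{\tref}$; once the bookkeeping is set up, both the pointwise and aggregate parts reduce to routine computation. Careful handling of the two boundary regimes ($i = 1$, where no update has occurred, and $N_h^k(s) \ge N_0^\is$, where the reference has frozen and the $\beta_\is$ slack is indispensable) is the only subtlety.
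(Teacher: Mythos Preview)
Your proposal is correct and follows essentially the same argument as the paper: identify $V_h^{\tref,k}(s)$ with the snapshot of $V_h$ taken at the last threshold crossing, bound that snapshot via \Cref{lem:err_cnt} and the monotonicity of $V_h^k$ from \Cref{lem:mf_opt}, and then count at most $N_0^i$ visits per band $[N_0^{i-1},N_0^i)$ to obtain the geometric and constant-in-$i$ summands. The only imprecision is a harmless off-by-one in the snapshot index---the reference assigned during episode $k_u$ is the value $V_h^{k_u+1}(s)$ at the \emph{start} of the next episode, so \Cref{lem:err_cnt} should be invoked at $k_u+1$ (where $N_h^{k_u+1}(s)=N_0^{i-1}$) rather than at $k_u$; the paper handles this by letting $k_0$ be the first episode with $N_h^{k_0}(s)\ge N_0^{i-1}$.
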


\begin{proof} [Proof of \Cref{lem:db_err}]
For $i \le \is - 1$, by \Cref{lem:err_cnt}, if $N_h^k (s) \ge N_0^i = N_0 (\beta_i)$ then $V_h^k (s) - V_h^\star (s) \le \beta_i$.
Let $k_0$ be the minimum $k$ such that $N_h^k (s) \ge N_0^i$.
By the updating rule in \Cref{alg:ucbadv_alt} and non-increasing property of $V^k$ (\Cref{lem:mf_opt}), it must satisfy that $V_h^k (s) \le V_h^{\tref, k} (s) \le V_h^{k_0} (s)$.
Since $V_h^\star (s) \le V_h^{\REF} (s)$ (\Cref{lem:mf_opt}), we have that
\begin{align*}
    V_h^{\tref, k} (s) - V_h^{\REF} (s) \le V_h^{k_0} (s) - V_h^\star (s) \le \beta_i.
\end{align*}
If $N_h^k (s) \ge N_0^\is = N_0 (\beta_\is)$ then $V_h^{\tref, k} (s) = V_h^{\REF} (s)$ and $V_h^{\tref, k} (s) - V_h^\star (s) \le \beta_\is$, which corresponds to $B_h^{\tref, k} (s) = 0$.
Since the indicator functions are disjoint, we have the first part of results.

The remaining result is proven by:
\begin{align*}
    \sumkh B_h^{\tref, k} (s_h^k)
    &= \sum_{s \in \cS} \sumkh \sum_{i = 1}^\is \frac{H}{2^{i - 1}} \II [s = s_h^k, N_0^{i - 1} \le N_h^k (s) < N_0^i] \\
    &\le \sum_{s \in \cS} \sum_{h = 1}^H \sum_{i = 1}^\is \frac{H}{2^{i - 1}} N_0^i\\
    &\le O \left( S H \sum_{i = 1}^\is \frac{H}{2^i} \cdot 2^{2 i} S A H^3 \iota \right) \\
    &\le O (H^5 S^2 A 2^{\is} \iota); \\
    \sumkh (B_h^{\tref, k} (s_h^k))^2
    &= \sumkh \sum_{i = 1}^\is \beta_{i - 1}^2 \II [ N_0^{i - 1} \le N_h^k (s) < N_0^i ]\\
    &\le O \left( S H \sum_{i = 1}^\is \frac{H^2}{2^{2 i}} \cdot 2^{2 i} S A H^3 \iota \right) \\
    &\le O (H^6 S^2 A \is \iota).
\end{align*}
This completes the proof.
\end{proof}

\begin{lemma} [Lemma\,11 in \citet{paper:ucbadv}] \label{lem:mf_sum_inv_cnt}
For any non-negative weights $(w_h (s, a))_{s \in \cS, a \in \cA, h \in [H]}$ and $\alpha \in (0, 1)$, it holds that
\begin{gather*}
    \sumkh \frac{w_h (\sahk)}{(n_h^k)^\alpha} \le \frac{2^\alpha}{1 - \alpha} \sum_{s, a, h} w_h (s, a) (N_h^{K + 1} (s, a))^{1 - \alpha}, \\
    \sumkh \frac{w_h (\sahk)}{(\wc{n}_h^k)^\alpha} \le \frac{2^{2 \alpha} H^\alpha}{1 - \alpha} \sum_{s, a, h} w_h (s, a) (N_h^{K + 1} (s, a))^{1 - \alpha}.
\end{gather*}
In the case $\alpha = 1$, it holds that
\begin{gather*}
    \sumkh \frac{w_h (\sahk)}{n_h^k} \le 2 \sum_{s, a, h} w_h (s, a) \ln N_h^{K + 1} (s, a), \\
    \sumkh \frac{w_h (\sahk)}{\wc{n}_h^k} \le 4 H \sum_{s, a, h} w_h (s, a) \ln N_h^{K + 1} (s, a).
\end{gather*}
\end{lemma}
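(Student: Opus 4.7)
The proof fixes each triple $(s,a,h)$ separately and exploits the stage structure to bound its contribution, then sums over triples. Recall that visits to $(s,a,h)$ are partitioned into consecutive stages by the trigger set $\cL$, with the $j$-th stage containing $e_j$ visits. Within stage $j$, the values $n_h^k = L_{j-1} := \sum_{i<j} e_i$ and $\wc{n}_h^k = e_{j-1}$ are both constant across $k$. Letting $N := N_h^{K+1}(s,a)$ and $J$ be the index of the final (possibly partial) stage, I get
\[
    \sum_{k:\, (\sahk) = (s,a)} \frac{1}{(n_h^k)^\alpha} \leq \sum_{j \geq 2} \frac{e_j}{L_{j-1}^\alpha},
\]
with the first stage handled separately (it contains only $e_1 = H$ visits, a harmless lower-order term per triple that is absorbed in the constants).

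The key structural fact is the near-geometric growth $e_{j+1} = \floor{(1+1/H)\, e_j} \leq (1+1/H)\, e_j$, which implies that the cumulative sums $L_j$ also grow geometrically, in particular $L_j \leq 2 L_{j-1}$ up to a small first-stage correction. I would then compare the stage sum to an integral: since $x^{-\alpha}$ is non-increasing, $e_j / L_j^\alpha \leq \int_{L_{j-1}}^{L_j} x^{-\alpha}\,\dif x$, and the ratio $(L_j/L_{j-1})^\alpha \leq 2^\alpha$ yields $e_j/L_{j-1}^\alpha \leq 2^\alpha \int_{L_{j-1}}^{L_j} x^{-\alpha}\,\dif x$. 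Telescoping over $j$ and integrating gives $\sum_j e_j/L_{j-1}^\alpha \leq (2^\alpha/(1-\alpha))\, N^{1-\alpha}$. Multiplying by $w_h(s,a)$ and summing over $(s,a,h)$ establishes the first stated inequality; the $\alpha = 1$ case is identical but replaces the power integral with $\int x^{-1}\,\dif x = \ln x$, producing the $\ln N_h^{K+1}(s,a)$ factor.

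For the $\wc{n}_h^k$ bound, I would use the dual comparison: the same geometric rate run backwards yields $L_{j-1} = \sum_{i<j} e_i \leq e_{j-1} \sum_{m \geq 0} (1+1/H)^{-m} = (H+1)\, e_{j-1}$, so $\wc{n}_h^k \geq n_h^k/(H+1)$ within stage $j$. Raising to the $\alpha$ power gives $(\wc{n}_h^k)^{-\alpha} \leq (2H)^\alpha (n_h^k)^{-\alpha}$, and applying the previous inequality produces the claimed $2^{2\alpha} H^\alpha$ factor. The same dual comparison at $\alpha = 1$ produces the $4H$ constant in the logarithmic case.

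The main obstacle is constant-tracking: one must verify $L_j/L_{j-1} \leq 2$ cleanly from the floor recursion $e_{j+1} = \floor{(1+1/H)\, e_j}$ (or reduce to an absolute constant that the downstream analysis absorbs), and handle the boundary stage $j=1$, where $n_h^k = 0$ would formally create a division by zero. This is resolved by noting that a $Q$-update only fires once $n \in \cL$, so no term involving $1/n_h^k$ for the first-stage visits ever appears, while the corresponding first-stage contribution to the $\wc{n}_h^k$ sum amounts to at most $e_1 = H$ additional terms per triple and is absorbed into the $j \geq 2$ argument.
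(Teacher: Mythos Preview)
The paper does not prove this lemma; it simply imports it as Lemma~11 of \citet{paper:ucbadv}. Your plan is the standard route and is essentially correct: fix a triple $(s,a,h)$, use that $n_h^k=L_{j-1}$ and $\wc{n}_h^k=e_{j-1}$ are constant across stage $j$, compare the stage sums to an integral of $x^{-\alpha}$ via the near-geometric growth of $L_j$, and transfer to $\wc{n}_h^k$ by bounding $L_{j-1}/e_{j-1}$.

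One step in your sketch is stated with the wrong direction, though the fix is exactly the kind of constant-tracking you flag. You write $L_{j-1}\le e_{j-1}\sum_{m\ge 0}(1+1/H)^{-m}=(H{+}1)e_{j-1}$, implicitly using $e_i\le e_{j-1}(1+1/H)^{-(j-1-i)}$. But the recursion $e_{i+1}=\lfloor(1+1/H)e_i\rfloor\le(1+1/H)e_i$ yields only the \emph{lower} bound $e_i\ge e_{j-1}(1+1/H)^{-(j-1-i)}$, which cannot be summed to upper-bound $L_{j-1}$; indeed for $H=2$ one computes $L_{11}=283>282=3e_{11}$. The correct maneuver uses the other side of the floor, $e_{i+1}\ge(1+1/H)e_i-1$, which gives $e_i-H\le(e_{j-1}-H)(1+1/H)^{-(j-1-i)}$ and hence $e_i\le e_{j-1}(1+1/H)^{-(j-1-i)}+H$. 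Summing and using $j-1\le e_{j-1}$ yields $L_{j-1}\le(2H{+}1)e_{j-1}$, so $\wc{n}_h^k\ge n_h^k/O(H)$ and both $\wc{n}$-inequalities follow with the same $H^\alpha$ (respectively $H$) dependence, up to the absolute constant your caveat already anticipates.
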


\begin{restatable}{lemma}{lemMfBiasOfSum} \label{lem:mf_bias_of_sum}
For any non-negative sequence $(X_h^k)_{k \in [K], h \in [H]}$, we have that
\begin{gather*}
    \sumkh \frac{1}{n_h^k} \sum_{i = 1}^{n_h^k} X_h^{l_{h,i}^k} \le 2 \iota \sumkh X_h^k, \\
    \sumkh \frac{1}{\wc{n}_h^k} \sum_{i = 1}^{\wc{n}_h^k} X_h^{\wc{l}_{h,i}^k} \le \left( 1 + \frac{1}{H} \right) \sumkh X_h^k.
\end{gather*}
\end{restatable}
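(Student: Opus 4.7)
The plan is a change-of-order-of-summation argument that exploits the stage structure. For each fixed triple $(s, a, h)$, enumerate the visits to $(s, a, h)$ in chronological order as $m_1 < m_2 < \cdots$. By definition the stages partition these visits into consecutive blocks of sizes $e_1 = H, e_2, e_3, \ldots$ with $e_{j+1} = \floor{(1 + 1/H) e_j}$; let $S_j := \sum_{j' \le j} e_{j'}$. Then for any episode $k$ whose current stage with respect to $(s, a, h)$ is $j$, we have $n_h^k = S_{j-1}$ and $\wc{n}_h^k = e_{j-1}$. Moreover, $\{l_{h,i}^k\}_{i=1}^{n_h^k}$ enumerates precisely those earlier visits lying in stages $1, \ldots, j-1$, while $\{\wc{l}_{h,i}^k\}_{i=1}^{\wc{n}_h^k}$ enumerates only the visits in stage $j-1$.

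For the second inequality I would fix a visit $m'$ in stage $j_0$ and count its contribution: $X_h^{m'}$ appears only inside the sums $\frac{1}{\wc{n}_h^k}\sum_i X_h^{\wc{l}_i^k}$ for those $k$ lying in stage $j_0 + 1$ of the same triple. There are at most $e_{j_0+1}$ such episodes, and each divides by $\wc{n}_h^k = e_{j_0}$. Hence the aggregate coefficient of $X_h^{m'}$ is at most $e_{j_0+1}/e_{j_0} \le 1 + 1/H$ by the stage recurrence, and summing over all $(s, a, h)$ and all visits $m'$ yields the desired inequality immediately.

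For the first inequality, the same $X_h^{m'}$ with $m'$ in stage $j_0$ contributes to every $k$ lying in any later stage $j > j_0$, now with weight $1/n_h^k = 1/S_{j-1}$; there are at most $e_j$ such $k$ in each stage $j$. So the total coefficient of $X_h^{m'}$ is at most $\sum_{j > j_0} e_j/S_{j-1} = \sum_{j > j_0} (S_j - S_{j-1})/S_{j-1}$. Since $e_j/S_{j-1} \le e_j/e_{j-1} \le 1 + 1/H \le 2$, I can apply the elementary inequality $x \le 2 \ln(1 + x)$ valid on $[0, 2]$ to deduce $(S_j - S_{j-1})/S_{j-1} \le 2 \ln(S_j/S_{j-1})$. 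This telescopes to
\[
\sum_{j > j_0} \frac{e_j}{S_{j-1}} \;\le\; 2 \ln\!\left(\frac{S_J}{S_{j_0}}\right) \;\le\; 2 \ln(K/H) \;\le\; 2\iota,
\]
using $S_{j_0} \ge e_1 = H$, $S_J \le K$, and $\iota \ge \ln(HSAK/\delta) \ge \ln K$ from the choice of $\iota$ in the algorithm. Summing over $m'$ then delivers the first inequality.

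The main obstacle is getting the factor in the first bound sharp: each individual term $e_j/S_{j-1}$ is at most $1 + 1/H$, but the number of stages can be $\Omega(H \log K)$, so a direct stage-by-stage bound gives only $O(\log K)$ with unclear constants. The trick is the telescoping via the elementary inequality $x \le 2\ln(1+x)$, which converts the finite sum into a single logarithm $\ln(S_J/S_{j_0})$ and cleanly absorbs into $\iota$. The second inequality, by contrast, needs no telescoping and follows from the stage-length recurrence alone; the extra factor $1/H$ it saves will be used in the enveloping proof to cancel the cumulative $(1 + 1/H)^{h-1}$ factor over the horizon.
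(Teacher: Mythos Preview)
Your proposal is correct and follows the same change-of-summation idea the paper invokes (the paper's own proof is merely a pointer to Equations~(15) and~(58) of \citet{paper:ucbadv}); you have simply written the argument out in full. Your telescoping trick $x \le 2\ln(1+x)$ on $[0,2]$ to collapse $\sum_{j>j_0} e_j/S_{j-1}$ into $2\ln(S_J/S_{j_0})$ is a clean variant of the standard integral comparison and lands cleanly inside $2\iota$.
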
 

\begin{proof} [Proof of \Cref{lem:mf_bias_of_sum}]
Refer to Equation\,(58) in \citet{paper:ucbadv} for the first inequality.
Refer to Equation\,(15) and the paragraph below it in \citet{paper:ucbadv} for the second inequality.
\end{proof}

\begin{restatable}{lemma}{lemPsi} \label{lem:psi}
Conditioned on the successful events of \Cref{lem:db_err}, with probability at least $1 - \delta$, we have that
\begin{align*}
    \sumkh \left(1 + \frac{1}{H} \right)^{h - 1} \psi_{h + 1}^k
    \le O (H^5 S^2 A 2^\is \iota).
\end{align*}
\end{restatable}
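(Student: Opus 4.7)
The plan is to pointwise dominate $V_{h+1}^{\tref, l_i} - V_{h+1}^{\REF}$ by the deterministic error budget $B_{h+1}^{\tref, l_i}$ from \Cref{lem:db_err}, convert the ``averaged past'' form $\frac{1}{n_h^k}\sum_i(\cdot)$ into a plain sum over episodes via the first inequality of \Cref{lem:mf_bias_of_sum}, and then peel off the transition operator with an Azuma/Bernstein martingale argument so that only $\sumkh B_{h+1}^{\tref, k}(s_{h+1}^k)$ remains, which is again bounded by \Cref{lem:db_err}. The geometric factor $(1+1/H)^{h-1}\le e$ contributes only a universal constant.

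More concretely, the first step is to observe that for every sample $l_i$ and every next state $s'$, \Cref{lem:db_err} gives $V_{h+1}^{\tref, l_i}(s') - V_{h+1}^{\REF}(s') \le B_{h+1}^{\tref, l_i}(s')$, so
\begin{align*}
    \psi_{h+1}^k
    \le \frac{1}{n_h^k}\sum_{i=1}^{n_h^k} P_{\sahk, h}\, B_{h+1}^{\tref, l_i}
    = \frac{1}{n_h^k}\sum_{i=1}^{n_h^k} P_{s_h^{l_i}, a_h^{l_i}, h}\, B_{h+1}^{\tref, l_i},
\end{align*}
using that by definition of the index set $\{l_i\}$ we have $(s_h^{l_i}, a_h^{l_i}) = (s_h^k, a_h^k)$. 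Setting $X_h^k := P_{\sahk, h}\, B_{h+1}^{\tref, k}\in [0,H]$ and applying the first part of \Cref{lem:mf_bias_of_sum} converts the averaged-past sum into a plain per-episode sum:
\begin{align*}
    \sumkh \left(1+\frac{1}{H}\right)^{h-1}\psi_{h+1}^k
    \le e\sumkh \frac{1}{n_h^k}\sum_{i=1}^{n_h^k} X_h^{l_i}
    \le 2 e\, \iota \sumkh P_{\sahk, h}\, B_{h+1}^{\tref, k}.
\end{align*}

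Next, the second step is a martingale concentration. Conditioned on the filtration generated by the history up through $(s_h^k, a_h^k)$, the function $B_{h+1}^{\tref, k}(\cdot)$ is measurable (since it only depends on the counts $N_{h+1}^{k}$), so
\begin{align*}
    D_h^k := P_{\sahk, h}\, B_{h+1}^{\tref, k} - B_{h+1}^{\tref, k}(s_{h+1}^k)
\end{align*}
is a martingale difference sequence bounded by $H$. Applying \Cref{lem:martingale_bound_primal} (or just Azuma--Hoeffding) with probability at least $1-\delta$ yields $\sumkh D_h^k \le O(H\sqrt{HK\iota} + H\iota)$. Combined with the previous display,
\begin{align*}
    \sumkh \left(1+\frac{1}{H}\right)^{h-1}\psi_{h+1}^k
    \le 2e\iota \sumkh B_{h+1}^{\tref, k}(s_{h+1}^k) + O\bigl(H^{3/2}\sqrt{K}\,\iota^{3/2}\bigr).
\end{align*}
Finally, \Cref{lem:db_err} gives $\sumkh B_{h+1}^{\tref, k}(s_{h+1}^k) \le \sumkh B_h^{\tref, k}(s_h^k) \le O(H^5 S^2 A\, 2^{\is}\,\iota)$ (using that $B_{H+1}^{\tref, k}\equiv 0$ and reindexing). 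Substituting gives the stated $O(H^5 S^2 A\, 2^{\is}\,\iota)$ bound, since by the choice $\is = \lceil \tfrac12 \log_2(K/(H^5 S^3 A\iota^2))\rceil$ the deterministic-budget term dominates the Azuma residual up to log factors.

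The main obstacle is the martingale step: one must verify that $B_{h+1}^{\tref, k}$ is measurable with respect to the pre-step filtration (which is clear because it only depends on visit counts available at the start of step $h+1$ of episode $k$) and that the resulting Azuma bound is of lower order than the target. Tightening via Bernstein using $\bbV(P_{\sahk, h}, B_{h+1}^{\tref, k}) \le H \cdot P_{\sahk, h} B_{h+1}^{\tref, k}$ is a fallback if the crude Azuma bound turns out to exceed $H^5 S^2 A\, 2^{\is}\iota$; solving the resulting self-bounding inequality recovers the claimed main term with only a lower-order additive loss.
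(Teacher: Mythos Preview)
Your proposal is correct and follows essentially the same four-step route as the paper: dominate $V_{h+1}^{\tref,l_i}-V_{h+1}^{\REF}$ by $B_{h+1}^{\tref,l_i}$ via \Cref{lem:db_err}, collapse the averaged-past sum via the first inequality of \Cref{lem:mf_bias_of_sum}, pass from $P_{\sahk,h}B_{h+1}^{\tref,k}$ to $B_{h+1}^{\tref,k}(s_{h+1}^k)$ by concentration, and finish with the $\sumkh B_h^{\tref,k}(s_h^k)$ bound from \Cref{lem:db_err}. The only real difference is the concentration step: the paper uses \Cref{lem:martingale_conc_mean} (which directly gives $\sum P B \le 3\sum B(s_{h+1}^k)+H\iota$), so it never incurs your Azuma residual $O(H^{3/2}\sqrt{K}\,\iota^{3/2})$ and hence does not need to appeal to the specific value of $\is$ or to the Bernstein self-bounding fallback you outline; either route lands on the same bound up to polylog factors.
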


\begin{proof} [Proof of \Cref{lem:psi}]
Since $\psi_h^k$ is non-negative and $(1 + 1 / H)^{h - 1} \le 3$ when $h \le H$, we have that
\begin{align*}
    \sumkh \left(1 + \frac{1}{H} \right)^{h - 1} \psi_{h + 1}^k
    &\le O \left( \sumkh \psi_{h + 1}^k \right) \\
    &= O \left( \sumkh \frac{1}{n_h^k} \sum_{i = 1}^{n_h^k} P_{\sahk, h} (V_{h + 1}^{\tref, l_{h, i}^k} - V_{h + 1}^\REF) \right) \\
    &\mylei O \left( \sumkh \frac{1}{n_h^k} \sum_{i = 1}^{n_h^k} P_{\sahk, h} B_{h + 1}^{\tref, l_{h, i}^k} \right) \\
    &\myleii O \left( \sumkh P_{\sahk, h} B_{h + 1}^{\tref, k} \right) \\
    &\myleiii O \left( \sumkh B_h^{\tref, k} (s_h^k) + H \iota \right) \\
    &\myleiv O (H^5 S^2 A 2^\is \iota),
\end{align*}
where (i) is by \Cref{lem:db_err};
(ii) is by \Cref{lem:mf_bias_of_sum};
(iii) is by \Cref{lem:martingale_conc_mean} with $l = H$, which holds with probability at least $1 - \delta$;
(iv) is by \Cref{lem:db_err}.
\end{proof}

\begin{restatable}{lemma}{lemXi} \label{lem:xi}
Conditioned on the successful events of \Cref{lem:sum_opt_gap}, with probability at least $1 - 5 H S A \iota \delta$, we have that
\begin{align*}
    \sumkh \left(1 + \frac{1}{H} \right)^{h - 1} \xi_{h + 1}^k
    \le O (H^{7/2} S A \iota^{3/2}).
\end{align*}
\end{restatable}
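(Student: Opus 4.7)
The plan is to bound $\sumkh (1+1/H)^{h-1} \xi_{h+1}^k$ via martingale concentration applied per $(s, a, h)$ triple. Since $(1+1/H)^{h-1} \le e$ for $h \le H$, it suffices to control the unweighted sum up to a constant factor. I would then reorganize the double sum by $(s, a, h)$ triples and stages, following the same telescoping used in the proof of \Cref{lem:mf_bias_of_sum}: within stage $j+1$ of a fixed triple, all $e_{j+1}$ episodes share the same $\wc{n}_h^k = e_j$ and the same index set $\{\wc{l}_i\}$, so their joint contribution collapses to $(e_{j+1}/e_j) \sum_{i \in \text{stage } j} [P_{\sahk,h}\, g_i - g_i(\slc)]$ with $g_i := V_{h+1}^{\tref, \wc{l}_i} - V_{h+1}^\star$, which is non-negative and bounded by $H$ by \Cref{lem:mf_opt}. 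Since $e_{j+1}/e_j \le 1 + 1/H$, the task reduces to bounding, for each $(s, a, h)$, a single martingale $M_{s,a,h}$ summed over all visits to that triple.

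For each of the $SAH$ triples I would apply \Cref{lem:martingale_bound_primal} with difference range $H$, obtaining $|M_{s,a,h}| \le C\sqrt{V_{N_{s,a,h}}\, \iota} + CH\iota$ where $V_N = \sum_i \bbV(P_{s,a,h}, g_i)$ is the predictable quadratic variation. To bound $V_N$ I would combine the crude inequality $\bbV(P, g_i) \le P_{s,a,h}\, g_i^2$ with the pointwise bound $g_i(s) \le B_{h+1}^{\tref, \wc{l}_i}(s) + \beta_\is$ from \Cref{lem:db_err}, and then apply \Cref{lem:martingale_conc_mean} to replace the predictable aggregate $\sum_i P_{s,a,h}(B_{h+1}^{\tref, \wc{l}_i})^2$ by its empirical counterpart $\sum_i (B_{h+1}^{\tref, \wc{l}_i}(\slc))^2$ up to an $O(H^2\iota)$ slack.

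Summing over the $SAH$ triples and applying Cauchy–Schwarz, the $\sqrt{V_{N_{s,a,h}}}$ pieces aggregate into
\begin{align*}
\sqrt{SAH \cdot \iota} \cdot \sqrt{\sumkh (B_{h+1}^{\tref, k}(s_{h+1}^k))^2},
\end{align*}
which by \Cref{lem:db_err} is at most $\sqrt{SAH\,\iota} \cdot \sqrt{O(H^6 S^2 A \is \iota)}$; together with the $SAH$-many $H\iota$ slack terms and the subleading $\beta_\is^2 \sum_{s,a,h} N_{s,a,h}^{K+1}$ contribution (which is absorbed after the main theorem's choice of $\is$ is substituted), the total collapses to $O(H^{7/2} SA \iota^{3/2})$ up to polylogarithmic factors. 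The probability budget $1 - 5HSA\iota\,\delta$ comes from union-bounding the $SAH$ applications of \Cref{lem:martingale_bound_primal} together with the auxiliary martingale-concentration invocations used to pass from predictable to empirical variances.

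The main obstacle will be verifying the martingale property across the stage reorganization: reference updates on Line~\ref{line:ref_upd} are keyed to total state visits of $s_{h+1}$, so $g_i$ for a fixed $(s, a, h)$ may change mid-stage. I would handle this by working in the natural per-step filtration of the interaction sequence, in which $g_i$ is measurable with respect to the information available before the $i$-th visit's next-state sample is drawn (the reference at the start of episode $\wc{l}_i$ is determined by earlier episodes). A secondary bookkeeping difficulty is checking that the $\beta_\is^2 N_{s,a,h}^{K+1}$ and $H^2\iota$ slack terms, once aggregated across all triples, genuinely yield a bound independent of $K$ under the choice $\is = \lceil \tfrac{1}{2}\log_2(K/(H^5 S^3 A \iota^2))\rceil$ fixed in \Cref{thm:main_mf}.
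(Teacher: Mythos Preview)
There is a genuine gap: you have taken the wrong function inside $\xi$. The quantity in $\xi_{h+1}^k$ is built from the \emph{value estimate} $V_{h+1}^{\wc{l}_i}$, not the reference $V_{h+1}^{\tref,\wc{l}_i}$; after the stage reorganization (Equation~(61) in \citet{paper:ucbadv}) the inner differences become $V_{h+1}^k - V_{h+1}^\star$, and the predictable quadratic variation one must control is
\[
Z \;=\; \sumkh \bbV\!\left(P_{\sahk,h},\, V_{h+1}^k - V_{h+1}^\star\right)
\;\le\; \sumkh P_{\sahk,h}\,(V_{h+1}^k - V_{h+1}^\star)^2 .
\]
The hypothesis of the lemma (``conditioned on the successful events of \Cref{lem:sum_opt_gap}'') is precisely the signal: that lemma gives $\sumkh (V_h^k - V_h^\star)^2 \le O(H^6 S A \iota^2)$, which after one application of \Cref{lem:martingale_conc_mean} yields $Z \le O(H^6 S A \iota^2)$. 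Plugging this into the Cauchy--Schwarz aggregation across $(s,a,h)$ triples produces $\sqrt{HSA\iota \cdot Z} = O(H^{7/2} S A \iota^{3/2})$ on the nose.

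Your route through $g_i = V_{h+1}^{\tref,\wc{l}_i} - V_{h+1}^\star$ and \Cref{lem:db_err} fails quantitatively in two places. First, the $(B^{\tref})^2$ bound is $O(H^6 S^2 A \is \iota)$, one power of $S$ worse than \Cref{lem:sum_opt_gap}, so you end up with $O(H^{7/2} S^{3/2} A \iota^{3/2})$. Second, and more seriously, the $\beta_{\is}^2 \sum_{s,a,h} N_{s,a,h}^{K+1}$ slack is \emph{not} absorbed: it equals $\beta_{\is}^2 H K = H^3 K / 2^{2\is}$, which for the theorem's choice of $\is$ becomes $\Theta(H^8 S^3 A \iota^2)$, and after the $\sqrt{HSA\iota}$ Cauchy--Schwarz factor this contributes $\Theta(H^{9/2} S^2 A \iota^{3/2})$---far larger than the target. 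Moreover, the lemma is stated uniformly in $\is$ (the bound $O(H^{7/2} S A \iota^{3/2})$ contains no $\is$), so any argument that relies on the specific value of $\is$ cannot prove it as written. The fix is simply to work with $V^{\wc{l}_i} - V^\star$ and invoke \Cref{lem:sum_opt_gap}; then your per-triple martingale plan goes through and is essentially equivalent to the paper's \ding{173}-term argument.
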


\begin{proof} [Proof of \Cref{lem:xi}]
We borrow the beginning part of proof of Lemma\,15 in \citet{paper:ucbadv}, and perform more fine-grained analyses on the remaining part.
Let $x_h^k$ be the number of elements in current stage with respect to $(\sahk, h)$.
Define
\begin{align*}
    \theta_{h + 1}^j := \left(1 + \frac{1}{H} \right)^{h - 1} \sum_{k = 1}^K \frac{1}{\wc{n}_h^k} \sum_{i = 1}^{\wc{n}_h^k} \II [\wc{l}_{h, i}^k = j], \quad
    \wt{\theta}_{h + 1}^j := \left(1 + \frac{1}{H} \right)^{h - 1} \frac{\floor{(1 + 1 / H) x_h^j}}{x_h^j} \le 3,
\end{align*}
and
\begin{align*}
    \cK := \{ (k, h) \ |\ \theta_{h + 1}^k = \wt{\theta}_{h + 1}^k \}, \quad
    \cK_h^\perp (s, a) := \{ k \ |\ (\sahk) = (s, a), k \textup{ is in the second last stage of } (s, a, h) \}.
\end{align*}
Let $\theta_{h + 1} (s, a)$ and $\wt{\theta}_{h + 1} (s, a)$ denote $\theta_{h + 1}^k$ and $\wt{\theta}_{h + 1}^k$ respectively for some $k \in \cK_h^\perp (s, a)$.
By Equation\,(61) in \citet{paper:ucbadv},
\begin{align*}
    \sumkh \left(1 + \frac{1}{H} \right)^{h - 1} \xi_{h + 1}^k
    &\le \underbrace{\sumkh \wt{\theta}_{h + 1}^k [P_{\sahk, h} (V_{h + 1}^k - V_{h + 1}^\star) - (V_{h + 1}^k (s_{h + 1}^k) - V_{h + 1}^\star (s_{h + 1}^k) )]}_{=: \textup{\ding{172}}} \\
    &\quad + \underbrace{\sum_{(k, h) \in \ov{\cK}} (\theta_{h + 1}^k - \wt{\theta}_{h + 1}^k) [P_{\sahk, h} (V_{h + 1}^k - V_{h + 1}^\star) - (V_{h + 1}^k (s_{h + 1}^k) - V_{h + 1}^\star (s_{h + 1}^k) )]}_{=: \textup{\ding{173}}}.
\end{align*}

We now bound both terms:
\begin{align*}
    \textup{\ding{172}}
    &\mylei O \left( \sqrt{\underbrace{\sumkh \bbV (P_{\sahk, h}, (V_{h + 1}^k - V_{h + 1}^\star))}_{=: Z} \iota} + H \iota \right), \\
    \textup{\ding{173}}
    &\myeqii \sum_{s, a, h} (\theta_{h + 1} (s, a) - \wt{\theta}_{h + 1} (s, a)) \sum_{k \in \cK_h^\perp (s, a)} [P_{\sahk, h} (V_{h + 1}^k - V_{h + 1}^\star) - (V_{h + 1}^k (s_{h + 1}^k) - V_{h + 1}^\star (s_{h + 1}^k) )] \\
    &\le \sum_{s, a, h} \abs{\theta_{h + 1} (s, a) - \wt{\theta}_{h + 1} (s, a)} \abs{ \sum_{k \in \cK_h^\perp (s, a)} [P_{\sahk, h} (V_{h + 1}^k - V_{h + 1}^\star) - (V_{h + 1}^k (s_{h + 1}^k) - V_{h + 1}^\star (s_{h + 1}^k) )] } \\
    &\myleiii O \left( \sum_{s, a, h} \left( \sqrt{ \sum_{k \in \cK_h^\perp (s, a)} \bbV (P_{\sahk, h}, V_{h + 1}^k - V_{h + 1}^\star) \iota } + H \iota \right) \right) \\
    &\myleiv O \left( \sqrt{ H S A \iota \sum_{s, a, h}  \sum_{k \in \cK_h^\perp (s, a)} \bbV (P_{\sahk, h}, V_{h + 1}^k - V_{h + 1}^\star)} + H^2 S A \iota  \right) \\
    &\mylev O (\sqrt{H S A Z \iota} + H^2 S A \iota),
\end{align*}
where (i) is by \Cref{lem:martingale_bound_primal} with $c = 3 H, \epsilon = c^2$, which holds with probability at least $1 - 2 \iota \delta$;
(ii) is by the step above Equation\,(63) in \citet{paper:ucbadv};
(iii) is by $\abs{\theta_{h + 1} (s, a) - \wt{\theta}_{h + 1} (s, a)} \le 3$ and \Cref{lem:martingale_bound_primal} with $c = H, \epsilon = c^2$, which holds with probability at least $1 - 2 H S A \iota \delta$;
(iv) is by Cauchy-Schwarz inequality;
(v) is by the following argument: for any non-negative sequence $(X_h^k)_{k \in [K], h \in [H]}$,
\begin{align*}
    \sum_{s, a, h}  \sum_{k \in \cK_h^\perp (s, a)} X_h^k
    &= \sumkh X_h^k \sum_{s, a, h'} \sum_{k' \in \cK_{h'}^\perp (s, a)} \II [(k', h') = (k, h)] \\
    &= \sumkh X_h^k \sum_{s, a} \II[k \in \cK_h^\perp (s, a)] \\
    &\le \sumkh X_h^k \sum_{s, a} \II[(\sahk) = (s, a)] \\
    &\le \sumkh X_h^k.
\end{align*}

It remains to bound $Z$:
\begin{align*}
    Z
    &\le \sumkh P_{\sahk, h} (V_{h + 1}^k - V_{h + 1}^\star)^2 \\
    &\mylei O \left( \sumkh (V_{h + 1}^k (s_{h + 1}^k) - V_{h + 1}^\star (s_{h + 1}^k))^2 + H^2 \iota \right) \\
    &\myleii O (H^6 S A \iota^2),
\end{align*}
where (i) is by \Cref{lem:martingale_conc_mean} with $l = H^2$, which holds with probability at least $1 - \delta$;
(ii) is by \Cref{lem:sum_opt_gap}.
\end{proof}

\begin{restatable}{lemma}{lemPhi} \label{lem:phi}
With probability at least $1 - 6 \iota \delta$, we have that
\begin{align*}
    \sumkh \left(1 + \frac{1}{H} \right)^{h - 1} \phi_{h + 1}^k
    \le O (H \iota).
\end{align*}
\end{restatable}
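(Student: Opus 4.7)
The process $\{(1+1/H)^{h-1}\phi_{h+1}^k\}$ is a martingale difference sequence in the natural filtration, since the next-state randomness $s_{h+1}^k \sim P_{\sahk,h}$ conditional on the history gives $\bbE[\phi_{h+1}^k \mid \cF_h^k] = 0$. The weight satisfies $(1+1/H)^{h-1}\le e$, and $|\phi_{h+1}^k|\le 2H$ since both $V_{h+1}^\star$ and $V_{h+1}^{\pi^k}$ lie in $[0,H]$, so each weighted increment is bounded by some constant $c=O(H)$.

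My plan is to apply \Cref{lem:martingale_bound_primal} once to the weighted sum with $c=O(H)$ and $\epsilon=c^2$, paying $2\iota\delta$ in failure probability, to obtain
\begin{align*}
    \sumkh \left(1+\frac{1}{H}\right)^{h-1}\phi_{h+1}^k \le O\left(\sqrt{V_\phi\,\iota} + H\iota\right),
\end{align*}
where $V_\phi := \sumkh (1+1/H)^{2(h-1)}\bbV(P_{\sahk,h},\, V_{h+1}^\star - V_{h+1}^{\pi^k})$ is the total conditional variance. I would then bound $V_\phi$ using $\bbV(P,f)\le Pf^2 \le H\cdot Pf$, which is valid since $f=V_{h+1}^\star-V_{h+1}^{\pi^k}\in[0,H]$; this yields $V_\phi\le O(H)\cdot\sumkh P_{\sahk,h}(V_{h+1}^\star-V_{h+1}^{\pi^k})$. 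Two further applications of \Cref{lem:martingale_bound_primal}, each costing $2\iota\delta$ and together accounting for the declared $6\iota\delta$ failure budget, replace each conditional expectation $P_{\sahk,h}(V_{h+1}^\star-V_{h+1}^{\pi^k})$ by its sampled realization $(V_{h+1}^\star-V_{h+1}^{\pi^k})(s_{h+1}^k)$ at an additive $O(H\iota)$ cost.

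To close the argument, I would use the pathwise identity $Z_h^k - Z_{h+1}^k = [V_h^\star(s_h^k) - Q_h^\star(\sahk)] + \phi_{h+1}^k$, where $Z_h^k := (V_h^\star-V_h^{\pi^k})(s_h^k)$ and the bracketed quantity is non-negative. This identity follows by expanding $V_h^{\pi^k}(s_h^k) = r(\sahk) + P_{\sahk,h}V_{h+1}^{\pi^k}$ via Bellman (using $a_h^k=\pi_h^k(s_h^k)$), adding and subtracting $P_{\sahk,h}V_{h+1}^\star$, and recognizing $\phi_{h+1}^k$. Telescoping re-expresses $\sumkh (V_{h+1}^\star-V_{h+1}^{\pi^k})(s_{h+1}^k)$ back in terms of the quantity $X := \sumkh (1+1/H)^{h-1}\phi_{h+1}^k$ being bounded, producing a self-referential inequality of the shape $X \le O(\sqrt{HX\iota} + H\iota)$, whose positive solution is $X \le O(H\iota)$. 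The main obstacle is executing this closure \emph{tightly}: one must exploit the non-negativity of the optimality gap $V_h^\star(s_h^k)-Q_h^\star(\sahk)$ together with the optimism $V^\star\le V^k$ guaranteed by \Cref{lem:mf_opt} so that the per-episode term $Z_1^k = V_1^\star(s_1^k)-V_1^{\pi^k}(s_1^k)$, which would naively sum to the cumulative regret, is absorbed into the non-negative gap terms and does not show up as a $K$-dependent residual in the final bound.
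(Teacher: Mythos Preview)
Your opening step—applying \Cref{lem:martingale_bound_primal} with $c=O(H)$, $\epsilon=c^2$ to reduce to bounding the total conditional variance $Y:=\sumkh\bbV\bigl(P_{\sahk,h},\,V_{h+1}^\star-V_{h+1}^{\pi^k}\bigr)$—is correct and matches the paper. The closure, however, does not go through as you describe.

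After the linearization $\bbV(P,f)\le H\cdot Pf$ with $f=V_{h+1}^\star-V_{h+1}^{\pi^k}$ and a martingale swap, you need $\sumkh Z_{h+1}^k\le O(X)+\text{(lower order)}$, where $Z_h^k:=(V_h^\star-V_h^{\pi^k})(s_h^k)$. But your telescoping identity $Z_h^k-Z_{h+1}^k=[V_h^\star(s_h^k)-Q_h^\star(\sahk)]+\phi_{h+1}^k$, summed over $h$, collapses to $Z_1^k=\sum_h[\,\cdot\,]+\sum_h\phi_{h+1}^k$: it expresses the \emph{per-episode regret} $Z_1^k$ in terms of the $\phi$'s, not the running sum $\sum_h Z_{h+1}^k$. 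That running sum can be $\Theta(H\cdot Z_1^k)$ within a single episode (the gap $Z_h^k$ need not shrink along a trajectory), so $\sumkh Z_{h+1}^k$ is generically of order $H\cdot\Regret(K)$, not $O(X)$ or $O(H\iota)$. The self-referential inequality $X\le O(\sqrt{HX\iota}+H\iota)$ therefore does not emerge from this route; at best you obtain $X\le O\bigl(\sqrt{H^2\,\Regret(K)\,\iota}+H\iota\bigr)$, which is not the lemma's claim.

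The paper does \emph{not} drop the $-(Pf)^2$ term. It keeps $Y=\sumkh[Pf^2-(Pf)^2]$ and telescopes on \emph{squares}, inserting $\pm(Z_{h+1}^k)^2$ and $\pm(Z_h^k)^2$. The piece $\sumkh[Pf^2-(Z_{h+1}^k)^2]$ is a martingale controlled (via \Cref{lem:var_xy} with $C=H$) by $O(H\sqrt{Y\iota})$; the piece $(Z_h^k)^2-(Pf)^2\le 2H\max\{Z_h^k-Pf,0\}=2H[V_h^\star(s_h^k)-Q_h^\star(\sahk)]$ is $2H$ times the nonnegative optimality gap, which after an index shift becomes another martingale plus the contribution from $h=1$. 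This yields a self-bounding inequality directly on $Y$ (of the shape $Y\le O(H\sqrt{Y\iota})+O(H^2\iota)$), and the three martingale applications account for the stated $6\iota\delta$ budget. The essential device you are missing is this square-telescoping (law-of-total-variance) trick; the linear bound $\bbV\le H\cdot Pf$ discards exactly the $-(Pf)^2$ structure that makes the self-bounding close.
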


\begin{proof} [Proof of \Cref{lem:phi}]
Since $(1 + 1 / H)^{h - 1} \le 3$ when $h \le H$, we have that
\begin{align*}
    \sumkh \left(1 + \frac{1}{H} \right)^{h - 1} \phi_{h + 1}^k
    &= \sumkh \left(1 + \frac{1}{H} \right)^{h - 1} [P_{\sahk, h} (V_{h + 1}^\star - V_{h + 1}^{\pi^k}) - (V_{h + 1}^\star (s_{h + 1}^k) - V_{h + 1}^{\pi^k} (s_{h + 1}^k) )]\\
    &\mylei O \left( \sqrt{\underbrace{\sumkh \bbV (P_{\sahk, h}, V_{h + 1}^\star - V_{h + 1}^{\pi^k})}_{=: Y} \iota} + H \iota \right),
\end{align*}
where (i) is by \Cref{lem:martingale_bound_primal} with $c = 3 H, \epsilon = c^2$, which happens with probability at least $1 - 2 \iota \delta$.
Next we bound $Y$:
\begin{align*}
    Y
    &= \sumkh [ P_{\sahk, h} (V_{h + 1}^\star - V_{h + 1}^{\pi^k})^2 - (V_{h + 1}^\star (s_{h + 1}^k) - V_{h + 1}^{\pi^k} (s_{h + 1}^k) )^2] \\
    &\quad + \sumkh \{(V_h^\star (s_h^k) - V_h^{\pi^k} (s_h^k) )^2 - [P_{\sahk, h} (V_{h + 1}^\star - V_{h + 1}^{\pi^k} )]^2 \} - (V_1^\star (s_1^k) - V_1^{\pi^k} (s_1^k) )^2 \\
    &\mylei 2 \sqrt{2 \sumkh \bbV (P_{\sahk, h}, (V_{h + 1}^\star - V_{h + 1}^{\pi^k})^2 ) \iota} + 6 H^2 \iota \\
    &\quad + 2 H \sumkh \max \{ \underbrace{V_h^\star (s_h^k) - V_h^{\pi^k} (s_h^k) - P_{\sahk, h} (V_{h + 1}^\star - V_{h + 1}^{\pi^k} )}_{\ge Q_h^\star (\sahk) - r_h (\sahk) - P_{\sahk, h} V_{h + 1}^\star = 0}, 0 \}\\
    &\myleii 4 H \sqrt{2 Y \iota} + 6 H^2 \iota + 2 H \sumkh [V_{h + 1}^\star (s_{h + 1}^k) - V_{h + 1}^{\pi^k} (s_{h + 1}^k) - P_{\sahk, h} (V_{h + 1}^\star - V_{h + 1}^{\pi^k} )] \\
    &\quad + \underbrace{2 H (V_1^\star (s_1^k) - V_1^{\pi^k} (s_1^k) ) }_{\le 2 H^2} \\
    &\myleiii 4 H \sqrt{2 Y \iota} + 8 H^2 \iota + 4 H \sqrt{2 Y \iota} + 12 H^2 \iota \\
    &\le 8 H \sqrt{2 Y \iota} + 20 H^2 \iota,
\end{align*}
where (i) is by \Cref{lem:martingale_bound_primal} with $c = H^2, \epsilon = c^2$, which happens with probability at least $1 - 2 \iota \delta$, and $a^2 - b^2 \le (a + b) \max \{a - b, 0\}$ when $a, b \ge 0$;
(ii) is by \Cref{lem:var_xy} with $C = H$;
(iii) is by by \Cref{lem:martingale_bound_primal} with $c = H, \epsilon = c^2$, which happens with probability at least $1 - 2 \iota \delta$.
Solving the inequality of $Y$, we have that
\begin{align*}
    Y \le 168 H^2 \iota.
\end{align*}
Plugging $Y$ back gives the desired result.
\end{proof}

\begin{restatable}{lemma}{lemNuRef} \label{lem:nu_ref}
Conditioned on the successful events of \Cref{lem:db_err}, with probability at least $1 - 4 \iota \delta$, we have that for any $(k, h) \in [K] \times [H]$
\begin{align*}
    \nu_h^{\tref, k} \le O \left( \bbV (P_{\sahk, h}, V_{h + 1}^\star) + \frac{H^2 \iota + \sum_{i = 1}^{n_h^k} P_{\sahk, h} (\Brl)^2}{n_h^k} + \beta_\is^2 \right).
\end{align*}
\end{restatable}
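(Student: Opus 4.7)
Fix $(k, h)$ and write $n := n_h^k$, $p := P_{\sahk, h}$, $\sigma^2 := \bbV(p, V_{h+1}^\star)$, and $W_i := \Vrl(\sli)$ for $i \in [n]$, so that $\nu_h^{\tref, k} = \tfrac{1}{n}\sum_i W_i^2 - \bigl(\tfrac{1}{n}\sum_i W_i\bigr)^2$ is the empirical variance of $W_1, \ldots, W_n$. The strategy is to decompose $W_i = V_{h+1}^\star(\sli) + D_i$ with $D_i := \Vrl(\sli) - V_{h+1}^\star(\sli)$. Optimism (\Cref{lem:mf_opt} applied to the snapshots used as reference values) together with \Cref{lem:db_err} guarantees $0 \le D_i \le \Brl(\sli) + \beta_\is$. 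Applying the inequality $\bbV(X+Y) \le 2\bbV(X) + 2\bbV(Y)$ to the empirical measure over $\{1,\ldots,n\}$ then yields $\nu_h^{\tref, k} \le 2\,\bbV_n(V_{h+1}^\star(\sli)) + 2\,\bbV_n(D_i)$, and it suffices to bound each piece.

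For the first piece, the samples $\sli$ are i.i.d.\ draws from $p$ conditional on the relevant past. I will invoke Bennett (\Cref{lem:bennett}) twice: once on $\tfrac{1}{n}\sum_i V_{h+1}^\star(\sli)$, whose variance is $\sigma^2$, and once on $\tfrac{1}{n}\sum_i V_{h+1}^\star(\sli)^2$, whose variance is bounded by $4H^2\sigma^2$ via \Cref{lem:var_xy} (using $\|V_{h+1}^\star\|_\infty \le H$). Combining these with the identity
\begin{align*}
    \bbV_n(V_{h+1}^\star(\sli)) = \sigma^2 + \Bigl[\tfrac{1}{n}\textstyle\sum_i V_{h+1}^{\star 2}(\sli) - p V_{h+1}^{\star 2}\Bigr] + \Bigl[(p V_{h+1}^\star)^2 - \bigl(\tfrac{1}{n}\sum_i V_{h+1}^\star(\sli)\bigr)^2\Bigr]
\end{align*}
gives, with probability $\ge 1-2\delta$, $\bbV_n(V_{h+1}^\star(\sli)) \le \sigma^2 + O(H\sqrt{\sigma^2\iota/n} + H^2\iota/n)$. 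The cross term is then absorbed via AM-GM, $H\sqrt{\sigma^2\iota/n} = \sqrt{\sigma^2 \cdot H^2\iota/n} \le \tfrac{1}{2}(\sigma^2 + H^2\iota/n)$, yielding $\bbV_n(V_{h+1}^\star(\sli)) \le O(\sigma^2 + H^2\iota/n)$.

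For the second piece, I simply use $\bbV_n(D_i) \le \tfrac{1}{n}\sum_i D_i^2 \le \tfrac{2}{n}\sum_i \Brl(\sli)^2 + 2\beta_\is^2$ via $(a+b)^2 \le 2a^2 + 2b^2$. It remains to replace the empirical sum $\sum_i \Brl(\sli)^2$ by its predictable compensator $\sum_i p(\Brl)^2$: since $\Brl$ is measurable with respect to the past (before the $h$-th step of episode $l_i$) and bounded by $H$, \Cref{lem:martingale_conc_mean} applied to $\bigl(\Brl(\sli)^2\bigr)_i$ with $l = H^2$ gives $\sum_i \Brl(\sli)^2 \le 3 \sum_i p(\Brl)^2 + O(H^2 \iota)$ with failure probability $\delta$. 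Assembling the pieces and taking a union bound over $(k,h)$ and over the logarithmically many possible values of $n_h^k$ (whose $\log$ factor is absorbed into $\iota$) gives the stated bound. The main technical point is the handling of the empirical variance of $V_{h+1}^\star$: a naive Hoeffding bound on the squared quantity would give an error of $H^2\sqrt{\iota/n}$, which under AM-GM would leak an unwanted $H^2$ term; it is precisely \Cref{lem:var_xy}'s $\bbV_p(V_{h+1}^{\star 2}) \le 4H^2\sigma^2$ that replaces this by $H\sqrt{\sigma^2\iota/n}$ and allows the cross term to be absorbed back into $\sigma^2 + H^2\iota/n$.
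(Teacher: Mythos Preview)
Your proof is correct, and it takes a genuinely different route from the paper's. The paper does not split $W_i$ into $V_{h+1}^\star(\sli)+D_i$ at the level of the empirical variance. Instead it first relates $\nu_h^{\tref,k}$ to the sum of \emph{true} variances $X=\sum_i \bbV(p,\Vrl)$ through the algebraic identity $n\,\nu_h^{\tref,k}=X-(\chi_3+\chi_4+\chi_5)$ (the same $\chi$-decomposition that appears in the optimism proof), bounds $-\chi_3,-\chi_4$ by martingale concentration on $\Vrl(\sli)$ and $\Vrl(\sli)^2$, and bounds $-\chi_5$ using the monotonicity $\Vrl\ge V_{h+1}^{\REF}$ together with \Cref{lem:db_err}. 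Only then does it pass from $X/n$ to $\bbV(p,V_{h+1}^\star)$ via $\bbV(p,\Vrl)\le 2\bbV(p,V_{h+1}^\star)+2\,p(\Vrl-V_{h+1}^\star)^2$, which already lands on $p(\Brl)^2$ without any empirical-to-predictable conversion.

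The trade-offs: your decomposition is more self-contained and concentrates only the \emph{fixed} function $V_{h+1}^\star$ (and its square), which is conceptually cleaner than concentrating the $i$-varying $\Vrl$ as the paper does; the paper, on the other hand, reuses the $\chi_3,\chi_4$ bounds verbatim from the optimism lemma, and because it works with true variances under $p$ throughout, it never needs your final step of swapping $\sum_i\Brl(\sli)^2$ for $\sum_i p(\Brl)^2$ via \Cref{lem:martingale_conc_mean}. One small technical remark: your appeal to the i.i.d.\ Bennett \Cref{lem:bennett} should strictly be to the martingale version \Cref{lem:martingale_bound_primal}, since the $\sli$ arrive sequentially with an adapted stopping structure; the bound you claim is exactly what that lemma delivers, so this is a labeling issue rather than a gap.
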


\begin{proof} [Proof of \Cref{lem:nu_ref}]
Let $(k, h)$ be fixed.
We prove by first bounding $\nu_n^{\tref, k} - \frac{1}{n_h^k} \sum_{i = 1}^{n_h^k} \bbV (P_{\sahk, h}, \Vrl)$.
By \Cref{eq:nu_X}, 
\begin{align*}
    \nu_n^{\tref, k} - \frac{1}{n_h^k} \underbrace{\sum_{i = 1}^{n_h^k} \bbV (P_{\sahk, h}, \Vrl)}_{= X \textup{(abusing notation)}} = - \frac{\chi_3 + \chi_4 + \chi_5}{n_h^k}.
\end{align*}
Since we can use \Cref{eq:chi_3,eq:chi_4}, we only need to bound $-\chi_5$.
\begin{align*}
    -\chi_5
    &= \sum_{i = 1}^{n_h^k} (P_{\sahk, h} \Vrl)^2 - \frac{1}{n_h^k} \left( \sum_{i = 1}^{n_h^k} P_{\sahk, h} \Vrl \right)^2 \\
    &\mylei \sum_{i = 1}^{n_h^k} (P_{\sahk, h} \Vrl)^2 - \frac{1}{n_h^k} \left( \sum_{i = 1}^{n_h^k} P_{\sahk, h} V_{h + 1}^\REF \right)^2 \\
    &= \sum_{i = 1}^{n_h^k} ( P_{\sahk, h} \Vrl + P_{\sahk, h} V_{h + 1}^\REF) (P_{\sahk, h} \Vrl - P_{\sahk, h} V_{h + 1}^\REF) \\
    &\le 2 H \sum_{i = 1}^{n_h^k} (P_{\sahk, h} \Vrl - P_{\sahk, h} V_{h + 1}^\REF) \\
    &\myleii 2 H \sum_{i = 1}^{n_h^k} P_{\sahk, h} \Brl,
\end{align*}
where (i) is by $\Vrl \ge V_{h + 1}^\REF$ (\Cref{lem:mf_opt});
(ii) is by \Cref{lem:db_err}.
Combining bounds of $\chi_3$ (\Cref{eq:chi_3}) and $\chi_4$ (\Cref{eq:chi_4}), we have:
\begin{align*}
    \nu_n^{\tref, k} - \frac{X}{n_h^k} \le \frac{8 H \sqrt{2 X \iota} + 18 H^2 \iota + 2 H \sum_{i = 1}^{n_h^k} P_{\sahk, h} \Brl}{n_h^k}.
\end{align*}
Since $8 H \sqrt{2 X \iota} \le X + 32 H^2 \iota$, we have:
\begin{align*}
    \nu_n^{\tref, k} - \frac{2 X}{n_h^k} \le O \left( \frac{H^2 \iota + H \sum_{i = 1}^{n_h^k} P_{\sahk, h} \Brl}{n_h^k} \right).
\end{align*}

For the desired result, we finally bound:
\begin{align*}
    \frac{X}{n_h^k} - 2 \bbV (P_{\sahk, h}, V_{h + 1}^\star)
    &= \frac{1}{n_h^k} \sum_{i = 1}^{n_h^k} (\bbV (P_{\sahk, h}, \Vrl) - 2 \bbV (P_{\sahk, h}, V_{h + 1}^\star)) \\
    &\mylei \frac{2}{n_h^k} \sum_{i = 1}^{n_h^k} \bbV (P_{\sahk, h}, \Vrl - V_{h + 1}^\star) \\
    &\le \frac{2}{n_h^k} \sum_{i = 1}^{n_h^k} P_{\sahk, h} (\Vrl - V_{h + 1}^\star)^2 \\
    &\myleii \frac{2}{n_h^k} \sum_{i = 1}^{n_h^k} P_{\sahk, h} (\Brl + \beta_\is)^2 \\
    &\le \frac{4}{n_h^k} \sum_{i = 1}^{n_h^k} P_{\sahk, h} (\Brl)^2 + 4 \beta_\is^2,
\end{align*}
where (i) is by $\bbV (X + Y) \le 2 \bbV (X) + 2 \bbV (Y)$;
(ii) is by \Cref{lem:db_err}.
So by $H P_{\sahk, h} \Brl \le O (H^2 + P_{\sahk, h} (\Brl)^2)$ we have the result.
\end{proof}

\begin{restatable}[Analogous to \Cref{lem:VarR}]{lemma}{lemMfVarR} \label{lem:mf_VarR}
With probability at least $1 - 2 H S A K \iota \delta$, we have that for any $(s, a, h, k) \in \cS \times \cA \times [H] \times [K]$,
\begin{align*}
    \wh{\VarR}_h^k (s, a) \le O \left( \bbV (R_h (s, a)) + \frac{\iota}{n_h^k (s, a)} \right).
\end{align*}
\end{restatable}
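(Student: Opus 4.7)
The proof would follow the exact structure of \Cref{lem:VarR}, with the only modification being that the reward distribution now depends on the step index $h$, requiring an additional union bound over $h \in [H]$. Fixing $(s, a, h, k)$ and letting $(r_{s,a,h}^{(i)})_{i=1}^{n_h^k(s,a)}$ denote the i.i.d.\ observations of $R_h(s,a)$ collected so far, I would expand
\begin{align*}
    \wh{\VarR}_h^k(s,a) = \frac{1}{n_h^k(s,a)} \sum_{i=1}^{n_h^k(s,a)} (r_{s,a,h}^{(i)})^2 - \left( \frac{1}{n_h^k(s,a)} \sum_{i=1}^{n_h^k(s,a)} r_{s,a,h}^{(i)} \right)^2
\end{align*}
and compare it to $\bbV(R_h(s,a)) = \bbE[R_h(s,a)^2] - \bbE[R_h(s,a)]^2$.

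First I would apply Bennett's inequality (\Cref{lem:bennett}) separately to the empirical mean of $(r_{s,a,h}^{(i)})^2$ and to the empirical mean of $r_{s,a,h}^{(i)}$, each with failure probability $\delta$. Invoking \Cref{lem:var_xy} with $C = 1$ bounds $\bbV(R_h(s,a)^2) \le 4\bbV(R_h(s,a))$ since rewards lie in $[0,1]$. Combining these two concentration bounds with the triangle inequality, and applying $a^2 - b^2 = (a+b)(a-b)$ to handle the squared-mean term, I would obtain, exactly as in the proof of \Cref{lem:VarR},
\begin{align*}
    \left|\wh{\VarR}_h^k(s,a) - \bbV(R_h(s,a))\right| \le 4\sqrt{\frac{2\bbV(R_h(s,a))\,\iota}{n_h^k(s,a)}} + \frac{3\iota}{n_h^k(s,a)}.
\end{align*}
A single AM-GM step ($2\sqrt{xy} \le x + y$) then converts this into $\wh{\VarR}_h^k(s,a) \le 2\bbV(R_h(s,a)) + \frac{11\iota}{n_h^k(s,a)}$, which is the claimed bound up to the constant hidden in $O(\cdot)$.

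Finally I would take a union bound over $(s,a,h,k) \in \cS \times \cA \times [H] \times [K]$ together with the $O(\iota)$ distinct values that $n_h^k(s,a)$ can take across the run (since updates occur only at stage boundaries and the stage schedule $e_{i+1} = \floor{(1+1/H)e_i}$ produces at most a polylogarithmic number of distinct counts, absorbed into $\iota$). This accounts for the $2HSAK\iota$ factor in the failure probability. There is no substantive obstacle here: the analytic content is identical to the time-homogeneous \Cref{lem:VarR}; the only essential change is the extra $H$ in the union bound arising from indexing rewards by step.
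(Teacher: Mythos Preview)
Your proposal is correct and matches the paper exactly: the paper gives no separate proof for \Cref{lem:mf_VarR} and simply labels it ``Analogous to \Cref{lem:VarR}'', so reproducing that argument with the extra $h$ index and the corresponding union bound is precisely what is intended. One small inaccuracy: the stage schedule $e_{i+1}=\lfloor(1+1/H)e_i\rfloor$ actually yields $O(H\log K)$ stages, not polylogarithmically many, but this is harmless since you can instead union over all possible sample sizes $n\in[K]$ directly, which still lands within the stated failure probability $2HSAK\iota\delta$.
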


\begin{restatable}{lemma}{lemMfSumBonus} \label{lem:mf_sum_bonus}
Conditioned on the successful events of \Cref{lem:nu_ref,lem:mf_VarR}, with probability at least $1 - \delta$, we have that
\begin{align*}
    \sumkh \left(1 + \frac{1}{H} \right)^{h - 1} b_h^k
    \le O (\sqrt{\multistepvar H S A \iota} + \sqrt{H^5 S A K \iota^2 / 2^{2 \is}} + H^4 S^{3/2} A \is^{1/2} \iota^2).
\end{align*}
\end{restatable}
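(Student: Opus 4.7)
I would prove \Cref{lem:mf_sum_bonus} by decomposing $b_h^k$ into its four constituent parts and bounding each separately, using $(1+1/H)^{h-1}\le 3$ to discard the leading factor throughout. The simplest contribution is the $90 H\iota/\wc{n}_h^k$ piece: the $\alpha=1, \wc{n}$-version of \Cref{lem:mf_sum_inv_cnt} with $w\equiv 1$ yields a bound of $O(H^3 SA \iota^2)$, which is dominated by the advertised lower-order term. For $\sqrt{\wh{\VarR}_h^k \iota / n_h^k}$, I would invoke \Cref{lem:mf_VarR} to replace $\wh{\VarR}_h^k(\sahk)$ with $O(\bbV(R_h(\sahk)) + \iota/n_h^k)$ and then apply Cauchy-Schwarz, using $\sumkh \bbV(R_h(\sahk)) \le \multistepvar$ and $\sumkh \iota/n_h^k \le O(HSA \iota^2)$ from \Cref{lem:mf_sum_inv_cnt}; the resulting bound is dominated by the target $\sqrt{\multistepvar H S A \iota}$ contribution.

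The heart of the proof is the $\sqrt{\nu_h^{\tref,k}\iota/n_h^k}$ term. Using \Cref{lem:nu_ref}, I split
\begin{align*}
\nu_h^{\tref,k} \le O\!\left(\bbV(P_{\sahk, h}, V_{h+1}^\star) + \tfrac{H^2 \iota}{n_h^k} + \tfrac{1}{n_h^k}\sum_{i=1}^{n_h^k} P_{\sahk, h}(\Brl)^2 + \beta_\is^2\right),
\end{align*}
apply $\sqrt{a+b+c+d}\le \sqrt{a}+\sqrt{b}+\sqrt{c}+\sqrt{d}$ pointwise, and then Cauchy-Schwarz against $\sqrt{\iota/n_h^k}$. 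Each summand is handled individually: $\sumkh \bbV(P, V^\star) \le \multistepvar$ produces the main term, and $\sumkh \beta_\is^2 = H^3 K/2^{2\is}$ produces the $\sqrt{H^5 SA K \iota^2/2^{2\is}}$ piece. The key auxiliary step is bounding $\sumkh \frac{1}{n_h^k}\sum_i P_{\sahk,h}(\Brl)^2$: the first inequality of \Cref{lem:mf_bias_of_sum} reduces this to $O(\iota)\cdot \sumkh P_{\sahk,h}(B^{\tref,k}_{h+1})^2$; then \Cref{lem:martingale_conc_mean} with $l=H^2$ replaces $P_{\sahk,h}(\cdot)$ by $(\cdot)(s_{h+1}^k)$ up to $O(H^2\iota)$; and finally \Cref{lem:db_err} gives $\sumkh (B^{\tref,k}_{h+1}(s^k_{h+1}))^2 \le O(H^6 S^2 A \is\iota)$, which after Cauchy-Schwarz contributes to the $H^4 S^{3/2} A \is^{1/2} \iota^2$ lower-order term.

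For the last piece $\sqrt{\nc_h^k \iota/\wc{n}_h^k}$, I would first note that $V_{h+1}^{\tref,\wc{l}_i} \ge V_{h+1}^{\wc{l}_i}$ (since the reference equals an earlier, hence larger, value of $V^k$) while $V_{h+1}^{\wc{l}_i} \ge V_{h+1}^\star$ by optimism (\Cref{lem:mf_opt}); combined with \Cref{lem:db_err} this yields $0 \le V_{h+1}^{\tref,\wc{l}_i}(\slc) - V_{h+1}^{\wc{l}_i}(\slc) \le B_{h+1}^{\tref,\wc{l}_i}(\slc) + \beta_\is$. Bounding variance by the mean of squares gives $\nc_h^k \le \frac{2}{\wc{n}_h^k}\sum_i (B_{h+1}^{\tref,\wc{l}_i}(\slc))^2 + 2\beta_\is^2$. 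The second inequality of \Cref{lem:mf_bias_of_sum} then controls $\sumkh \nc_h^k$ by $O(\sumkh (B_{h+1}^{\tref,k}(s^k_{h+1}))^2 + HK\beta_\is^2)$, and Cauchy-Schwarz against $\sumkh \iota/\wc{n}_h^k \le O(H^2 SA \iota^2)$ (from \Cref{lem:mf_sum_inv_cnt}) yields the same $\sqrt{H^5 SA K \iota^2/2^{2\is}}$ and $\is$-dependent lower-order terms.

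The main obstacle will be carefully tracking the concentration failures across the many calls to \Cref{lem:martingale_conc_mean} (and the implicit concentrations inside \Cref{lem:nu_ref,lem:db_err,lem:mf_VarR}) so that the cumulative failure probability stays at the single $\delta$ budget claimed by the lemma, and balancing the $(B^{\tref})^2$ and $\beta_\is^2$ contributions so that the main term $\sqrt{\multistepvar HSA\iota}$ survives while every other piece collapses cleanly into either $\sqrt{H^5 SA K \iota^2/2^{2\is}}$ or $H^4 S^{3/2} A \is^{1/2} \iota^2$.
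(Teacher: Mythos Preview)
Your plan matches the paper's proof almost step for step: same four-way split of $b_h^k$, same tools (\Cref{lem:mf_sum_inv_cnt}, \Cref{lem:mf_VarR}, \Cref{lem:nu_ref}, \Cref{lem:mf_bias_of_sum}, \Cref{lem:db_err}), and the same single fresh concentration call to \Cref{lem:martingale_conc_mean} that converts $P_{\sahk,h}(B^{\tref,k}_{h+1})^2$ into the realized value. Your worry about ``many concentration calls'' is misplaced: the lemma already conditions on the successful events of \Cref{lem:nu_ref} (hence implicitly on \Cref{lem:db_err,lem:err_cnt,lem:mf_opt}) and of \Cref{lem:mf_VarR}, so only that one new invocation of \Cref{lem:martingale_conc_mean} eats into the advertised $\delta$.

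One point needs tightening. For the leading variance contribution you Cauchy--Schwarz $\sumkh\sqrt{(\bbV(R_h(\sahk))+\bbV(P_{\sahk,h},V_{h+1}^\star))\,\iota/n_h^k}$ directly over $(k,h)$ against $\sumkh\iota/n_h^k=O(HSA\iota^2)$, which yields $\sqrt{\multistepvar HSA}\cdot\iota$ --- a factor $\sqrt{\iota}$ too large to meet the stated main term $\sqrt{\multistepvar HSA\iota}$. The paper avoids this loss by first applying \Cref{lem:mf_sum_inv_cnt} with $\alpha=\tfrac12$ (legitimate because $\bbV(R_h(s,a))+\bbV(P_{s,a,h},V_{h+1}^\star)$ depends only on $(s,a,h)$, not on $k$) to regroup the sum as $O\bigl(\sum_{s,a,h}\sqrt{N_h^{K+1}(s,a)(\bbV(R_h(s,a))+\bbV(P_{s,a,h},V_{h+1}^\star))\,\iota}\bigr)$, and only then Cauchy--Schwarz over the $HSA$ indices, giving exactly $\sqrt{\multistepvar HSA\iota}$. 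Swap in that step and your argument goes through.
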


\begin{proof} [Proof of \Cref{lem:mf_sum_bonus}]
Since $b_h^k$ is non-negative and $(1 + 1 / H)^{h - 1} \le 3$ when $h \le H$, we have that
\begin{align*}
    \sumkh \left(1 + \frac{1}{H} \right)^{h - 1} b_h^k
    &\le O \left( \sumkh \left( \sqrt{\frac{ \nu_h^{\tref, k} \iota }{n_h^k}} + \sqrt{\frac{ \nc_h^k \iota }{\wc{n}_h^k}} + \sqrt{\frac{\wh{\VarR}_h^k \iota}{n_h^k}} + \frac{H \iota}{\wc{n}_h^k} \right) \right) \\
    &\le O \left( \sumkh \left( \sqrt{\frac{ \nu_h^{\tref, k} \iota }{n_h^k}} + \sqrt{\frac{ \nc_h^k \iota }{\wc{n}_h^k}} + \sqrt{\frac{\bbV(R_h (\sahk)) \iota}{n_h^k}} + \frac{H \iota}{\wc{n}_h^k} \right) \right),
\end{align*}
where the last step is by \Cref{lem:mf_VarR}.
Using \Cref{lem:mf_sum_inv_cnt}, we have that
\begin{align*}
    \sumkh \frac{H \iota}{\wc{n}_h^k} \le O (H^3 S A \iota^2).
\end{align*}
Next, we bound the terms of $\nu^\tref$ and $\nc$ separately.

Plugging in \Cref{lem:nu_ref}, we have
\begin{align*}
    &\sumkh \left( \sqrt{\frac{ \nu_h^{\tref, k} \iota }{n_h^k}} + \sqrt{\frac{\bbV(R_h (\sahk)) \iota}{n_h^k}} \right) \\
    &\le O \left(\rule{0cm}{0.9cm}\right. \sumkh \sqrt{\frac{ (\bbV(R_h (\sahk)) + \bbV (P_{\sahk, h}, V_{h + 1}^\star) ) \iota }{n_h^k}} + \sumkh \frac{H \iota}{n_h^k} \\
    &\quad\quad + \sumkh \frac{1}{n_h^k} \sqrt{\sum_{i = 1}^{n_h^k} P_{\sahk, h} (B_{h + 1}^{\tref, l_{h, i}^k})^2 \iota} + \sumkh \sqrt{\frac{\beta_\is^2 \iota}{n_h^k}} \left.\rule{0cm}{0.9cm}\right) \\
    &\mylei O \left(\rule{0cm}{0.9cm}\right. \sum_{s, a, h} \sqrt{N_h^{K + 1} (s, a) (\bbV(R_h (\sahk)) + \bbV (P_{s, a, h}, V_{h + 1}^\star)) \iota} + H^2 S A \iota^2 \\
    &\quad\quad + \sumkh \sqrt{\frac{\iota}{n_h^k}} \sqrt{\frac{1}{n_h^k} \sum_{i = 1}^{n_h^k} P_{\sahk, h} (B_{h + 1}^{\tref, l_{h, i}^k})^2} + \sqrt{\beta_\is^2 \iota} \sum_{s, a, h} \sqrt{N_h^{K + 1} (s, a)} \left.\rule{0cm}{0.9cm}\right) \\
    &\myleii O \left(\rule{0cm}{0.9cm}\right. \sqrt{H S A \iota \sum_{s, a, h} N_h^{K + 1} (s, a) (\bbV(R_h (s, a)) + \bbV (P_{s, a, h}, V_{h + 1}^\star) )} + H^2 S A \iota^2 + \\
    &\quad\quad + \sqrt{\sumkh \frac{\iota}{n_h^k}} \sqrt{\sumkh \frac{1}{n_h^k} \sum_{i = 1}^{n_h^k} P_{\sahk, h} (B_{h + 1}^{\tref, l_{h, i}^k})^2} + \sqrt{H S A \beta_\is^2 \iota \sum_{s, a, h} N_h^{K + 1} (s, a)} \left.\rule{0cm}{0.9cm}\right) \\
    &\myleiii O \left(\rule{0cm}{0.9cm}\right. \sqrt{H S A \iota \underbrace{\sumkh  (\bbV (R_h (\sahk)) + \bbV (P_{\sahk, h}, V_{h + 1}^\star))}_{= \multistepvar}} \\
    &\quad\quad + H^2 S A \iota^2 + \sqrt{H^2 S A \beta_\is^2 K \iota} + \sqrt{H S A \iota^2} \sqrt{\sumkh P_{\sahk, h} (B_{h + 1}^{\tref, k})^2} \left.\rule{0cm}{0.9cm}\right) \\
    &\myleiv O \left(\sqrt{\multistepvar H S A \iota } + H^2 S A \iota^2 + \sqrt{H^2 S A \beta_\is^2 K \iota} + \sqrt{H S A \iota^2} \sqrt{\sumkh (B_h^{\tref, k} (s_h^k))^2 + H \iota} \right) \\
    &\mylev O ( \sqrt{\multistepvar H S A \iota} + \sqrt{H^4 S A K \iota / 2^{2 \is}} + H^{7/2} S^{3/2} A \is^{1/2} \iota^{3/2} ).
\end{align*}
where (i) is by \Cref{lem:mf_sum_inv_cnt};
(ii) is by Cauchy-Schwarz inequality;
(iii) is by \Cref{lem:mf_sum_inv_cnt,lem:mf_bias_of_sum};
(iv) is by \Cref{lem:martingale_conc_mean} with $l = H$, which happens with probability at least $1 - \delta$;
(v) is by \Cref{lem:db_err}.

\begin{align*}
    \sumkh \nc_h^k
    &\le \sumkh \frac{1}{\wc{n}_h^k} \sum_{i = 1}^{\wc{n}_h^k} (V_{h + 1}^{\tref, \wc{l}_{h, i}^k} (s_{h + 1}^{\wc{l}_{h, i}^k}) - V_{h + 1}^{\wc{l}_{h, i}^k} (s_{h + 1}^{\wc{l}_{h, i}^k}))^2 \\
    &\le \sumkh \frac{1}{\wc{n}_h^k} \sum_{i = 1}^{\wc{n}_h^k} (V_{h + 1}^{\tref, \wc{l}_{h, i}^k} (s_{h + 1}^{\wc{l}_{h, i}^k}) - V_{h + 1}^\star (s_{h + 1}^{\wc{l}_{h, i}^k}))^2 \\
    &\mylei \sumkh \frac{1}{\wc{n}_h^k} \sum_{i = 1}^{\wc{n}_h^k} (B_{h + 1}^{\tref, \wc{l}_{h, i}^k} (s_{h + 1}^{\wc{l}_{h, i}^k}) + \beta_\is)^2 \\
    &\myleii O \left( \sumkh (B_h^{\tref, k} (s_h^k))^2 + \beta_\is^2 H K \right) \\
    &\myleiii O (H^6 S^2 A \is \iota + H^3 K / 2^{2 \is}).
\end{align*}
where (i) is by \Cref{lem:db_err};
(ii) is by \Cref{lem:mf_bias_of_sum};
(iii) is by \Cref{lem:db_err}.
So
\begin{align*}
    \sumkh \sqrt{\frac{ \nc_h^k \iota }{\wc{n}_h^k}}
    \le \sqrt{\sumkh \frac{ \iota }{\wc{n}_h^k}} \sqrt{\sumkh \nc_h^k}
    \mylei O (H^4 S^{3/2} A \is^{1/2} \iota^{3/2} + \sqrt{H^5 S A K \iota^2 / 2^{2 \is}}).
\end{align*}
where (i) is by \Cref{lem:mf_sum_inv_cnt}.
\end{proof}

\begin{restatable}{lemma}{lemMfSumOptVar}\label{lem:mf_sum_opt_var}
With probability at least $1 - 11 K \iota \delta$, the following results hold:
For any $k \in [K]$,
\begin{align*}
    \multistepvarepisode \le O ( H^2 \iota ),
\end{align*}
hence
\begin{align*}
    \multistepvar \le O ( H^2 K \iota ).
\end{align*}
Alternatively,
\begin{align*}
    \multistepvar
    \le O \left( \sum_{k = 1}^K \Var^{\pi^k} + H^2 \iota^2 \right)
    \le O (\maxvar K + H^2 \iota^2).
\end{align*}
\end{restatable}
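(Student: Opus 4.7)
The proof naturally decomposes into two essentially independent pieces: a per-episode bound $\multistepvarepisode \le O(H^2\iota)$ that parallels \Cref{lem:M6}, and an aggregate bound tied to $\sum_k \Var^{\pi^k}$ that parallels \Cref{lem:M4}. The model-free setting differs from the model-based analysis in that each per-step reward lies in $[0,1]$, hence $V^\star_h \in [0,H]$, so every constant in the Freedman-type applications picks up factors of $H$.

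For the per-episode bound I will adapt the argument of \Cref{lem:M6} verbatim with the rescaling. Writing $\multistepvarepisode \le \sum_h r_h(\sahk) + \sum_h [P_{\sahk,h}(V^\star_{h+1})^2 - (P_{\sahk,h}V^\star_{h+1})^2]$ via \Cref{lem:bhatia_davis}, I split the transition-variance sum into the martingale $\sum_h[P(V^\star)^2 - (V^\star(s_{h+1}^k))^2]$ and the telescoping remainder $\sum_h[(V_h^\star(s_h^k))^2 - (P V_{h+1}^\star)^2] - (V_1^\star(s_1^k))^2$. The first piece is bounded via \Cref{lem:martingale_bound_primal} with $c = H^2$, $\epsilon = c^2$ together with \Cref{lem:var_xy} (with $C=H$) to yield $4H\sqrt{2\multistepvarepisode\iota} + 6H^2\iota$. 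For the second, the optimality chain $V_h^\star(s_h^k) \ge r_h(\sahk) + P_{\sahk,h}V_{h+1}^\star \ge P_{\sahk,h}V_{h+1}^\star$ lets me apply $a^2-b^2 \le 2H(a-b)$ and telescope again to $V_1^\star(s_1^k) + \sum_h[V^\star(s_{h+1}^k) - P V_{h+1}^\star]$, which is $O(\sqrt{\multistepvarepisode\iota} + H\iota)$ by another application of \Cref{lem:martingale_bound_primal}. Solving the resulting quadratic-in-$\sqrt{\multistepvarepisode}$ inequality gives $\multistepvarepisode \le O(H^2\iota)$, and a union bound over $k$ produces $\multistepvar \le O(H^2 K\iota)$ with success probability $1 - O(K\iota)\delta$.

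For the alternative bound I introduce $W^k := \sum_h[\bbV(R_h(\sahk)) + \bbV(P_{\sahk,h}, V_{h+1}^{\pi^k})]$. By the total-variance identity in \Cref{sec:justify_V_star}, $\bbE[W^k\mid\cF_k] = \Var_1^{\pi^k}(s_1^k) \le \Var^{\pi^k}$. The variance decomposition $\bbV(P, V^\star) \le 2\bbV(P, V^{\pi^k}) + 2\bbV(P, V^\star - V^{\pi^k})$ produces $\multistepvar \le 2\sum_k W^k + 2Z'$ where $Z' := \sum_{k,h} \bbV(P_{\sahk,h}, V_{h+1}^\star - V_{h+1}^{\pi^k})$. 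Running the self-referential argument of the previous paragraph with $V^{\pi^k}$ in place of $V^\star$ shows $W^k \le O(H^2\iota)$ w.h.p., so the truncated sequence $\ov{W^k} := \min\{W^k, O(H^2\iota)\}$ equals $W^k$ with high probability. Then \Cref{lem:martingale_conc_mean} with $l = O(H^2\iota)$ yields $\sum_k W^k \le 3\sum_k \bbE[W^k\mid\cF_k] + O(H^2\iota^2) \le O(\sum_k \Var^{\pi^k} + H^2\iota^2)$.

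The hard part will be controlling $Z'$. Since $V^\star - V^{\pi^k} \in [0,H]$, I use $\bbV(P, V^\star - V^{\pi^k}) \le P(V^\star - V^{\pi^k})^2$ and set up a self-referential inequality analogous to the $Z$-bound in \Cref{lem:M4}, where the role of the bonus-correction $\bc_h^k$ is played by the one-step optimality gap $\Delta_h^k := V_h^\star(s_h^k) - r_h(\sahk) - P_{\sahk,h}V_{h+1}^\star \ge 0$. The identity $(V_h^\star - V_h^{\pi^k})(s_h^k) - P_{\sahk,h}(V_{h+1}^\star - V_{h+1}^{\pi^k}) = \Delta_h^k$ allows the telescoping step $(a^2-b^2 \le 2H(a-b))$ to reduce $Z'$ to $H\sum_{k,h}\Delta_h^k$ plus lower-order martingale terms from \Cref{lem:martingale_bound_primal}. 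Finally, I can either absorb $\sum_{k,h}\Delta_h^k$ via the standard regret decomposition $\Regret(K) = \sum_{k,h}\Delta_h^k + \text{martingale}$ and close the loop self-consistently with the main regret bound, or exploit the already-established per-episode bound $\multistepvarepisode \le O(H^2\iota)$ to absorb $Z'$ into the $H^2\iota^2$ slack term. Aggregating probabilities across the $O(K\iota)$ martingale applications yields the claimed $1 - O(K\iota)\delta$ confidence.
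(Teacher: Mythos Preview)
Your proposal is correct and follows essentially the same approach as the paper. The per-episode bound is exactly the paper's adaptation of \Cref{lem:M6} with the $H$-rescaling, and the alternative bound mirrors \Cref{lem:M4}: the same truncation $\ov{W^k} := \min\{W^k, O(H^2\iota)\}$, the same application of \Cref{lem:martingale_conc_mean} to pass to $\sum_k \Var^{\pi^k}$, and the same variance splitting $\bbV(P,V^\star) \le 2\bbV(P,V^{\pi^k}) + 2\bbV(P,V^\star - V^{\pi^k})$.

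The one place you do more work than necessary is in bounding $Z'$. The paper observes that your $Z'$ is \emph{identical} to the quantity $Y := \sumkh \bbV(P_{\sahk,h}, V_{h+1}^\star - V_{h+1}^{\pi^k})$ that already appears in the proof of \Cref{lem:phi}, and simply cites the bound $Y \le O(H^2\iota)$ established there. Your ``option 1'' (running the self-referential argument on $Z'$ and closing the loop with the regret) is effectively a re-derivation of that same computation; your ``option 2'' (absorbing $Z'$ via the per-episode bound on $\multistepvarepisode$) is too vague as written, since $\multistepvarepisode$ controls variances with respect to $V^\star$, not $V^\star - V^{\pi^k}$, and there is no direct domination between them.
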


\begin{proof} [Proof of \Cref{lem:mf_sum_opt_var}]
We first prove the result depending on $\multistepvar$ similar to \Cref{lem:M6}.
For any $k \in [K]$,
\begin{align*}
    \multistepvarepisode
    &\mylei \sum_{h = 1}^H [P_{\sahk, h} (V_{h + 1}^\star)^2 - (V_{h + 1}^\star (s_{h + 1}^k))^2] + \sum_{h = 1}^H [(V_h^\star (s_h^k))^2 - (P_{\sahk, h} V_{h + 1}^\star)^2] + \sum_{h = 1}^H r_h (\sahk) - (V_1^\star (s_1^k))^2 \\
    &\myleii 2 \sqrt{2 \sum_{h = 1}^H \bbV(P_{\sahk, h}, (V_{h + 1}^\star)^2) \iota} + 6 H^2 \iota + 2 H \sum_{h = 1}^H \max\{ \underbrace{V_h^\star (s_h^k) - P_{\sahk, h} V_{h + 1}^\star}_{\ge Q_h^\star (\sahk) - P_{\sahk, h} V_{h + 1}^\star \ge 0}, 0\} + H\\
    &\myleiii 4 H \sqrt{2 \sum_{h = 1}^H \bbV(P_{\sahk, h}, V_{h + 1}^\star) \iota} + 7 H^2 \iota + 2 H \sum_{h = 1}^H (V_{h + 1}^\star (s_{h + 1}^k) - P_{\sahk, h} V_{h + 1}^\star) + \underbrace{2 H V_1^\star (s_1^k)}_{\le 2 H^2}\\
    &\myleiv 4 H \sqrt{2 \multistepvarepisode \iota} + 9 H^2 \iota + 4 H \sqrt{2 \sum_{h = 1}^H \bbV(P_{\sahk, h}, V_{h + 1}^\star) \iota} + 12 H \iota\\
    &\le 8 H \sqrt{2 \multistepvarepisode \iota} + 21 H^2 \iota,
\end{align*}
where (i) is by by \Cref{lem:bhatia_davis}, $\bbV (R_h (s, a)) \le \bbE [R_h (s, a)]$;
(ii) is by \Cref{lem:martingale_bound_primal} with $c = H^2, \epsilon = c^2$, which happens with probability at least $1 - 2 \iota \delta$;
(iii) is by \Cref{lem:var_xy} with $C = H$;
(iv) is by \Cref{lem:martingale_bound_primal} with $c = H, \epsilon = c^2$, which happens with probability at least $1 - 2 \iota \delta$.
Solving the inequality of $\multistepvarepisode$, we have that
\begin{align*}
    \multistepvarepisode \le 170 H^2 \iota.
\end{align*}
So taking a union bound over $k$ we have the first result.

Next we prove the result depending on $\maxvar$.
This is similar to the proof of \Cref{lem:M4}.
Define a series of random variables and their truncated values: for any $k \in [K]$,
\begin{align*}
    W^k  &:= \sum_{h = 1}^H (\bbV (R_h (\sahk) ) + \bbV (P_{\sahk, h}, V_{h + 1}^{\pi^k})),
    \quad \ov{W^k} := \min\{ W^k, 50 H^2 \iota \}.
\end{align*}
By $\bbV (P_{s, a, h}, V_{h + 1}^\star) \le 2 \bbV (P_{s, a, h}, V_{h + 1}^{\pi^k}) + 2 \bbV (P_{s, a, h}, V_{h + 1}^\star - V_{h + 1}^{\pi^k})$, we know that
\begin{align*}
    \multistepvar \le 2 \sum_{k = 1}^K W^k + 2 Y,
\end{align*}
where $Y \le O (H^2 \iota)$ (with probability at least $1 - 4 \iota \delta$) is defined in the proof of \Cref{lem:phi}.
Correspondingly, define the following event, which means there is no truncation:
\begin{align*}
    \cE_{W} &:= \{ W^k = \ov{W^k},\ \forall k \in [K] \}.
\end{align*}
For any fixed $1 \le k \le K$,
\begin{align*}
    W^k
    & \le \sum_{h=1}^H [P_{\sahk, h} (V_{h+1}^{\pi^k})^2 - (V_{h+1}^{\pi^k} (s_{h+1}^k))^2] + \sum_{h=1}^H [ (V_h^{\pi^k}(s_h^k))^2 - (P_{\sahk, h}V_{h+1}^{\pi^k})^2]  + \sum_{h = 1}^H r_h(\sahk) - (V_1^{\pi^k} (s_1^k))^2 \\
    & \mylei 2 \sqrt{ 2 \sum_{h=1}^H \bbV(P_{\sahk, h}, (V_{h+1}^{\pi^k})^2 ) \iota } + 6 H^2 \iota + 2 H \sum_{h=1}^H (V_h^{\pi^k}(s_h^k) - P_{\sahk, h}V_{h+1}^{\pi^k}) + H \\
    & \myleii 4 H \sqrt{ 2 \sum_{h=1}^H \bbV(P_{\sahk, h}, V_{h+1}^{\pi^k} ) \iota } + 7 H^2 \iota + 2 H \sum_{h=1}^H r_h (\sahk) \\
    & \le 4 H \sqrt{2 W^k \iota} + 9 H^2 \iota,
\end{align*}
where (i) is by \Cref{lem:martingale_bound_primal} with $c = H^2, \epsilon = c^2$, which happens with probability at least $1 - 2 \iota \delta$;
(ii) is by \Cref{lem:var_xy} with $C = H$.
Solving the inequality, $W^k \le 50 H^2 \iota$.
This means, $\bbP [\cE_W] \ge 1 - 2 K \iota \delta$.
Now on suppose $\cE_W$ holds, then
\begin{align*}
    \sum_{k = 1}^K W^k
    &= \sum_{k = 1}^K \ov{W^k} \\
    &\mylei 3 \sum_{k = 1}^K \bbE[\ov{W^k} \ |\ \cF_k] + 50 H^2 \iota^2 \\
    &\le 3 \sum_{k = 1}^K \bbE[W^k\ |\ \cF_k] + 50 H^2 \iota^2 \\
    &= 3 \sum_{k = 1}^K \Var_1^{\pi^k} (s_1^k) + 50 H^2 \iota^2 \\
    &\le 3 \sum_{k = 1}^K \Var^{\pi^k} + 50 H^2 \iota^2 \\
    &\le 3 \maxvar K + 50 H^2 \iota^2,
\end{align*}
where (i) is by \Cref{lem:martingale_conc_mean} with $l = 50 H^2 \iota$, which happens with probability at least $1 - \delta$.
\end{proof}

\subsection{Proof of Lower Bounds}
\label{sec:lb_proof}

We modify Theorem\,9 in \citet{paper:episodic_lower_bound} for a bounded-reward, time-homogeneous lower bound (\Cref{thm:lower_bound_mvp}).
\Cref{thm:lower_bound} is much more straightforward.
To this end, we borrow necessary notations from \citet{paper:episodic_lower_bound}, adapted to time-homogeneous setting.

A policy $\pi$ interacting with an MDP $\cM$ defines a stochastic process denote by $((S_h^k, A_h^k, R_h^k)_{h \in [H]})_{k \ge 1}$, where $S_h^k, A_h^k$ and $R_h^k$ are the random variables representing the state, the action and the reward at time $h$ of episode $k$.
As explained by \citet{paper:bandit_algorithms}, the Ionescu-Tulcea theorem ensures the existence of probability space $(\Omega, \cF, \bbP_\cM)$ such that
\begin{align*}
    \bbP_\cM [S_{h + 1}^k = s | A_h^k, I_h^k] = P (s | S_h^k, A_h^k),
    \quad \textup{and} \quad
    \bbP_\cM [A_h^k = a | I_h^k] = \pi_h^k (a | I_h^k),
\end{align*}
where $\boldpi = (\pi_h^k)_{k \in [K], h \in [H]}$ and
\begin{align*}
    I_h^k &:= ( S_1^1, A_1^1, R_1^1, \ldots, S_H^1, A_H^1, R_H^1, S_1^2, A_1^2, R_1^2, \ldots, S_H^{k - 1}, A_H^{k - 1}, R_H^{k - 1}, S_1^k, A_1^k, R_1^k, \ldots, S_h^k)
\end{align*}
is the random vector containing all state-action pairs observed up to step $h$ of episode $k$, but not including $A_h^k$.
Here we assume the rewards are \emph{deterministic} as in \citet{paper:episodic_lower_bound}.
Next, we denote by $\bbP_\cM^{I_H^K}$ the pushforward measure of $I_H^K$ under $\bbP_\cM$,
\begin{align}
    \bbP_\cM^{I_H^K} [i_H^K]
    := \bbP_\cM [I_H^K = i_H^K]
    = \prod_{k = 1}^K \prod_{h = 1}^H \pi_h^k (a_h^k | i_h^k) P (s_{h + 1}^k | \satk) , \label{eq:pushforward_measure}
\end{align}
where $i_H^K$ is a realization of $I_H^K$.

\begin{restatable}{definition}{defKL} \label{def:KL}
The Kullback-Leibler divergence between two distributions $\bbP_1$ and $\bbP_2$ on a measurable space $(\Omega, \cG)$ is defined as
\begin{align*}
    \KL (\bbP_1, \bbP_2) := \int_\Omega \ln \left( \frac{\dif \bbP_1}{\dif \bbP_2} (\omega) \right) \dif \bbP_1 (\omega),
\end{align*}
if $\bbP_1 \ll \bbP_2$ and $+\infty$ otherwise.
For Bernoulli distributions, we define $\forall (p, q) \in [0, 1]^2$,
\begin{align*}
    \kl (p, q) := \KL (\cB(p), \cB(q)) = p \ln \left( \frac{p}{q} \right) + (1 - p) \ln \left( \frac{1 - p}{1 - q} \right).
\end{align*}
\end{restatable}

\begin{restatable}[Adapted from Lemma\,5 in \citet{paper:episodic_lower_bound}]{lemma}{lemKLPushforward} \label{lem:KL_pushforward}
Let $\cM$ and $\cM'$ be two MDPs that are identical except for their transition probabilities, denoted by $P$ and $P'$, respectively.
Assume that we have $\forall (s, a), P(\cdot | s, a) \ll P'(\cdot | s, a)$.
Then for any $K$,
\begin{align*}
    \KL \left( \bbP_\cM^{I_H^K}, \bbP_{\cM'}^{I_H^K} \right)
    = \sum_{(s, a) \in \SA} \bbE_\cM \left[N_{s, a}^K\right] \KL (P(\cdot | s, a), P'(\cdot | s, a)),
\end{align*}
where $N_{s, a}^K := \sum_{k = 1}^K \sum_{h = 1}^H \II[(S_h^k, A_h^k) = (s, a)]$.
\end{restatable}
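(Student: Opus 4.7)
The plan is to exploit the chain-rule factorization of the pushforward measure given in Equation~(\ref{eq:pushforward_measure}) and reduce the KL divergence to a sum of single-step KL divergences, then regroup by state--action pair.

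First, I would verify absolute continuity: since $P(\cdot|s,a)\ll P'(\cdot|s,a)$ for every $(s,a)$ and the policy and reward mechanisms are identical in $\cM$ and $\cM'$, the product formula~(\ref{eq:pushforward_measure}) yields $\bbP_\cM^{I_H^K}\ll\bbP_{\cM'}^{I_H^K}$, and the Radon--Nikodym derivative at a realization $i_H^K=(s_h^k,a_h^k,r_h^k)_{k\le K,h\le H}$ is
\begin{align*}
\frac{\dif\bbP_\cM^{I_H^K}}{\dif\bbP_{\cM'}^{I_H^K}}(i_H^K)=\prod_{k=1}^K\prod_{h=1}^H\frac{P(s_{h+1}^k\mid s_h^k,a_h^k)}{P'(s_{h+1}^k\mid s_h^k,a_h^k)},
\end{align*}
because the $\pi_h^k(a_h^k\mid i_h^k)$ factors, which depend only on the shared policy $\boldpi$, cancel, and the rewards are deterministic and equal.

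Next, I would take the logarithm and integrate against $\bbP_\cM$:
\begin{align*}
\KL\!\left(\bbP_\cM^{I_H^K},\bbP_{\cM'}^{I_H^K}\right)=\sum_{k=1}^K\sum_{h=1}^H\bbE_\cM\!\left[\ln\frac{P(S_{h+1}^k\mid S_h^k,A_h^k)}{P'(S_{h+1}^k\mid S_h^k,A_h^k)}\right].
\end{align*}
For each fixed $(k,h)$, I would condition on the history $I_h^k$ up through $A_h^k$: under $\bbP_\cM$, given $(S_h^k,A_h^k)=(s,a)$, the next state $S_{h+1}^k$ is drawn from $P(\cdot|s,a)$, so the inner conditional expectation is exactly $\KL(P(\cdot|s,a),P'(\cdot|s,a))$. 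Therefore
\begin{align*}
\bbE_\cM\!\left[\ln\frac{P(S_{h+1}^k\mid S_h^k,A_h^k)}{P'(S_{h+1}^k\mid S_h^k,A_h^k)}\right]=\sum_{(s,a)\in\SA}\bbP_\cM\!\left[(S_h^k,A_h^k)=(s,a)\right]\KL(P(\cdot|s,a),P'(\cdot|s,a)).
\end{align*}

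Finally, I would swap the order of summation over $(k,h)$ and $(s,a)$, and recognize that
\begin{align*}
\sum_{k=1}^K\sum_{h=1}^H\bbP_\cM\!\left[(S_h^k,A_h^k)=(s,a)\right]=\bbE_\cM\!\left[\sum_{k=1}^K\sum_{h=1}^H\II[(S_h^k,A_h^k)=(s,a)]\right]=\bbE_\cM[N_{s,a}^K],
\end{align*}
yielding the desired identity. The only subtle step is justifying the conditioning argument cleanly (i.e., a tower-property application with the filtration generated by $I_h^k$ extended by $A_h^k$); this is routine but is where care is needed to ensure no measurability issues arise, since $S_{h+1}^k$ is independent of everything before $I_h^k\cup\{A_h^k\}$ given $(S_h^k,A_h^k)$ by construction of $\bbP_\cM$.
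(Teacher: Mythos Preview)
Your argument is correct and follows exactly the standard route for this decomposition result. Note that the paper does not actually supply a proof of this lemma; it only records it as ``Adapted from Lemma\,5 in \citet{paper:episodic_lower_bound}'' and invokes it in the lower-bound proofs. Your chain-rule computation from the factorization~(\ref{eq:pushforward_measure}), followed by the tower property to reduce each term to a single-step KL and then regrouping by $(s,a)$, is precisely the argument in the cited reference, so there is nothing to compare.
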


\begin{restatable}[Lemma\,1 in \citet{paper:regret_bandit}]{lemma}{lemKLkl} \label{lem:KL_kl}
Consider a measurable space $(\Omega, \cF)$ equipped with two distributions $\bbP_1$ and $\bbP_2$.
For any $\cF$-measurable function $Z : \Omega \to [0, 1]$, we have
\begin{align*}
    \KL (\bbP_1, \bbP_2) \ge \kl (\bbE_1 [Z], \bbE_2 [Z]),
\end{align*}
where $\bbE_1$ and $\bbE_2$ are the expectations under $\bbP_1$ and $\bbP_2$ respectively.
\end{restatable}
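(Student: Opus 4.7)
The plan is to reduce the general statement to the Bernoulli case via the data processing inequality for KL divergence. My first step is to enlarge the probability space by an auxiliary uniform $U \sim \Unif([0,1])$ that is independent of everything on $(\Omega, \cF)$, and to define the binary random variable $B := \II[U \le Z(\omega)]$. A short computation shows that under the product measure $\bbP_i \otimes \Unif([0,1])$, the marginal law of $B$ is $\mathrm{Bern}(\bbE_i[Z])$, because $\bbP_i[B = 1] = \bbE_i\bigl[\bbE_U[\II[U \le Z]]\bigr] = \bbE_i[Z]$. Setting $p := \bbE_1[Z]$ and $q := \bbE_2[Z]$, this means $B$ has law $\mathrm{Bern}(p)$ under the extension of $\bbP_1$ and law $\mathrm{Bern}(q)$ under the extension of $\bbP_2$.

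Next, I would invoke the data processing inequality: for any measurable map $T$ between measurable spaces, $\KL(\bbP_1, \bbP_2) \ge \KL(T_\# \bbP_1, T_\# \bbP_2)$, where $T_\#$ denotes pushforward. Two preliminary observations make the argument go through cleanly: tensoring with identical factors leaves KL divergence unchanged, so $\KL(\bbP_1 \otimes \Unif, \bbP_2 \otimes \Unif) = \KL(\bbP_1, \bbP_2)$; and the trivial case $\bbP_1 \not\ll \bbP_2$ yields $\KL(\bbP_1, \bbP_2) = +\infty$ and is vacuous. Applying data processing to the map $(\omega, u) \mapsto B(\omega, u)$ then delivers
\begin{align*}
\KL(\bbP_1, \bbP_2) \ge \KL(\mathrm{Bern}(p), \mathrm{Bern}(q)) = \kl(p, q),
\end{align*}
which is exactly the claim.

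The main obstacle, if one wants a self-contained proof, is establishing the data processing inequality itself rather than citing it as a black box. The cleanest route is via the log-sum inequality: for nonnegative reals $(a_j)_j, (b_j)_j$, one has $\sum_j a_j \ln(a_j / b_j) \ge \bigl(\sum_j a_j\bigr) \ln\bigl((\sum_j a_j)/(\sum_j b_j)\bigr)$. Applying this to a measurable partition refining $\{B=0\}$ and $\{B=1\}$ collapses the full KL integral into its two-atom coarsening, which is exactly $\kl(p, q)$. Aside from this, everything else is bookkeeping: verifying absolute continuity of the extended measures, chasing the definitions of $p$ and $q$ through the pushforward, and using the independence of the auxiliary $U$ to justify the product structure.
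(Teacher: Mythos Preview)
The paper does not give its own proof of this lemma: it is stated with an external citation (``Lemma~1 in \citet{paper:regret_bandit}'') and then used as a black box in the lower-bound arguments. So there is nothing in the paper to compare against line by line.

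Your proposal is correct and is one of the standard proofs of this inequality. The auxiliary-randomization trick (tensoring with $\Unif([0,1])$ and setting $B = \II[U \le Z]$) is a clean way to manufacture a genuine Bernoulli channel out of a $[0,1]$-valued statistic, after which data processing gives exactly $\kl(\bbE_1[Z], \bbE_2[Z])$. The two preliminary observations you flag---that tensoring with an identical factor preserves KL, and that the $\bbP_1 \not\ll \bbP_2$ case is vacuous---are the right hygiene checks. For a fully self-contained write-up on general (non-discrete) spaces, the log-sum argument you sketch should be phrased via Jensen's inequality applied to the convex function $t \mapsto t \ln t$ and the conditional expectation onto $\sigma(B)$, rather than a literal discrete partition; but this is a presentational point, not a gap in the argument.
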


\begin{proof} [Proof of \Cref{thm:lower_bound_mvp}]
We retain most of the proof of Theorem\,9 and Appendix\,C in \citet{paper:episodic_lower_bound}, while incorporating the hard instance design in Section\,5.5.1 in \citet{paper:horizon_free_lmdp}.
Namely, we change the $A$-ary tree in \citet{paper:episodic_lower_bound} with the binary tree in \citet{paper:horizon_free_lmdp}.
This change does not affect the proof, while circumventing the requirement of $S = 3 + (A^d - 1) / (A - 1)$ where $d$ is the tree height.
We can find $S' = 2 + 2^{\floor{\log_2 (S - 2)}} = \Omega(S)$ and replace $S$ with $S'$.
We still use $d = \floor{\log_2 (S - 2)}$ to denote the tree height.

We change the transition at $s_g$: $P (s_b | s_g, a) = 1$ for any $a \in \cA$.
This means for any trajectory, the agent can only get reward once, then loops at $s_b$.

Another important change in design is to scale the reward at $s_g$ by $t \le 1$, with $t$ depending on $\cV$ the variance we desire.
So $r (s_g, a) = t$.

To be precise, let $\bbE_0$ and $\bbE_{(\ell^\star, a^\star)}$ be the expectation taken with respect to the reference MDP (with no special leaf-action pair) and $\cM_{(\ell^\star, a^\star)}$.
We have that
\begin{align*}
    \cR_K (\boldpi, \cM_{(\ell^\star, a^\star)})
    \ge t K \varepsilon \left( 1 - \frac{1}{K} \bbE_{(\ell^\star, a^\star)} \left[ N_{(\ell^\star, a^\star)}^K \right]\right),
\end{align*}
where $N_{(\ell^\star, a^\star)}^K = \sum_{k = 1}^K \II [(S_{d + 1}^k, A_{d + 1}^k) = (s, a)]$.
Hence,
\begin{align}
    \max_{(\ell^\star, a^\star)} \cR_K (\boldpi, \cM_{(\ell^\star, a^\star)})
    \ge t K \varepsilon \left( 1 - \frac{1}{L A K} \sum_{(\ell^\star, a^\star)} \bbE_{(\ell^\star, a^\star)} \left[ N_{(\ell^\star, a^\star)}^K \right]\right). \label{eq:max_regret_mvp}
\end{align}
Since $N_{(\ell^\star, a^\star)}^K / K \in [0, 1]$, by \Cref{lem:KL_kl},
\begin{align*}
    \kl \left( \frac{1}{K} \bbE_0 \left[ N_{(\ell^\star, a^\star)}^K \right], \frac{1}{K} \bbE_{(\ell^\star, a^\star)} \left[ N_{(\ell^\star, a^\star)}^K \right] \right) 
    \le \KL (\bbP_0^{I_H^K}, \bbP_{(\ell^\star, a^\star)}^{I_H^K}).
\end{align*}
By \Cref{lem:KL_pushforward},
\begin{align*}
    \KL \left( \bbP_0^{I_H^K}, \bbP_{(\ell^\star, a^\star)}^{I_H^K} \right)
    = \bbE_0 \left[ N_{(\ell^\star, a^\star)}^K \right] \kl \left(\frac{1}{2}, \frac{1}{2} + \varepsilon \right).
\end{align*}
Assume that $\varepsilon \le 1 / 4$, then $\kl (1 / 2, 1 / 2 + \varepsilon ) \le 4 \varepsilon^2$.
By Pinsker's inequality, $(p - q)^2 \le \kl (p, q) / 2$, it implies
\begin{align*}
    \frac{1}{K} \bbE_{(\ell^\star, a^\star)} \left[ N_{(\ell^\star, a^\star)}^K \right]
    \le \frac{1}{K} \bbE_0 \left[ N_{(\ell^\star, a^\star)}^K \right] + \sqrt{2} \varepsilon \sqrt{\bbE_0 \left[ N_{(\ell^\star, a^\star)}^K \right]}.
\end{align*}
Since $\sum_{(h^\star, a^\star)} N_{(\ell^\star, a^\star)}^K = K$, by Cauchy-Schwarz inequality we have
\begin{align*}
    \frac{1}{K} \sum_{(\ell^\star, a^\star)} \bbE_{(\ell^\star, a^\star)} \left[ N_{(\ell^\star, a^\star)}^K \right]
    \le 1 + \sqrt{2} \varepsilon \sqrt{L A K}.
\end{align*}
Plugging this back to \Cref{eq:max_regret_mvp}, and taking $\varepsilon = (1 - 1 / L A) \sqrt{L A / 8 K}$, we have
\begin{align*}
    \max_{(\ell^\star, a^\star)} \cR_K (\boldpi, \cM_{(\ell^\star, a^\star)})
    \ge \Omega (t \sqrt{S A K}).
\end{align*}
To ensure that $\varepsilon \le 1 / 4$, we need $K \ge S A$.

Now we calculate the variances.
We know that $V_{d + 2}^\star (s_b) = 0$ and $V_{d + 2}^\star (s_g) = t$.
For any trajectory $\tau$, we look at step $h = d + 1$.
If $(s_h, a_h) \ne (\ell^\star, a^\star)$, then
\begin{align*}
    \Var_\tau^\Sigma \ge \bbV ((1/2, 1/2), (0, t)) = \Omega(t^2).
\end{align*}
If $(s_h, a_h) = (\ell^\star, a^\star)$, then
\begin{align*}
    \Var_\tau^\Sigma \ge \bbV ((1/2 - \varepsilon, 1/2 + \varepsilon), (0, t)) = \left( \frac{1}{4} - \varepsilon^2 \right) \Omega(t^2).
\end{align*}
Notice that $\varepsilon \le 1 / 4$, so $\Var_\tau^\Sigma \ge \Omega (t^2)$ for any $\tau$, and
\begin{align*}
    \maxvar
    \ge \Var^{\pi^\star}
    \ge \min_\tau \Var_\tau^\Sigma
    \ge \Omega (t^2).
\end{align*}
Since the total reward in each episode is upper-bounded by $t$, we know that $\Var_\tau^\Sigma, \maxvar \le O (t^2)$.
Thus,
\begin{align*}
    \Var_\tau^\Sigma, \maxvar = \Theta (t^2).
\end{align*}
For the desired result, we set $t = \Theta (\sqrt{\cV})$.
\end{proof}

\begin{proof} [Proof of \Cref{thm:lower_bound}]
We retain most of the proof of Theorem\,9 in \citet{paper:episodic_lower_bound}, while incorporating the hard instance design in Section\,5.5.1 in \citet{paper:horizon_free_lmdp}.
Namely, we change the $A$-ary tree in \citet{paper:episodic_lower_bound} with the binary tree in \citet{paper:horizon_free_lmdp}.
This change does not affect the proof, while circumventing the requirement of $S = 3 + (A^d - 1) / (A - 1)$ where $d$ is the tree height.
We can find $S' = 2 + 2^{\floor{\log_2 (S - 2)}} = \Omega(S)$ and replace $S$ with $S'$.
We still use $d = \floor{\log_2 (S - 2)}$ to denote the tree height.

Another important change in design is to scale the reward at $s_g$ by $t \le 1$, with $t$ depending on $\cV$ the variance we desire.
So $r_h (s_g, a) = t \II [h \ge \ov{H} + d + 1]$.
This modification does not affect the choice of $\varepsilon$ and $\ov{H}$ in \citet{paper:episodic_lower_bound}, only scales the optimal value and regret linearly, so we have that
\begin{align*}
    \max_{(h^\star, \ell^\star, a^\star)} \cR_K (\boldpi, \cM_{(h^\star, \ell^\star, a^\star)})
    \ge \Omega (t \sqrt{H^3 S A K}).
\end{align*}

Now we calculate the variances.
We know that $V_{\ov{H} + d + 1}^\star (s_b) = 0$ and $V_{\ov{H} + d + 1}^\star (s_g) = t (H - \ov{H} - d) = \Omega (t H)$.
For any trajectory $\tau$, we look at step $h = \ov{H} + d$.
If $(s_h, a_h) \ne (\ell^\star, a^\star)$, then
\begin{align*}
    \Var_\tau^\Sigma \ge \bbV ((1/2, 1/2), (0, \Omega(t H))) = \Omega(t^2 H^2).
\end{align*}
If $(s_h, a_h) = (\ell^\star, a^\star)$, then
\begin{align*}
    \Var_\tau^\Sigma \ge \bbV ((1/2 - \varepsilon, 1/2 + \varepsilon), (0, \Omega(t H))) = \left( \frac{1}{4} - \varepsilon^2 \right) \Omega(t^2 H^2).
\end{align*}
Notice that $\varepsilon \le 1 / 4$ in \citet{paper:episodic_lower_bound}, so $\Var_\tau^\Sigma \ge \Omega (t^2 H^2)$ for any $\tau$, and
\begin{align*}
    \maxvar
    \ge \Var^{\pi^\star}
    \ge \min_\tau \Var_\tau^\Sigma
    \ge \Omega (t^2 H^2).
\end{align*}
Since the total reward in each episode is upper-bounded by $O(t H)$, we know that $\Var_\tau^\Sigma, \maxvar \le O (t^2 H^2)$.
Thus,
\begin{align*}
    \Var_\tau^\Sigma, \maxvar = \Theta (t^2 H^2).
\end{align*}
For the desired result, we set $t = \Theta (\sqrt{\cV} / H)$.
\end{proof}


\end{document}